\documentclass[twoside]{article}
\usepackage{tikz}
\usetikzlibrary{arrows.meta} 
\usepackage[ruled,vlined]{algorithm2e}

\usepackage{hyperref}       
\usepackage{url}            
\usepackage{booktabs}       
\usepackage{amsfonts}       
\usepackage{nicefrac}       
\usepackage{microtype}      
\usepackage{xcolor}         

\usepackage{amsmath, bbm, mathtools, amssymb, amsfonts, mathrsfs,comment, xcolor, amsthm}
\usepackage{thm-restate}

\usepackage[accepted]{aistats2026}
\newcommand{\Expt}[1]{\mathbb E\left( #1 \right)}

\newcommand{\Var}[1]{\mathrm{Var} \left( #1 \right)}
\newcommand{\Cov}[2]{\mathrm{Cov} \left( #1,#2 \right)}

\newcommand{\blue}[1]{\textcolor{blue}{#1}}
\newcommand{\red}[1]{\textcolor{red}{#1}}

\newcommand{\magenta}[1]{\textcolor{magenta}{#1}}

\newcommand{\colorEta}[1]{\textcolor{blue}{#1}}
\newcommand{\colorSuffStat}[1]{\textcolor{red}{#1}}
\newcommand{\colorPsi}[1]{\textcolor{magenta}{#1}}

\newcommand{\MatrixForm}[1]{\mathbf{#1}}

\newcommand{\trace}[1]{\mathrm{tr}\left(#1\right)}

\newcommand{\xset}{\mathcal{X}}
\newcommand{\LCY}{Y}
\newcommand{\thetaknown}{{\nu}_{\mathrm{K}}}
\newcommand{\thetaunknown}{{\nu}_{\mathrm{E}}}

\newcommand{\restrictedETA}{\vec{\eta}(\thetaunknown)}

\newcommand{\R}{\mathbb{R}}

\usepackage{natbib}
\usepackage{graphicx}
\usepackage{subcaption} 
\usepackage{caption}
\usepackage{amsmath}
\newtheorem{theorem}{Theorem}
\newtheorem{corollary}{Corollary}
\newtheorem{definition}{Definition}
\newtheorem{lemma}{Lemma}
\newtheorem{proposition}{Proposition}

\begin{document}

%
\runningtitle{It's all in the (Exponential) Family}

%
\runningauthor{Kang, Wang, Zhang, Pratap, Verma, and Wong}

\twocolumn[
\aistatstitle{It’s All In The (Exponential) Family: An Equivalence Between Maximum Likelihood Estimation and Control Variates For Sketching Algorithms}

\aistatsauthor{
  Keegan Kang\\ Department of  Mathematics \\ and Statistics,\\ Bucknell University, USA \And
  Kerong Wang\\Department of  Mathematics \\ and Statistics,\\ Bucknell University, USA \And
  Ding Zhang\\School of Data Science,\\University of Virginia, USA \AND
  Rameshwar Pratap\\Department of CSE, \\ IIT Hyderabad, India \And
  Bhisham Dev Verma\\Department of Computer Science, \\Wake Forest University, USA \And
  Benedict H. W. Wong\\Institute for Infocomm Research,\\ (I$^2$R), A*STAR, Singapore
}
\vspace*{1cm}


]

\begin{abstract}
Maximum likelihood estimators (MLE) and control variate estimators (CVE) have been used in conjunction with known information across sketching algorithms and applications in machine learning. We prove that under certain conditions in an exponential family, an optimal CVE will achieve the same asymptotic variance as the MLE, giving a fixed point algorithm for the MLE. Experiments show the fixed point algorithm is faster and numerically stable compared to other root finding algorithms for the MLE for the bivariate Normal distribution, and we expect this to hold across distributions satisfying these conditions. We show how this algorithm leads to reproducibility for algorithms using MLE / CVE, and demonstrate how the algorithm leads to finding the MLE when the CV weights are known.
\end{abstract}

\section{INTRODUCTION}\label{Sec: Intro}

Many applications of estimation via sketching algorithms use information about known parameters, such as marginal information, in conjunction with maximum likelihood estimation \citep{shao2003mathematical} or control variates \citep{lavenberg1981perspective}. Examples range from estimating inner products \citep{li2006improving} or angles \citep{kang2018improving} via random projections, to estimating spectral densities of large matrices \citep{adams2018estimating}, to generating sketches of large-scale data \citep{pratap2021variance,pratap2022improving}, and  to sampling from sparse data \citep{li2008one}.

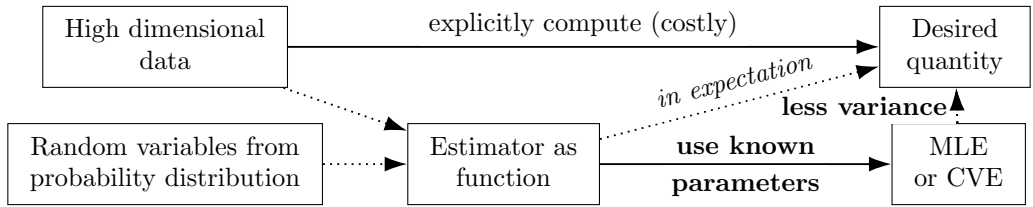
\begin{figure*}[h]
\begin{center}
\begin{tikzpicture}
    \node[draw, rectangle, minimum width=1cm, minimum height=1cm] (databox) at (0,0.06) {$\begin{array}{c}
    \text{High dimensional} \\
    \text{data}
    \end{array}$};

    \node[draw, rectangle, minimum width=2cm, minimum height=1cm] (qtybox) at (10.5,0.06) {$\begin{array}{c}
    \text{Desired} \\
    \text{quantity}
    \end{array}$}; 
    
    \node[draw, rectangle, minimum width=1cm, minimum height=1cm] (rvbox) at (0,-1.5) {$\begin{array}{c}
    \text{Random variables from} \\
    \text{probability distribution}
    \end{array}$};

    \node[draw, rectangle, minimum width=1cm, minimum height=1cm] (estbox) at (4.5,-1.5) {$\begin{array}{c}
    \text{Estimator as} \\
    \text{function}
    \end{array}$};

    \draw[->, dotted, line width=0.25mm, -{Latex[length=3mm, width=2mm]}] (databox) --  (estbox);

    \draw[->, dotted, line width=0.25mm, -{Latex[length=3mm, width=2mm]}] (rvbox) --  (estbox);
    
    \node[draw, rectangle, minimum width=1cm, minimum height=1cm] (mlebox) at (10.55,-1.5) {$\begin{array}{c}
    \text{MLE} \\
    \text{or CVE}
    \end{array}$};

    \draw[->, line width=0.25mm, -{Latex[length=3mm, width=2mm]}] (estbox) --  (mlebox) node[midway, above, fill=white] {{\bf use known}} node[midway, below, fill=white] {{\bf parameters}};

    \draw[->, dotted, line width=0.25mm, -{Latex[length=3mm, width=2mm]}] (mlebox) --  (qtybox) node[midway, left, fill=white] {{\bf less variance}};

    \draw[->, dotted, line width=0.25mm, -{Latex[length=3mm, width=2mm]}] (mlebox) --  (qtybox) ;
        
    \draw[->, dotted, line width=0.25mm, -{Latex[length=3mm, width=2mm]}] (estbox) --  (qtybox) node[midway, sloped, above, fill=white] {{\it in expectation}};

    \draw[->, line width=0.25mm, -{Latex[length=3mm, width=2mm]}] (databox) --  (qtybox) node[midway, above, fill=white] {explicitly compute (costly)};

\end{tikzpicture}
\caption{Idea behind estimation via sketching algorithms. {\bf The goal is to use known parameters with MLE {\it or} CVE to get a reduced variance estimate}. We combine {\bf MLE {\it and} CVE} in Algorithm~\ref{FP_algo_format}  to get an estimator converging to the MLE faster with less numerical error than MLE/CVE alone, under mild conditions.\label{pict_fig1}}
\end{center}
\end{figure*}

Figure~\ref{pict_fig1} shows the common theme in the above applications. Instead of explicitly computing the desired quantity (e.g. inner products, matrix trace), a random variable from a chosen probability distribution is generated, and an estimator (a function of the data and the random variable) of the desired quantity is found. These applications further make use of information about the parameters in conjunction with maximum likelihood estimators (MLE) or a control variate estimator (CVE). This reduces the variance of these estimates, resulting in less error on average. 

Information about parameters can come from the structure of data, e.g. observations in data might be normalized to a length of 1 or centered with zero mean, and this information can be used to get better estimates \citep{li2006improving}. Marginal information could also be cheaply generated on the fly, where estimates with lower variance outweigh the cost of computing marginal information \citep{kang2021correlations}.

When both MLEs and CVEs are used for the same purpose, e.g. feature hashing \citep{verma2022variance} or random projections \citep{kang2021correlations}, the asymptotic theoretical variance reductions are identical. This is unsurprising, as MLEs are  strongly consistent and asymptotically efficient under mild conditions and achieves the Cram{\'e}r Rao lower bound \citep{lehmann1999elements}. When the number of observations (i.e. size of sketch) is small, CVEs can outperform MLEs. {\it Yet there are two core problems behind using CVEs.}

{\bf Problem One:} Replicating the CVE performance on same datasets give mixed results, leading to reproducibility issues. We give an explanation why this is so (and resolve this) in Sections~\ref{sect_discussion} and \ref{sec_expt}.

{\bf Problem Two:} Variance calculations with CVEs involve theoretical parameters for CV weights. But if CV weights rely on estimated data, then the theoretical variance is incorrect, hence it is tricky to analyze CVE performance. We propose Algorithm~\ref{FP_algo_format} with  discussion in Appendix~\ref{ALGODISCUSSION} and~\ref{CONVERGENCEDISCUSSION}.

These problems disappear when the number of observations is large, but the purpose of using sketching algorithms is to reduce error (variance) with as few observations as possible. {\bf Our paper aims to resolve these two issues and understand the performance of CVEs in sketching algorithms. }

We consider distributions from exponential families, previously known as Koopman-Darmois-Pitman families \citep{efron2022exponential}, and give examples across three common exponential families. Feature hashing \citep{weinberger2009feature,verma2022variance} and estimating inner products \citep{indyk1998approximate,li2006improving}) involve the Normal distribution. Frequency estimators \citep{cormode2005improved,pratap2021variance} and angle estimation \citep{charikar2002similarity,kang2018improving} involve the multinomial distribution. Estimating a matrix trace \citep{hutchinson1989stochastic,adams2018estimating} involves the $\chi^2$ distribution. We use geometric properties of exponential families to prove results in this paper. As exponential families may be unfamiliar to some readers, we describe an example of the zero-mean bivariate Normal  in Appendix~\ref{secA1} to give intuition.

\subsection{REVIEW OF EXPONENTIAL FAMILIES}
\label{subsection_efamily_review}
Let $\mathcal X := \{\vec{x}_i\}_{i=1}^n$ comprising $n$ observations $\vec{x}_i \in \mathbb R^D$. Let $p(\xset|\vec{\nu})$ be a probability density function and $\vec{\nu} \in \mathbb R^p$ containing the parameters of the distribution. We write in exponential family form 
\begin{align}
p\left(\xset~|~\vec{\nu}\right)  & \equiv \frac{\exp\left\{n\left(\sum_{j=1}^p \eta_j(\vec{\nu})y_j(\xset) \right)\right\}}{\exp\left\{n\psi(\vec{\eta}(\vec{\nu})) \right\}}  g\left(\xset\right)\notag \\
& = \frac{\exp\left\{n \vec{\eta}^T\vec{y}\right\}}{\exp\left\{n\psi(\vec{\eta}) \right\}} g\left(\xset\right) .\label{basic_exp_fam_eqn}
\end{align}
The vector $\vec{\eta} \in \mathbb R^p$ gives the canonical parameters of the distribution, and is (usually) not $\vec{\nu}$. There is a 1-1 map between $\vec{\eta}$ and $\vec{\nu}$, where each $\eta_j$ can be expressed as a function of $\nu_1,\hdots, \nu_p$, i.e. $\eta_j = f_j(\nu_1,\hdots,\nu_p)$ and equivalently  $\nu_j = g_j(\eta_1,\hdots,\eta_p)$. We use $\eta_j(\vec{\nu})$ in Equation~\eqref{basic_exp_fam_eqn} to denote $\eta_j$ as a function of $\vec{\nu}$, but drop the parenthesis containing $\vec{\nu}$ for ease of notation. The components of $\vec{y}$ are sufficient statistics in terms of the data $\xset$ corresponding to components of $\vec{\eta}$, where the $y_i$s are linearly independent, and we also drop the parentheses containing $\{\vec{x}_i\}_{i=1}^n$ in $y_j$ for ease of notation. $\psi(\vec{\eta})$ is the scaling factor which allows the probability density to integrate to 1.  $g\left(\xset\right)$ is independent of $\vec{\eta}$.  Each $(\eta_i,y_i)$ pair is unique up to an invertible linear transformation of the sufficient statistics. In particular, let $\MatrixForm{A}_{p \times p}$ be invertible, and define $\widetilde{y} = \MatrixForm{A}^{-1}\vec{y}$, $\widetilde{\eta} = \MatrixForm{A}^{T}\vec{\eta}$, and $\widetilde{\psi}(\widetilde{\eta}) = \psi\left(\MatrixForm{A}^{-T}\widetilde{\eta}\right)$.
Then
\begin{align}
p(\xset~|~\vec{\nu}) &= \frac{ \exp\left\{ n\left(\MatrixForm{A}^{T}\vec{\eta}\right)^T \left(\MatrixForm{A}^{-1}\vec{y}\right) \right\}}{\exp\left\{n\psi\left(\vec{\eta}\right) \right\}} g\left(\xset\right) \notag \\
&= \frac{ \exp\left\{ n\widetilde{\eta}^T\widetilde{y} \right\}}{ \exp\left\{n\widetilde{\psi}\left(\widetilde{\eta}\right)
\right\}}g\left(\xset\right).
\label{exp_family_transformed}
\end{align}

\subsection{OUR CONTRIBUTIONS AND PAPER ORGANIZATION}

Suppose data comes from a probability distribution belonging to an exponential family, and we want an estimate of unknown parameters from the distribution, given that some parameters are known. Under this framework, the {\bf main goal} of this paper is to {\it demonstrate an equivalence between MLE and CVE} ({\bf Contribution One}), leading to two important results. a) {\it Reproducibility between algorithms using MLE / CVE} ({\bf Contribution Two}), and b) {\it finding MLEs when the CV weights are known} ({\bf Contribution Three}). 

{\bf Contribution One:} We formally show conditions for equivalence of asymptotic variance estimates via CVE and MLE and state an informal version of Theorem~\ref{equal_variances_theorem_thingy} which appears in Section~\ref{section_equiv}.
\begin{theorem} \label{informal_equal_variances_theorem_thingy}
(Informal Theorem~\ref{equal_variances_theorem_thingy})\\
Suppose our observations come from an exponential family, with parameters divided into a set $\mathcal A$ to be estimated, and $\mathcal B$ which is known, and each $\mu_i \equiv  \frac{\partial \psi(\vec{\eta})}{\partial \eta_i} = \Expt{y_i}$ is equal to $\nu_i$. Then the asymptotic variance reduction of the MLE in estimating parameters in $\mathcal A$ is equal to the  variance reduction given by a CVE where components in $\mathcal B$ are used as control variates.
\end{theorem}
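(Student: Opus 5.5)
The plan is to compute both asymptotic covariances explicitly as blocks of the Fisher information and show they coincide. Write $\vec{\eta} = (\vec{\eta}_{\mathcal A}, \vec{\eta}_{\mathcal B})$ and $\vec{y} = (\vec{y}_{\mathcal A}, \vec{y}_{\mathcal B})$. Let $\MatrixForm{\Sigma} = \nabla^2\psi(\vec{\eta})$ be the Hessian of $\psi$ with respect to $\vec{\eta}$, which is the covariance matrix of a single-observation sufficient statistic, so $\Var{\vec{y}} = \MatrixForm{\Sigma}/n$. Partition $\MatrixForm{\Sigma}$ into blocks $\MatrixForm{\Sigma}_{\mathcal{AA}}, \MatrixForm{\Sigma}_{\mathcal{AB}}, \MatrixForm{\Sigma}_{\mathcal{BB}}$. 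The hypothesis $\mu_i = \nu_i$ says that the parameters $\vec{\nu}$ \emph{are} the mean-value parameters. The unknown parameters $\vec{\nu}_{\mathcal A}$ therefore have the natural unbiased estimator $\vec{y}_{\mathcal A}$, and the known parameters satisfy $\Expt{\vec{y}_{\mathcal B}} = \vec{\nu}_{\mathcal B}$ exactly. This makes $\vec{y}_{\mathcal B} - \vec{\nu}_{\mathcal B}$ a legitimate zero-mean control variate.

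\textbf{CVE side.} Consider estimators of the form $\vec{y}_{\mathcal A} - \MatrixForm{C}(\vec{y}_{\mathcal B} - \vec{\nu}_{\mathcal B})$ and minimize the covariance in the Loewner order over $\MatrixForm{C}$. Completing the square gives the standard optimal weight $\MatrixForm{C}^* = \MatrixForm{\Sigma}_{\mathcal{AB}}\MatrixForm{\Sigma}_{\mathcal{BB}}^{-1}$. The resulting covariance is $\frac1n(\MatrixForm{\Sigma}_{\mathcal{AA}} - \MatrixForm{\Sigma}_{\mathcal{AB}}\MatrixForm{\Sigma}_{\mathcal{BB}}^{-1}\MatrixForm{\Sigma}_{\mathcal{BA}})$, that is, $\frac1n$ times the Schur complement $\MatrixForm{\Sigma}/\MatrixForm{\Sigma}_{\mathcal{BB}}$. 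When the weights are estimated consistently, the asymptotic variance is unchanged by Slutsky's lemma.

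\textbf{MLE side.} Reparametrize by the mean parameters $\vec{\mu} = \nabla\psi(\vec{\eta}) = \vec{\nu}$. Since $\partial\vec{\mu}/\partial\vec{\eta} = \MatrixForm{\Sigma}$, the chain rule shows that the Fisher information per observation in the $\vec{\mu}$-parametrization is $\MatrixForm{\Sigma}^{-1}\MatrixForm{\Sigma}\MatrixForm{\Sigma}^{-1} = \MatrixForm{\Sigma}^{-1}$. With $\vec{\nu}_{\mathcal B}$ known, the MLE maximizes only over $\vec{\nu}_{\mathcal A}$. Under the usual regularity conditions for exponential families (smoothness of $\psi$, interior true parameter), its asymptotic covariance is $\frac1n\bigl[(\MatrixForm{\Sigma}^{-1})_{\mathcal{AA}}\bigr]^{-1}$. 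By the block-inverse formula, $(\MatrixForm{\Sigma}^{-1})_{\mathcal{AA}} = (\MatrixForm{\Sigma}_{\mathcal{AA}} - \MatrixForm{\Sigma}_{\mathcal{AB}}\MatrixForm{\Sigma}_{\mathcal{BB}}^{-1}\MatrixForm{\Sigma}_{\mathcal{BA}})^{-1}$. Its inverse is therefore exactly the Schur complement found on the CVE side.

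This establishes equality of the asymptotic covariances. Subtracting each from the common baseline $\MatrixForm{\Sigma}_{\mathcal{AA}}/n$ shows that the two variance reductions coincide, both equal to $\frac1n\MatrixForm{\Sigma}_{\mathcal{AB}}\MatrixForm{\Sigma}_{\mathcal{BB}}^{-1}\MatrixForm{\Sigma}_{\mathcal{BA}}$.

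\textbf{Main obstacle.} The delicate step is the MLE side. One must justify that fixing $\vec{\nu}_{\mathcal B}$ (a constraint in mean coordinates, which is generally curved in $\vec{\eta}$ coordinates) yields the block-inverse information formula. This needs the map $\vec{\nu}\mapsto\vec{\eta}$ to be a diffeomorphism, which holds since $\MatrixForm{\Sigma}$ is positive definite when the $y_i$ are linearly independent. It also needs the restricted MLE to be consistent, which I would obtain from strict concavity of the log-likelihood in $\vec{\eta}$ together with standard M-estimator arguments. I would also remark that the equivalence relies on $\mu_i = \nu_i$. If the known parameters are not mean parameters, then $\vec{y}_{\mathcal B} - \vec{\nu}_{\mathcal B}$ is not mean zero and the identification of the two sides fails. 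The reparametrization in \eqref{exp_family_transformed} can be used to reduce to this case whenever the known parameters are linear in $\vec{\mu}$.
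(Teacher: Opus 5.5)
Your proposal is correct and follows essentially the same route as the paper. The paper computes the restricted Fisher information $n^2(\partial\vec\eta/\partial\thetaunknown)^T\MatrixForm{V}_n(\partial\vec\eta/\partial\thetaunknown)$ and uses $\MatrixForm{V}\,\partial\vec\eta/\partial\thetaunknown=(\MatrixForm{I}_t;\mathbf{0})$ to identify it with $n$ times the upper-left block of $\MatrixForm{V}^{-1}$, which is your mean-parametrization information $\Sigma^{-1}$ restricted to $\mathcal{AA}$; it then obtains the CVE variance as the Schur complement $\MatrixForm{A}-\MatrixForm{B}\MatrixForm{D}^{-1}\MatrixForm{B}^T$ and matches the two via the block-inverse formula, exactly as you do (your remarks on consistency of the restricted MLE and on Slutsky for estimated weights go beyond the paper, which simply invokes standard MLE asymptotics).
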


We give a high level overview on how to prove Theorem~\ref{equal_variances_theorem_thingy} in Section~\ref{section_equiv}, and put the main proof in Appendix~\ref{secA3}.

While Theorem~\ref{equal_variances_theorem_thingy} sounds like a known theorem as no other estimator can do better asymptotically than the MLE, there is no formal proof of CVE being equivalent to MLE to the best of our knowledge.  
For example: \cite{glynn2002some} looks at an equivalence of CVE to non-parametric MLE, while we look at equivalence under exponential families. Reference books on exponential families \citep{mclachlan2007algorithm,efron2022exponential} do not give this equivalence. The closest references to hint at an equivalence come from information geometry \citep{amari1987differential,amari2000methods}, where to show asymptotic equivalence between the CVE and the MLE, then the optimal CV weights {\it must minimize the variance within the tangent space of the score functions and making the ancillary submanifold orthogonal within the local linearization around the model manifold}. 
We remark that these references only mention the condition for equivalence, but do not mention CVE at all.

We therefore prove Theorem~\ref{equal_variances_theorem_thingy} using results from exponential families rather than information geometry, in order to give intuition behind our work and make it accessible to more practitioners.

{\bf Contribution Two:} Given Theorem~\ref{equal_variances_theorem_thingy}, Theorem~\ref{theorem_cv_fixed_point_equations} shows that the CVE can be written as fixed point equations that are equivalent to the likelihood stationary equations. Under uniqueness and local convergence conditions in Corollary~\ref{corollary_fixed_points_stationary_points} and Theorem~\ref{theorem_local_convergence_cv_fixed_point}, Algorithm~\ref{FP_algo_format} converges to the MLE from sufficiently close initial values. We find that Algorithm~\ref{FP_algo_format} is empirically faster and  numerically stable compared to several root-finding methods for the MLE with feature hashing \citep{verma2022variance} and random projection \citep{li2006improving,kang2021correlations} and can also resolve the reproducibility issue of CVE.

{\bf Contribution Three:} We describe two heuristics that leads to efficiently finding the MLE via CVE weights (via Algorithm~\ref{FP_algo_format}). We describe these heuristics in Section~\ref{sec_new_HUTCH} and Appendix~\ref{secA5}. More specifically, we can get expressions for the MLE that are tedious to compute such as  Hutchinson’s Trace Estimator.

Section \ref{Sec: main results} of our paper reviews MLEs and CVEs. Section \ref{section_equiv} describes our main results, and Section \ref{sect_discussion} discusses the implications of our results. Section \ref{sec_expt} gives an example of how our work can be used to ensure reproducibility in evaluating algorithms using MLE/CVE. Section \ref{sec_new_HUTCH} demonstrates an example of how our results can be used to find a better (and more efficient) estimate (MLE) from CVE weights. Finally, we conclude in work in Section~\ref{conclusionsection}.

\section{BACKGROUND INFORMATION AND SETTING UP THE PROBLEM}\label{Sec: main results}

We give two theorems related to exponential families which are used in the main proof of Theorem~\ref{equal_variances_theorem_thingy}.

\begin{restatable}{theorem}{meanvarthm}
\label{mean_and_var_exponential_family_thm}
Given $n$ observations, the mean vector $\vec{\mu} \equiv \Expt{\vec{y}}$ and the variance-covariance matrix $\MatrixForm{V}_n \equiv \mathrm{Cov}\left(\vec{y}\right)$ for $\vec{y}$ in the exponential family is given by $\vec{\mu} = \left( \frac{\partial \psi(\vec{\eta})}{\partial \eta_1}, \hdots, \frac{\partial \psi(\vec{\eta})}{\partial \eta_p}\right)$ and $n\MatrixForm{V}_n =  \MatrixForm{V}$ where $\MatrixForm{V} = \left( \begin{array}{c c c}
\frac{\partial^2 \psi(\vec{\eta})}{\partial \eta_1^2} & \hdots & \frac{\partial^2 \psi(\vec{\eta})}{\partial \eta_1 \partial \eta_p} \\
\vdots & \ddots & \vdots \\
\frac{\partial^2 \psi(\vec{\eta})}{\partial \eta_p \eta_1} & \hdots & \frac{\partial^2 \psi(\vec{\eta})}{\partial \eta_p^2}
\end{array}\right) = \left(\begin{array}{c c c}
\frac{\partial \mu_1}{\partial \eta_1} & \hdots & \frac{\partial \mu_1}{\partial \eta_p} \\
\vdots & \ddots & \vdots \\
\frac{\partial \mu_p}{\partial \eta_1} & \hdots & \frac{\partial \mu_p}{\partial \eta_p} \\
\end{array}\right)$.
\end{restatable}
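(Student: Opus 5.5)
The plan is to use the normalization identity and differentiate under the integral sign. Since $p(\xset\,|\,\vec{\nu})$ in Equation~\eqref{basic_exp_fam_eqn} integrates to one, for every $\vec{\eta}$ in the (open) natural parameter space we have
\begin{align*}
\exp\left\{n\psi(\vec{\eta})\right\} = \int \exp\left\{n\vec{\eta}^T\vec{y}(\xset)\right\} g(\xset)\, d\xset .
\end{align*}
Equivalently, $n\psi$ is the cumulant generating function of the random vector $n\vec{y}$ evaluated at the base measure $g$, shifted by $\vec{\eta}$. All moment statements will follow from differentiating this identity once and twice. The standing assumption I will invoke is that $\vec{\eta}$ lies in the interior of the natural parameter space. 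Then the integral is finite in a neighbourhood of $\vec{\eta}$, and dominated convergence justifies exchanging derivatives and integrals. This regularity point is the only real obstacle; I would cite the standard fact that the Laplace transform of a measure is analytic on the interior of its domain (as in \citet{efron2022exponential} or \citet{lehmann1999elements}).

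\textbf{Mean.} Differentiate both sides with respect to $\eta_i$. The left side gives $n\,\frac{\partial \psi}{\partial \eta_i}\exp\{n\psi(\vec{\eta})\}$. The right side gives $\int n\,y_i \exp\{n\vec{\eta}^T\vec{y}\}g\,d\xset$. Dividing by $n\exp\{n\psi(\vec{\eta})\}$ turns the right side into an expectation under $p(\xset\,|\,\vec{\nu})$, which yields
\begin{align*}
\mu_i = \mathbb{E}\left(y_i\right) = \frac{\partial \psi(\vec{\eta})}{\partial \eta_i}.
\end{align*}

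\textbf{Covariance.} Write the identity $\frac{\partial\psi}{\partial\eta_i} = \int y_i \exp\{n\vec{\eta}^T\vec{y} - n\psi(\vec{\eta})\}g\,d\xset$ and differentiate it with respect to $\eta_j$. The product and chain rules give
\begin{align*}
\frac{\partial^2\psi}{\partial\eta_i\partial\eta_j} = n\int y_i\left(y_j - \mu_j\right) p(\xset\,|\,\vec{\nu})\,d\xset = n\,\mathrm{Cov}\left(y_i,y_j\right),
\end{align*}
where the last step uses $\mathbb{E}(y_j - \mu_j) = 0$. In matrix form this is $\MatrixForm{V} = n\MatrixForm{V}_n$. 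The second displayed form of $\MatrixForm{V}$ is then immediate, since $\partial^2\psi/\partial\eta_i\partial\eta_j = \partial\mu_i/\partial\eta_j$ by the mean formula. Symmetry of $\MatrixForm{V}$ follows from equality of mixed partials, because $\psi$ is $C^\infty$ in the interior.

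As a sanity check, in the i.i.d. case $\vec{y} = \frac{1}{n}\sum_i \vec{t}(\vec{x}_i)$, and the single-observation covariance equals $\MatrixForm{V}$. The $\frac{1}{n}$ scaling therefore matches the formula $n\MatrixForm{V}_n = \MatrixForm{V}$.
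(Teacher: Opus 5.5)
Your proposal is correct and follows essentially the same route as the paper: both differentiate the normalization identity $\exp\{n\psi(\vec{\eta})\} = \int \exp\{n\vec{\eta}^T\vec{y}\}g(\xset)\,d\xset$ under the integral sign, justified by $\vec{\eta}$ lying in the interior of the natural parameter space, once for the mean and once more for the covariance. The only difference is cosmetic: you differentiate the normalized form, which gives the centered factor $n(y_j-\mu_j)$ directly, whereas the paper differentiates the unnormalized identity again and then subtracts $n\,\mathbb{E}(y_i)\mathbb{E}(y_j)$.
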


\begin{proof}
A proof can be found in \cite{efron2022exponential} and in the appendix on Page~\pageref{proof_meanvarthm}.
\end{proof}

\begin{theorem} \label{diff_rs_thm}\citep{efron1978geometry,efron2022exponential}  
In exponential families, the corresponding functions $\vec{\eta}$ and $\vec{\mu}$ have 1-1 mappings, given by the differential relationship $d\mu = \MatrixForm{V} d \eta$, and $d\eta = \MatrixForm{V}^{-1}d\mu$.
\end{theorem}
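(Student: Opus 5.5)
The plan is to derive the result directly from Theorem~\ref{mean_and_var_exponential_family_thm}. That theorem identifies the Jacobian of the map $\vec{\eta} \mapsto \vec{\mu}$ with the Hessian $\MatrixForm{V}$ of $\psi$. So the first step is to write $\vec{\mu}(\vec{\eta}) = \nabla \psi(\vec{\eta})$ and apply the chain rule. This gives $d\mu_i = \sum_j \frac{\partial \mu_i}{\partial \eta_j} d\eta_j = \sum_j \frac{\partial^2 \psi}{\partial \eta_i \partial \eta_j} d\eta_j$, that is, $d\mu = \MatrixForm{V}\, d\eta$. This part is routine once the earlier theorem is available.

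The second step, and the main one, is to show that $\MatrixForm{V}$ is invertible, in fact positive definite, on the interior of the natural parameter space.
\begin{itemize}
\item By Theorem~\ref{mean_and_var_exponential_family_thm}, $\MatrixForm{V} = n\,\mathrm{Cov}(\vec{y})$, so it is positive semidefinite.
\item Suppose $\vec{a}^T\MatrixForm{V}\vec{a} = 0$ for some $\vec{a} \neq 0$. Then $\Var{\vec{a}^T\vec{y}} = 0$, so $\vec{a}^T\vec{y}$ is almost surely constant.
\item This contradicts the standing assumption of Section~\ref{subsection_efamily_review} that the sufficient statistics $y_i$ are linearly independent, i.e.\ that the representation is minimal (affinely independent).
\end{itemize}
Hence $\MatrixForm{V}$ is positive definite, and $\psi$ is strictly convex.

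Third, I obtain the one-to-one property and the inverse relation.
\begin{itemize}
\item Strict convexity of $\psi$ on the convex natural parameter space makes $\nabla\psi$ injective. Explicitly, if $\vec{\eta}_1 \ne \vec{\eta}_2$, then $(\nabla\psi(\vec{\eta}_1)-\nabla\psi(\vec{\eta}_2))^T(\vec{\eta}_1-\vec{\eta}_2) = \int_0^1 (\vec{\eta}_1-\vec{\eta}_2)^T \MatrixForm{V}(\vec{\eta}_t)(\vec{\eta}_1-\vec{\eta}_2)\,dt > 0$, where $\vec{\eta}_t$ is the segment between them. So $\vec{\mu}$ is a bijection onto its image.
\item Since the Jacobian $\MatrixForm{V}$ is nonsingular everywhere, the inverse function theorem shows the inverse map $\vec{\mu}\mapsto\vec{\eta}$ is smooth, with Jacobian $\MatrixForm{V}^{-1}$. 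This gives $d\eta = \MatrixForm{V}^{-1} d\mu$.
\end{itemize}

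Finally, the 1-1 correspondence with the original parameters $\vec{\nu}$ is inherited by composing with the assumed bijection $\vec{\eta}\leftrightarrow\vec{\nu}$.

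The main obstacle is the positive-definiteness step. It relies on interpreting ``linearly independent'' as a minimal representation, and on working in the open interior where differentiation under the integral sign, already used in Theorem~\ref{mean_and_var_exponential_family_thm}, is valid.
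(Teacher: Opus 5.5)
Your proof is correct, but it takes a different route because the paper gives no proof of this statement: it is quoted from the cited Efron references. The nearest in-paper argument is Lemma~\ref{lemma_invertible_jacobian}. That lemma \emph{presupposes} that $\vec{\eta}\mapsto\vec{\mu}$ is invertible with a differentiable inverse, and then checks $\MatrixForm{V}\MatrixForm{V}^{-1}=\MatrixForm{I}$ entrywise by the chain rule. Separately, Theorem~\ref{equal_variances_theorem_thingy} simply assumes a regular, minimal family with $\MatrixForm{V}\succ 0$. You instead derive what the paper assumes or cites:
\begin{itemize}
\item positive definiteness, from $\vec{a}^T\MatrixForm{V}\vec{a}=n\Var{\vec{a}^T\vec{y}}$ together with minimality;
\item global injectivity, from strict monotonicity of $\nabla\psi$ along segments in the convex interior of the natural parameter space;
\item differentiability of the inverse, and hence $d\eta=\MatrixForm{V}^{-1}d\mu$, from the inverse function theorem.
\end{itemize}
The last point is exactly what makes the chain-rule step in Lemma~\ref{lemma_invertible_jacobian} legitimate.

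The benefit of your route is that it exposes hypotheses the bare statement hides. Your reading of ``linearly independent'' as affine independence is essential, not pedantic. The multinomial written with all $p$ cell proportions has linearly independent $y_i$ that sum to $1$. So $\MatrixForm{V}$ is singular, $\vec{\eta}$ and $\vec{\eta}+c\mathbf{1}$ give the same $\vec{\mu}$, and the statement fails. The paper's citation-plus-lemma route is shorter, but it leaves this requirement, and the restriction to the interior, implicit.
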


\subsection{REVIEW OF MAXIMUM LIKELIHOOD ESTIMATION}

In exponential families, the log-likelihood with respect to $\vec{\eta}$ is $l\left(\xset|\vec{\eta}\right) = n\left(\vec{\eta}^T\vec{y} - \psi(\vec{\eta})\right) + \log(g(\xset))$, and the score function with respect to $\vec{\eta}$ is given by $l'\left(\xset|\vec{\eta}\right)=\frac{\partial l\left(\xset|\vec{\eta}\right)}{\partial \vec{\eta}} = n(\vec{y} - \vec{\mu})$ by applying Theorem~\ref{mean_and_var_exponential_family_thm}. The MLE for $\vec{\eta}$ is achieved with $\hat{\mu}_i = y_i$ by setting $l'\left(\xset|\vec{\eta}\right)$ to be $0$. The Fisher Information  with respect to $\vec{\eta}$ is  $i_{\vec{\eta}} = \Expt{l'(\xset|\vec{\eta})l'(\xset|\vec{\eta})^T} = nV_n$. The distribution of the MLE of $\vec{\eta}$ converges to a multivariate Normal $\mathcal N(\vec{\eta}, i_{\vec{\eta}}^{-1})$ \citep{lehmann1999elements}. The MLE of $\vec{\nu}$ is found by computing the score function with respect to $\vec{\nu}$ and equating it to zero. Equivalently, if the MLEs $\hat{\eta}_i$ have been computed under the exponential family framework, then the MLE of $\hat{\nu}_i = g_i(\hat{\eta}_1,\hdots,\hat{\eta}_p)$.  

The MLE of $\hat{\nu}$ similarly converges to $\mathcal N(\vec{\nu}, i_{\vec{\nu}}^{-1})$, where $i_{\vec{\nu}}$ is the Fisher Information of $\vec{\nu}$. Without loss of generality, let $\thetaunknown := \{\nu_i\}_{i=1}^t$ be the set of parameters to be estimated, and $\thetaknown := \{\nu_j\}_{j=t+1}^p$ be the set of parameters known. Instead of each $\eta_i$ as a function $f_i(\nu_1,\hdots,\nu_p)$, we restrict $\eta_i$ to $\thetaunknown$, i.e. $\eta_i = \tilde{f}_i(\nu_1,\hdots,\nu_t)$, and write this as $\restrictedETA$. 
With some abuse of notation, the log-likelihood is restricted to $\thetaunknown \in \mathbb R^t$, i.e. $l\left(\xset~\big|~\restrictedETA\right)  = n\left(\restrictedETA^T\vec{y}\right. 
- \psi\left.\left(\restrictedETA\right)\right) + \log(g(\xset))$, and the score function with respect to $\thetaunknown$ is
{\scriptsize
\begin{align}
\frac{\partial~l(\xset|~\restrictedETA)}{\partial \thetaunknown} & = n \left( \frac{\partial \restrictedETA}{\partial \thetaunknown}  \right)^T\vec{y} - n \left(\frac{\partial \psi\left(\restrictedETA\right)}{\partial \restrictedETA}\right)^T\frac{\partial \restrictedETA}{\partial \thetaunknown} \notag \\
 & = n \left(\frac{\partial \restrictedETA}{\partial \thetaunknown}\right)^T \left(\vec{y} - \vec{\mu} \right).\label{curved_ll2} 
\end{align}
}

We defer computing the Fisher Information with respect to $\thetaunknown$ to the proof of Theorem~\ref{equal_variances_theorem_thingy}.

\subsection{REVIEW OF CONTROL VARIATES}
\label{section_CV}
Suppose we want to estimate the mean of a random variable $X$, given by $\Expt{X}$. Further suppose we know the random process that generated $X$, and can use the same process to generate random variables $Y_1, \hdots, Y_k$, where $\Expt{Y_1}, \hdots, \Expt{Y_k}$ are known. Then $Z = X + \sum_{i=1}^k c_i(Y_i - \Expt{Y_i})$ is a CVE for $X$. The $Y_i$s are called control variates (CV). $\Expt{Z} = \Expt{X}$, but the variance of $Z$ is given by $\Var{Z}  = \Var{X} + \sum_{i=1}^k c_i^2 \Var{Y_i} + 2\sum_{i=1}^k c_i\Cov{X}{Y_i} + 2 \sum_{i >j}^k c_ic_j\Cov{Y_i}{Y_j} $. Let the matrix $\MatrixForm{W}_{ij} = \Cov{Y_i}{Y_j}$, $1\leq i,j, \leq k$, noting that $\Cov{Y_i}{Y_i} \equiv \Var{Y_i}$, and let $\vec{d} = (\Cov{X}{Y_1}, \hdots, \Cov{X}{Y_k})^T$. Then the optimal values of $\vec{c}$ minimizing $\Var{Z}$ are given by $\MatrixForm{W} \vec{c} = -\vec{d}$ and are called the CV corrections. The optimal variance given by the CVE is
\begin{align}
\Var{Z} & = \Var{X} + \vec{c}^T\MatrixForm{W}\vec{c} + 2 \vec{c}^T\vec{d} \notag \\
 & = \Var{X} + \vec{c}^T\vec{d}
=\Var{X} - \vec{d}^T\MatrixForm{W}^{-1}\vec{d}. \label{optimal_cv_var_reduct}
\end{align}
From Equation~\eqref{optimal_cv_var_reduct}, $\Var{Z} \leq \Var{X}$, with equality only when $\vec{d} = \vec{0}$ since the covariance matrix $\MatrixForm{W}$ is positive semi-definite and invertible, hence $\MatrixForm{W}^{-1}$ is positive semi-definite as well. Suppose $\thetaunknown = \{\nu_i\}_{i=1}^t$ and we want an estimate of some linear combination of $X = \sum_{i=1}^t \alpha_i\nu_i, \alpha_i \in \mathbb R$, with $\thetaknown = \{\nu_{j}\}_{j = t+1}^p$. We make the following assumptions.

{\bf Assumption One:} A CV leading to the most variance reduction when parameters are linear must necessarily come from the sufficient statistics $y_j$ as sufficient statistics yield the most information about the data. We look at linear combinations of $y_j$s, rather than any arbitrary functions of the $y_j$s, due to the form of the exponential family. We stress that we are not focusing on coming up with zero-variance CVEs along the lines of \cite{oates2017control,south2023regularized}, but more of relating the MLE to the CVE.

{\bf Assumption Two:} Given $(p-t)$ independent known parameters $\nu_{t+1},\hdots,\nu_p$, there ought to exist  $(p-t)$ CV with known means. We describe how to find them in (the proof of) Corollary~\ref{LC_corollary}.

\section{EQUIVALENCE OF MLE AND CV}
\label{section_equiv}

We state our main theorem, and give the high level overview on our proof strategy.

\begin{restatable}{theorem}{maintheoremMLECVE}
\label{equal_variances_theorem_thingy}
Let observations $\xset := \{\vec{x}_i\}_{i=1}^n$ come from an exponential family which is regular and minimal, with $\psi$ twice continuously differentiable and $\MatrixForm{V}=\nabla^2\psi(\vec{\eta})\succ 0$ at the parameter value under consideration. Let $\thetaunknown := \{\nu_i\}_{i=1}^t$ be the set of parameters to be estimated, and $\thetaknown := \{\nu_{j}\}_{j=t+1}^p$ be the set of parameters known. Suppose every $\mu_i \equiv  \frac{\partial \psi(\vec{\eta})}{\partial \eta_i} = \Expt{y_i}$ is equal to $\nu_i$. The asymptotic variance reduction given by the MLE of a linear combination of $\thetaunknown$ is equal to the  variance reduction given by a CVE where components in $\vec{y}$ are used as control variates.
\end{restatable}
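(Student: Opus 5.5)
Under the hypothesis $\mu_i=\nu_i$, the mean-value parametrization coincides with $\vec{\nu}$. The plan is to compute both sides explicitly in the $\vec{\mu}$ coordinates and compare them as Schur complements of $\MatrixForm{V}$. Partition
\[
\vec{y}=(\vec{y}_{\mathcal A},\vec{y}_{\mathcal B}),\qquad
\MatrixForm{V}=\begin{pmatrix}\MatrixForm{V}_{AA}&\MatrixForm{V}_{AB}\\ \MatrixForm{V}_{BA}&\MatrixForm{V}_{BB}\end{pmatrix},
\]
where $\mathcal A$ indexes the $t$ unknown coordinates $\thetaunknown$ and $\mathcal B$ the known coordinates $\thetaknown$. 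Write $X=\vec{\alpha}^T\vec{y}_{\mathcal A}$ with $\vec{\alpha}\in\mathbb R^t$. Then $X$ is unbiased for $\sum_i\alpha_i\nu_i$, and by Theorem~\ref{mean_and_var_exponential_family_thm} its variance is $\frac1n\vec{\alpha}^T\MatrixForm{V}_{AA}\vec{\alpha}$. This is also the asymptotic variance of the unrestricted MLE, since $\hat\mu=\vec{y}$.

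\textbf{CVE side.} Use the components of $\vec{y}_{\mathcal B}$ as control variates; their means $\nu_{t+1},\hdots,\nu_p$ are known, which gives Assumption Two directly. In the notation of Section~\ref{section_CV}, $\MatrixForm{W}=\frac1n\MatrixForm{V}_{BB}$ and $\vec{d}=\frac1n\MatrixForm{V}_{BA}\vec{\alpha}$. Equation~\eqref{optimal_cv_var_reduct} then gives the optimal variance
\[
\tfrac1n\,\vec{\alpha}^T\bigl(\MatrixForm{V}_{AA}-\MatrixForm{V}_{AB}\MatrixForm{V}_{BB}^{-1}\MatrixForm{V}_{BA}\bigr)\vec{\alpha}.
\]
$\MatrixForm{V}_{BB}$ is invertible because $\MatrixForm{V}\succ0$. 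The variance reduction is therefore $\frac1n\vec{\alpha}^T\MatrixForm{V}_{AB}\MatrixForm{V}_{BB}^{-1}\MatrixForm{V}_{BA}\vec{\alpha}$. By Assumption One it suffices to consider linear combinations of the $y_j$; since the $\vec{y}_{\mathcal A}$ components have unknown means, only $\vec{y}_{\mathcal B}$ can serve as controls.

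\textbf{MLE side.} Compute the Fisher information of $\thetaunknown$ from the restricted score~\eqref{curved_ll2}. This gives $i_{\thetaunknown}=n\,\MatrixForm{J}^T\MatrixForm{V}\MatrixForm{J}$ with $\MatrixForm{J}=\partial\restrictedETA/\partial\thetaunknown$, a $p\times t$ matrix. By Theorem~\ref{diff_rs_thm}, $d\vec{\eta}=\MatrixForm{V}^{-1}d\vec{\mu}$. Because $\vec{\mu}=\vec{\nu}$ and only the first $t$ coordinates vary, $d\vec{\mu}=(d\thetaunknown,\vec 0)$. Hence $\MatrixForm{J}$ is the first $t$ columns of $\MatrixForm{V}^{-1}$, namely $\MatrixForm{J}=\MatrixForm{V}^{-1}\MatrixForm{E}$ with $\MatrixForm{E}=\binom{\MatrixForm{I}_t}{\MatrixForm{0}}$. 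Then
\[
i_{\thetaunknown}=n\,\MatrixForm{E}^T\MatrixForm{V}^{-1}\MatrixForm{E}=n\,(\MatrixForm{V}^{-1})_{AA}.
\]
By the block inverse formula, $(\MatrixForm{V}^{-1})_{AA}=(\MatrixForm{V}_{AA}-\MatrixForm{V}_{AB}\MatrixForm{V}_{BB}^{-1}\MatrixForm{V}_{BA})^{-1}$. So $i_{\thetaunknown}^{-1}$ is exactly the Schur complement divided by $n$. Regularity and minimality give the standard asymptotic normality $\hat\nu_{\mathrm E}\sim\mathcal N(\thetaunknown,i_{\thetaunknown}^{-1})$. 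The delta method for the linear functional then gives asymptotic variance $\vec{\alpha}^Ti_{\thetaunknown}^{-1}\vec{\alpha}$, which matches the CVE expression above. Subtracting both from $\frac1n\vec{\alpha}^T\MatrixForm{V}_{AA}\vec{\alpha}$ shows the two reductions are equal.

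\textbf{Main obstacle.} The delicate step is the Jacobian identification $\MatrixForm{J}=\MatrixForm{V}^{-1}\MatrixForm{E}$. It relies on restricting to the curved subfamily with $\thetaknown$ fixed and on using $\mu_i=\nu_i$ so that the $\mu$-chart restricted to this subfamily is a coordinate-aligned affine slice. I would verify it carefully via the chain rule $\partial\vec{\eta}/\partial\vec{\nu}=(\partial\vec{\mu}/\partial\vec{\eta})^{-1}$, using Theorems~\ref{mean_and_var_exponential_family_thm} and~\ref{diff_rs_thm}, and then selecting columns. A secondary point is making the baseline explicit: "variance reduction" should be read relative to the unrestricted estimator $\vec{\alpha}^T\vec{y}_{\mathcal A}$ for both procedures.
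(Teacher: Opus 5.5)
Your proposal is correct and follows essentially the same route as the paper: both write the optimal CVE variance as the Schur complement $\frac1n\vec{\alpha}^T(\MatrixForm{V}_{AA}-\MatrixForm{V}_{AB}\MatrixForm{V}_{BB}^{-1}\MatrixForm{V}_{BA})\vec{\alpha}$, and both use $\mu_i=\nu_i$ with Theorem~\ref{diff_rs_thm} to identify $\partial\restrictedETA/\partial\thetaunknown$ with the first $t$ columns of $\MatrixForm{V}^{-1}$, so that $i_{\thetaunknown}$ is $n$ times the upper-left block of $\MatrixForm{V}^{-1}$ and the block-inverse formula matches the two. The only cosmetic difference is that you simplify $\MatrixForm{J}^T\MatrixForm{V}\MatrixForm{J}=\MatrixForm{E}^T\MatrixForm{V}^{-1}\MatrixForm{E}$ directly, whereas the paper first computes $\MatrixForm{V}_n\MatrixForm{J}=\frac1n\MatrixForm{E}$ and then reads off the top block of $\MatrixForm{J}$ via Lemma~\ref{lemma_invertible_jacobian}.
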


The tricky part of proving Theorem~\ref{equal_variances_theorem_thingy} is to show that the {\bf asymptotic variance} of the MLE is the same as the {\bf variance given by the CVE}, by ``standardizing" the  (very different!) terminology in both fields to exponential family notation. 

In brief, we compute the Fisher Information with respect to $\thetaunknown$ (from Equation~\ref{curved_ll2}) which involves the matrix of partial derivatives $\left[\frac{\mathbf{\partial} \eta_i}{\partial \mu_j} \right]_{i,j}, 1 \leq i, j \leq t$ in order to get an expression for the {\bf asymptotic variance of the MLE}. We compute the expression for the {\bf variance of the CVE} which involves the matrix of partial derivatives $\left[\frac{\mathbf{\partial} \mu_i}{\partial \eta_j} \right]_{i,j}, 1 \leq i, j \leq t$. We then apply Lemma~\ref{lemma_invertible_jacobian} (in Appendix~\ref{secA3}) to show how the two matrices of partial derivatives are related, and finally prove the equivalence. Figure~\ref{pict_fig2} gives the gist of our approach, and a full and complete proof is on Page~\pageref{startMLECVEproof} in Appendix~\ref{appendix_proof_equivalence}.

\begin{restatable}{corollary}{lincombcorollary}
\label{LC_corollary}
Theorem~\ref{equal_variances_theorem_thingy} also holds when $\vec{\mu} = \MatrixForm{A}\vec{\nu}$. 
\end{restatable}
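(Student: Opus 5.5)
The plan is to reduce Corollary~\ref{LC_corollary} to Theorem~\ref{equal_variances_theorem_thingy} via the reparametrization in Equation~\eqref{exp_family_transformed}. Assume $\MatrixForm{A}$ is invertible; otherwise $\vec{\nu}\mapsto\vec{\mu}$ would not be 1-1. Set $\widetilde{y}=\MatrixForm{A}^{-1}\vec{y}$, $\widetilde{\eta}=\MatrixForm{A}^T\vec{\eta}$ and $\widetilde{\psi}(\widetilde{\eta})=\psi(\MatrixForm{A}^{-T}\widetilde{\eta})$. The density is unchanged, so the transformed family is still regular and minimal and $\widetilde{\psi}$ is still $C^2$. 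By the chain rule its Hessian is $\widetilde{\MatrixForm{V}}=\MatrixForm{A}^{-1}\MatrixForm{V}\MatrixForm{A}^{-T}\succ 0$. Its mean parameter is
\[
\widetilde{\mu}=\nabla\widetilde{\psi}(\widetilde{\eta})=\MatrixForm{A}^{-1}\nabla\psi(\vec{\eta})=\MatrixForm{A}^{-1}\vec{\mu}=\vec{\nu},
\]
which also matches $\Expt{\widetilde{y}}=\MatrixForm{A}^{-1}\Expt{\vec{y}}$ (Theorem~\ref{mean_and_var_exponential_family_thm}). So in the tilde coordinates every $\widetilde{\mu}_i=\nu_i$, and the family meets all hypotheses of Theorem~\ref{equal_variances_theorem_thingy}.

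Next I check that both estimators are invariant under the change of coordinates. \emph{MLE side:} the likelihood is the same function of $\vec{\nu}$ in either parametrization, so the MLE of $\thetaunknown$ with $\thetaknown$ fixed is the same. Hence the asymptotic variance of the MLE of $\sum_{i\le t}\alpha_i\nu_i$, given by the inverse Fisher information of $\thetaunknown$, is the same. \emph{CVE side:} components of $\widetilde{y}$ are linear combinations of components of $\vec{y}$, and conversely via $\MatrixForm{A}$. Under Assumption One the span of admissible control variates is therefore the same. The optimal CV variance $\Var{X}-\vec{d}^T\MatrixForm{W}^{-1}\vec{d}$ from Equation~\eqref{optimal_cv_var_reduct} depends only on the linear span of the CVs, since replacing $Y$ by $\MatrixForm{B}Y$ with $\MatrixForm{B}$ invertible turns $\vec{d},\MatrixForm{W}$ into $\MatrixForm{B}\vec{d},\MatrixForm{B}\MatrixForm{W}\MatrixForm{B}^T$ and leaves the quadratic form fixed. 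The estimand $X$, an unbiased linear statistic for $\sum\alpha_i\nu_i$, is likewise the corresponding combination $\sum\alpha_i\widetilde{y}_i$.

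Applying Theorem~\ref{equal_variances_theorem_thingy} to the tilde family then gives equality of the two variance reductions, which transfers back by the invariances above. The control variates with known means are the $\widetilde{y}_j=(\MatrixForm{A}^{-1}\vec{y})_j$ for $j>t$, whose means are the known $\nu_j$. This construction also answers Assumption Two.

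I expect the main obstacle to be making the CV-side invariance precise. The theorem uses as control variates the components $y_j$, $j>t$, and for a general $\MatrixForm{A}$ these need not have known means. The corollary must instead use $\widetilde{y}_j$, $j>t$, so I would state explicitly that the CVE in the corollary is built from $\MatrixForm{A}^{-1}\vec{y}$. I also need the identification of the known block $\thetaknown$ with the last $p-t$ coordinates of $\widetilde{\mu}$ to be exact, which holds because $\widetilde{\mu}=\vec{\nu}$ coordinatewise.
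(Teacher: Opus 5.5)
Your proposal is correct and follows the paper's proof. It uses the same reparametrization $\widetilde{y}=\MatrixForm{A}^{-1}\vec{y}$, $\widetilde{\eta}=\MatrixForm{A}^{T}\vec{\eta}$, $\widetilde{\psi}(\widetilde{\eta})=\psi(\MatrixForm{A}^{-T}\widetilde{\eta})$ to get $\Expt{\widetilde{y}}=\vec{\nu}$, takes $\widetilde{y}_j$ for $j>t$ as the control variates, and applies Theorem~\ref{equal_variances_theorem_thingy} to the transformed family. Your extra checks, that $\MatrixForm{A}^{-1}\MatrixForm{V}\MatrixForm{A}^{-T}\succ 0$ and that the optimal CV variance depends only on the span of the control variates, make explicit what the paper leaves implicit.
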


\begin{proof}
This follows since we can make the transformation in Equation~\eqref{exp_family_transformed} with $\widetilde{{\eta}} = \MatrixForm{A}\vec{\eta}$ and $\widetilde{y} = \MatrixForm{A}^{-1}\vec{y}$ for $\MatrixForm{A}$ an invertible matrix of appropriate size, before applying Theorem~\ref{equal_variances_theorem_thingy}. A more detailed proof is on Page~\pageref{proof_lincombcor} in the appendix.
\end{proof}
\begin{figure*}
\begin{center}
\begin{tikzpicture}
    \node[draw, rectangle, minimum width=1cm, minimum height=1cm, black] (databox) at (0,0) {$\begin{array}{c}
    \text{\bf Estimate:} \sum_{i=1}^t \alpha_i \nu_i = \sum_{i=1}^t \alpha_i \magenta{y_i} \\
    \begin{array}{r l}
    \thetaunknown & := \{\nu_1,\hdots,\nu_t\} \\
    \thetaknown & := \{\nu_{t+1},\hdots,\nu_p\}
    \end{array}
    \end{array}$};
    
    \node[draw, rectangle, minimum width=1cm, minimum height=1cm, black] (MLEbox) at (0,2.25) {$
    \begin{array}{r c}
    {\text{\bf Score Function:}} & {n\left(\frac{\partial \red{\vec{\eta}(}\thetaunknown\red{)}^T}{\partial \thetaunknown} \right)(\magenta{\vec{y}} - \blue{\vec{\mu}})} \\
    {\text{\bf Asymptotic Variance:}} & {\frac{1}{n} \vec{\alpha}^T \underset{1:t, 1:t}{\left[\frac{\partial \red{\eta}}{\partial \blue{\mu}}\right]}^{-1}\vec{\alpha}} \\
    \end{array}$};
    
    \node[draw, rectangle, minimum width=1cm, minimum height=1cm, black] (CVbox) at (0,-2.25) {$
    \begin{array}{r c}
    {\text{\bf CVE:}} & { \sum_{i=1}^t \alpha_i\magenta{y_i} + \sum_{j=t+1}^pc_j(\magenta{y_j} - \blue{\mu_j})} \\
    {\text{\bf Variance:}} & {\frac{1}{n}\vec{\alpha}^T\left(\underset{1:t, 1:t}{\left[\frac{\partial \blue{\mu}}{\partial \red{\eta}}\right]} - \underset{1:t, (t+1):p}{\left[\frac{\partial \blue{\mu}}{\partial \red{\eta}}\right]}~~\underset{(t+1):p, (t+1):p}{\left[\frac{\partial \blue{\mu}}{\partial \red{\eta}}\right]^{-1}}~~ \underset{{(t+1):p,1:t}}{\left[\frac{\partial \blue{\mu}}{\partial \red{\eta}}\right]}\right)\vec{\alpha}} \\
    \end{array}$};    

     \draw[->, line width=1.5pt] (databox) -- (MLEbox) node[midway, left] {MLE};
     \draw[->, line width=1.5pt] (databox) -- (CVbox) node[midway, left] {CVE};

       \draw[->,line width=1.5pt] (MLEbox.south east) .. controls +(1,-1) and +(1,1) .. node[midway, above] {$\begin{array}{c}
       \phantom{a} \\
       \phantom{a} \\
       \phantom{a}
       \end{array}$\phantom{a}} node[midway, left] {$\begin{array}{c}
       \text{Max $t$} \\
       \text{parameters} \\
       \end{array}$\hspace*{-0.3cm}}  (CVbox.north east);

        \draw[->,line width=1.5pt] (CVbox.north west) .. controls +(-1,1) and +(-1,-1) .. node[midway, right] { $\begin{array}{c}
       \text{Min $p-t$} \\
       \text{parameters} 
       \end{array}$} node[midway, above] {$\begin{array}{c}
       \phantom{a} \\
       \phantom{a} \\
       \phantom{a}
       \end{array}$}  (MLEbox.south west);
       
\end{tikzpicture}
\end{center}
\caption{The exponential family  $\frac{\displaystyle\exp\left\{n  \red{\vec{\eta}(} \thetaunknown,\thetaknown   \red{)^T}\magenta{\vec{y}}  \right\}}{\displaystyle \exp\left\{ n\blue{\psi(\red{\vec{\eta}(}} \thetaunknown,\thetaknown   \red{)} \blue{)}     \right\}}g(\xset)$ for $\MatrixForm{V}^{\mathrm MLE} = \MatrixForm{V}^{\mathrm CVE}$. \label{pict_fig2}}
\end{figure*}

\begin{algorithm}[h]
\SetAlgoLined
\SetKwInOut{Preconditions}{Preconditions}
\SetKwInOut{Require}{Require}
\SetKwInOut{Input}{Input}
\SetKwInOut{Result}{Result}
\Preconditions{Want to estimate some linear combination of $\thetaunknown =\{\nu_1,\hdots,\nu_t\}$ and know $\thetaknown = \{\nu_{t+1},\hdots,\nu_p\}$}
\Require{$t$ control variate expressions of the form $ f_i(\nu_1,\hdots,\nu_t) = y_i + \sum_{j=t+1}^{p} \hat{c}_{ij} (\nu_1,\hdots,\nu_t) (y_j-\mu_j)$ where $\hat{c}_{ij}(\nu_1,\hdots,\nu_t)$ can be evaluated}
\Result{A fixed point of the CVE, which is the MLE of $\thetaunknown$ under Corollary~\ref{corollary_fixed_points_stationary_points}}
Get initial estimates of $\hat{\nu}_1^{(1)} = y_1,\hdots,\hat{\nu}_t^{(1)} = y_t$ and initialize $n = 1$\;
\Repeat{all $|\hat{\nu}_s^{(n)}-\hat{\nu}_s^{(n-1)}|\leq\epsilon$, for $1\leq s\leq t$}
{$\hat{\nu}_1^{(n+1)} = f_1(\hat{\nu}_1^{(n)},\hat{\nu}_2^{(n)},\hdots,\hat{\nu}_t^{(n)})$\;
$\hat{\nu}_2^{(n+1)} = f_2(\hat{\nu}_1^{(n+1)},\hat{\nu}_2^{(n)},\hdots,\hat{\nu}_t^{(n)})$\;
$\hdots$\;
$\hat{\nu}_t^{(n+1)} = f_t(\hat{\nu}_1^{(n+1)},\hat{\nu}_2^{(n+1)},\hdots,\hat{\nu}_t^{(n)})$\;
$n = n+1$\;
}
Return $(\hat{\nu}_1^{(n)},\hdots,\hat{\nu}_t^{(n)})$\;
\caption{Fixed Point Algorithm via Control Variates ({\tt CV-FP}) \label{FP_algo_format}}
\end{algorithm}

Theorem~\ref{equal_variances_theorem_thingy} shows that the MLE and the optimal CVE have the same asymptotic variance reduction. We show in Appendix~\ref{secA3} that the likelihood score equations for $\thetaunknown$ can be written in the same form as the optimal CV coefficients. In particular, for each $1 \leq i \leq t$, the MLE satisfies a fixed point equation
$\nu_i = f_i(\nu_1,\hdots,\nu_t) = y_i + \sum_{j=t+1}^{p}\hat{c}_{ij}({\nu_1,\hdots,\nu_t})(y_j - \mu_j)$
where $\hat{c}_{ij}$s are the optimal CV coefficients and $\mu_j=\nu_j$ is known for $j>t$. Hence, we get a system of fixed point equations whose solutions are the stationary points of the MLE, which we describe in Algorithm~\ref{FP_algo_format}.

\section{DISCUSSION}
\label{sect_discussion}
For $\thetaunknown := \{\nu_i\}_{i=1}^t$, $\thetaknown := \{\nu_{j}\}_{j=t+1}^p$, the method of MLE for estimating $\thetaunknown$ maximizes the score function with respect to $\thetaunknown$  over $t$ terms. The asymptotic variance of the MLE of $\thetaunknown$ is found via the inverse of the Fisher Information. Statistical theory states that an unbiased MLE asymptotically achieves the Cram{\'e}r Rao lower bound, implying no other estimator can do better \citep{lehmann1999elements}. However, finding the optimal $\thetaunknown$ usually involves numerical root-finding methods. On the other hand, the method of CV for estimating $\thetaunknown$ minimizes the variance of the CVE, and is minimized over $(p-t)$ terms. We make three observations, with Figure~\ref{pict_fig2} showing a summary of the equivalence.

{\bf Observation 1:} There is a duality between the variance estimates given by the MLE (maximizing $t$ terms in $\frac{\partial \eta}{\partial \mu}$, by {\it knowing the likelihood function}) and by the CVE (minimizing $p-t$ terms in $\frac{\partial \mu}{\partial \eta}$, {\it by computing relevant second moments}). In cases where we know fewer parameters $p-t$, then using CVE is desirable. Otherwise, using the MLE is desirable.

{\bf Observation 2:} Solving for the MLE requires the score function with respect to $\vec{\nu}$ without exponential families. Under the exponential family framework, we must know the map from $\vec{\eta}$ to $\vec{\nu}$ to compute partial derivatives of the form $\frac{\partial \eta_i}{\partial \nu_j}$ in order to find the MLE as in Equation~\eqref{curved_ll2}. For the CVE, we require partial derivatives of the form $\frac{\partial \mu_i}{\partial \eta_j}$ in order to find the optimal control variate coefficients $\hat{c}_i$ leading to the CV estimate. Using one method over the other depends on what information we have.

{\bf Observation 3:} In most cases, the covariances in the optimal CV corrections $\hat{c}_i$ is a function of $\thetaunknown$. In Equation~\eqref{to_use_in_proof}, solving for $\hat{c}_i$ requires knowing the respective $\Cov{y_i}{y_s} \equiv \Expt{y_iy_s} - \Expt{y_i}\Expt{y_s}, t+1 \leq s \leq p$, yet the terms $y_i$ are what we wanted to estimate and are unknown. This invalidates the theoretical variance calculations of the CVE if substitution of the terms $y_i$ are used. However, Algorithm~\ref{FP_algo_format} implies that the theoretical variance {\bf is} the MLE theoretical variance.

Various papers treat the CV correction $\hat{c}_i$ differently: e.g. \cite{adams2018estimating} computes the empirical covariance and variance in a CVE for Hutchinson's Estimator, while \cite{kang2021correlations} uses initial estimates of $\thetaunknown$ to substitute into respective $\hat{c}_i$s, as well as a constructing a multivariate Normal distribution where terms involving $\thetaunknown$ cancel out in the CV correction for random projections. Our results imply that treating the CVE as a fixed point algorithm in our experiments result in a better and {\it theoretically correct} estimator, in the sense that the variance of the estimator after convergence is equal to the variance of the MLE at that specific number of observations (or sketch size) $k$. We further suggest when conditions in Theorem~\ref{equal_variances_theorem_thingy} or Corollary~\ref{LC_corollary} hold, ({\tt CV-FP}) is empirically faster and numerically stable compared to root finding algorithms such as Newton Raphson and the Secant method, {\it and mitigates the reproducibility issues mentioned in our introduction}.

While Newton Raphson has quadratic convergence, the derivative has to be computed at each update without guarantee of convergence at bad starting points even if the initial estimate of $y_i$ was used. The Secant method has superlinear convergence, though it requires two initial starting observations, which have to be near the root, although they can be some $y_i \pm \epsilon$, or the upper and lower bounds of what $y_i$ could be. Empirically, our experiments in Section~\ref{sec_expt} for the bivariate Normal distribution show a better convergence to the root with {\tt CV-FP} compared to Newton Raphson for the MLE, and gives a reasonable explanation for the differing performance of CVEs compared to MLEs.

\section{EXPERIMENTS}
\label{sec_expt}

We demonstrate the practical applications of this equivalence on two algorithms: feature hashing and random projection with respect to {\it reproducibility}. We briefly discuss the equivalence between CVE and MLE for feature hashing in the main paper and defer to Appendix~\ref{secA4} for a detailed discussion on both algorithms.

Feature hashing is a technique to quickly estimate inner products between vectors $\vec{x}_i,\vec{x}_j$, with applications in similarity search, duplicate detection \citep{nauman2022introduction}, etc. It compresses the vectors to low-dimensional sketches and estimates the inner product from them. Suppose $\vec{v}_i$, $\vec{v}_{j} \in \R^{k}$ are the feature hashing sketch of the vectors $\vec{x}_i, \vec{x}_j \in \R^p$ where $k \ll d$. Then $\langle \vec{v}_i, \vec{v}_j  \rangle$ gives an unbiased estimate of  $\langle \vec{x}_i, \vec{x}_j \rangle$ \citep{weinberger2009feature}. We apply Algorithm~\ref{FP_algo_format} ({\tt CV-FP}) to \cite{verma2022variance} to estimate $\langle \vec{x}_i, \vec{x}_j\rangle$, getting the update step of 
\begin{align}
f_{n+1} &
  = \sum_{s=1}^k v_{is}v_{js}- \frac{ f_n  \|\vec{x}_j\|^2 \left( \sum_{s=1}^k v_{is}^2 - \|\vec{x}_i\|^2  \right)}{ f_n^2 + \|\vec{x}_i\|^2 \|\vec{x}_j\|^2}  
    \notag \\
     & \phantom{xxx} - \frac{f_n\left(\|\vec{x}_i\|^2 \left( \sum_{s=1}^k v_{js}^2 - \|\vec{x}_j\|^2  \right) \right)}{ f_n^2 + \|\vec{x}_i\|^2 \|\vec{x}_j\|^2}  \label{eq:eqn_iter_cv_fh}
\end{align}
where we set $f_1 = \sum_{s=1}^k v_{is}v_{js}$. Further details are in Appendix~\ref{secA4}.

\textbf{Hardware Description:} Experiments were done on a machine having the following configuration: CPU: Intel(R) Core(TM) i7-8750H CPU @ 2.21GHz x 6; Memory: 16 GB; OS: Windows 10. 

We run simulations on randomly generated vector pairs $\vec{x}_1,\vec{x}_2 \in \mathbb R^{10,000}$ with ratios $r \in \{0.1,0.5,1,2,10\}$ where $\|\vec{x}_1\|^2 = r \|\vec{x}_2\|^2$, and angles $\theta \in \{\frac{\pi}{12}, \frac{\pi}{4}, \frac{\pi}{2}, \frac{3\pi}{4}, \frac{11\pi}{12} \}$. Each specific ratio and angle for a vector pair takes $\approx 2$ hours to run, and data generated can be up to 150MB. 

We generate feature hashing sketches, denoted as $\vec{v}_i$ and $\vec{v}_j$, for vector pairs $\vec{x}_{i}$ and $\vec{x}_{j}$ respectively for sketch size ($k$) where  $1\leq k \leq 100$. We compute $\sum_{s=1}^k v_{is}v_{js}$ and refer to it as the baseline estimate. Subsequently, we compute estimates using: a) MLE \citep{verma2022variance} via Newton Raphson ({\tt MLE-NR}), b) MLE via the Secant method ({\tt MLE-Secant}), c) CV using $\sum_{s=1}^k v_{is}v_{js} \approx \langle \vec{x}_i,\vec{x}_j\rangle$ in $\hat{c}_i$ ({\tt CV-Init}), d) CV where the empirical covariance and variance are computed for $\hat{c}_i$ ({\tt CV-Emp}), and e) Algorithm~\ref{FP_algo_format}~ ({\tt CV-FP}). We repeat this for $10,000$ iterations, and record the number of updates a) b) and e) take until convergence.  

\begin{figure*}
    \centering
    \hspace{0.1cm}
    \begin{minipage}[t]{.3\textwidth}
        \centering
    \hspace{-0.55cm}\includegraphics[width=1.0\linewidth]{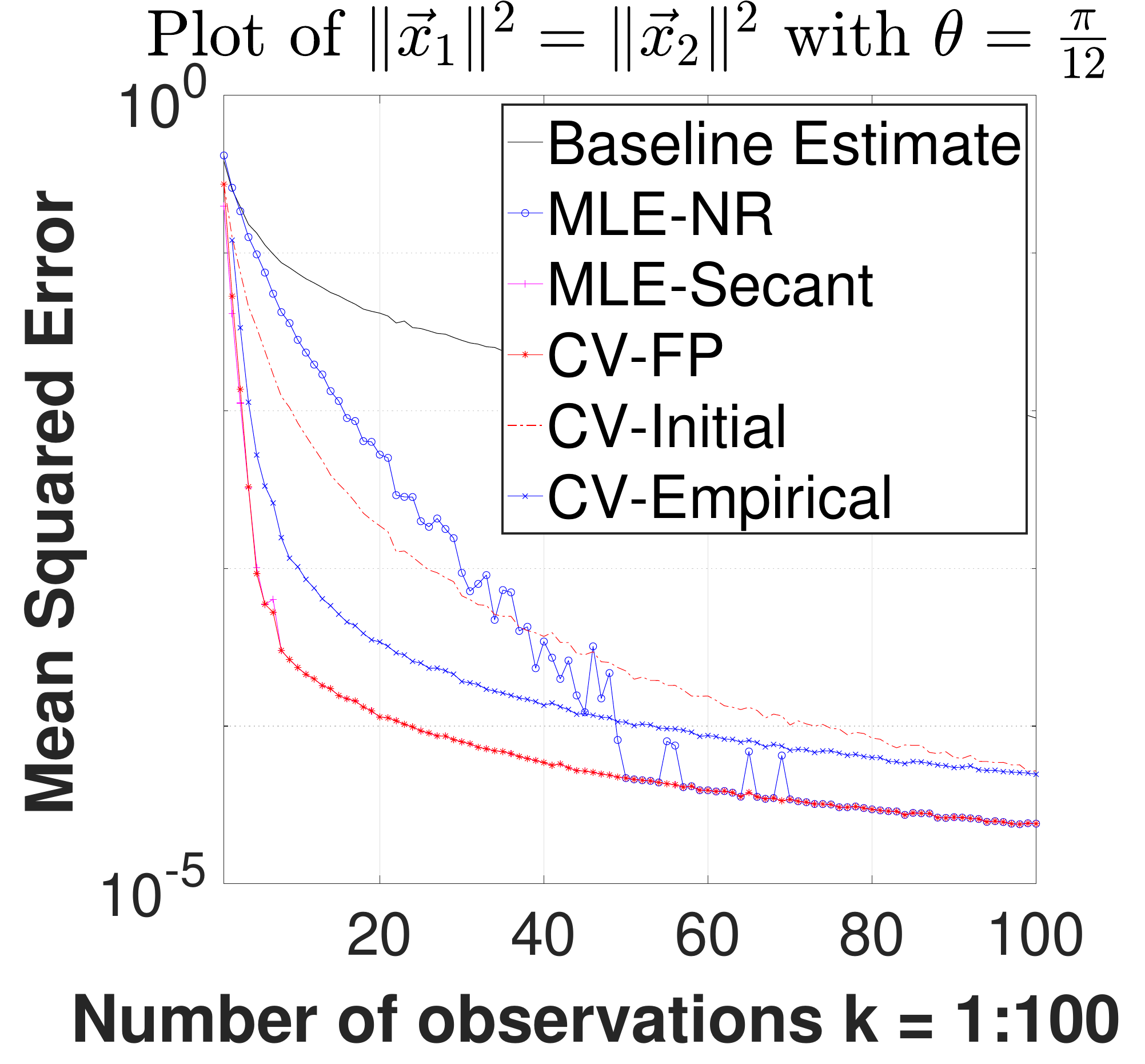}
    \vspace*{-0.2cm}
        \caption{\small MSE plot when \phantom{whenwhen} \mbox{$\|\vec{x}_1\|^2 =\|\vec{x}_2\|^2$ with $\theta = \frac{\pi}{12}$}.}
        \label{single_MSE_FH_main}
    \end{minipage}%
    \hspace{0.1cm}
    \begin{minipage}[t]{0.3\textwidth}
       \centering
        \includegraphics[width=1.0\linewidth]{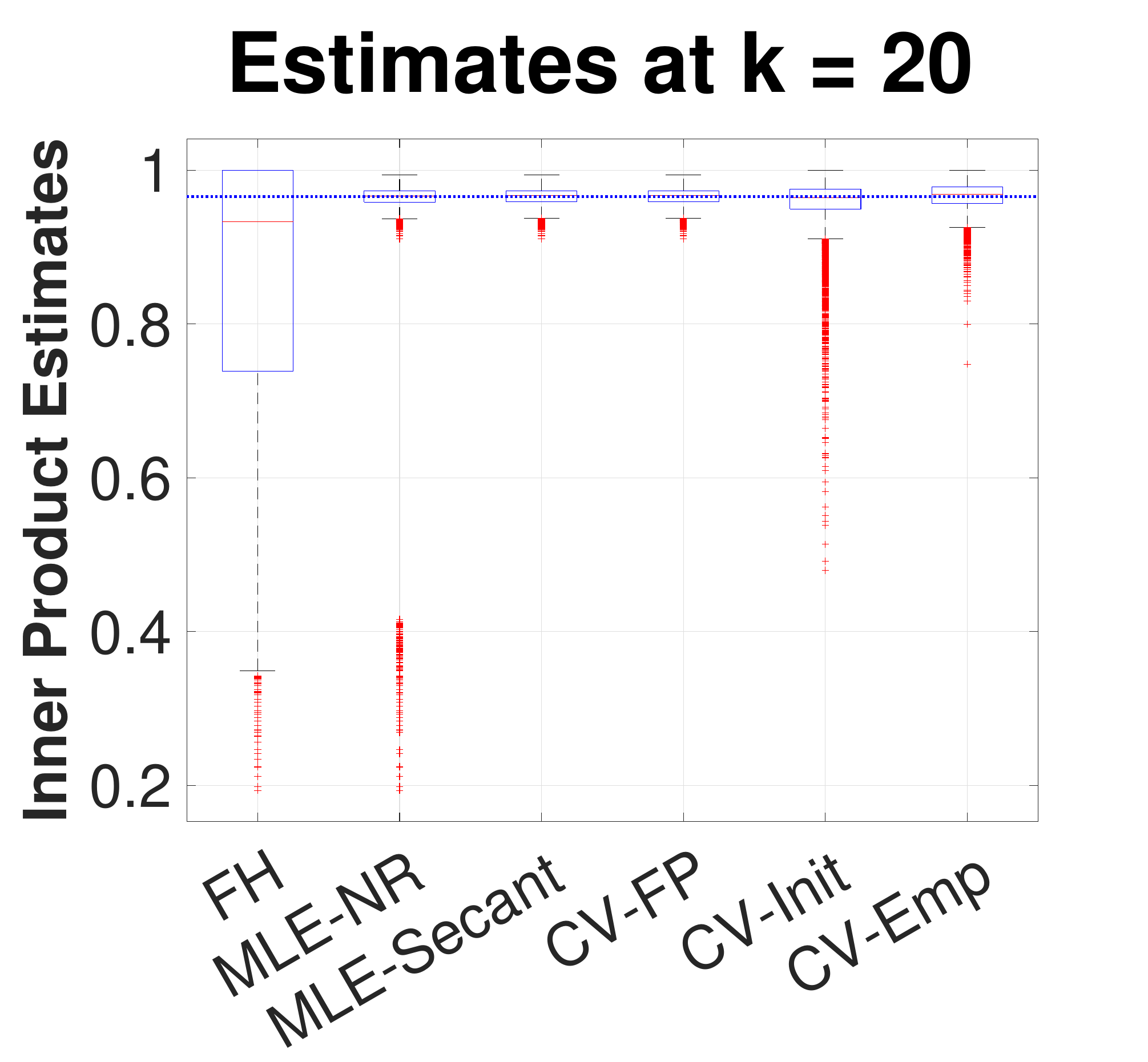}
        \vspace*{-0.6cm}
        \caption{\small Boxplots of estimated inner products. Blue horizontal line denotes true inner product. }
        \label{boxplot_outliers}
    \end{minipage}
    \hspace{0.1cm}
    \begin{minipage}[t]{0.3\textwidth}
        \centering
        \includegraphics[width=1.0\linewidth]{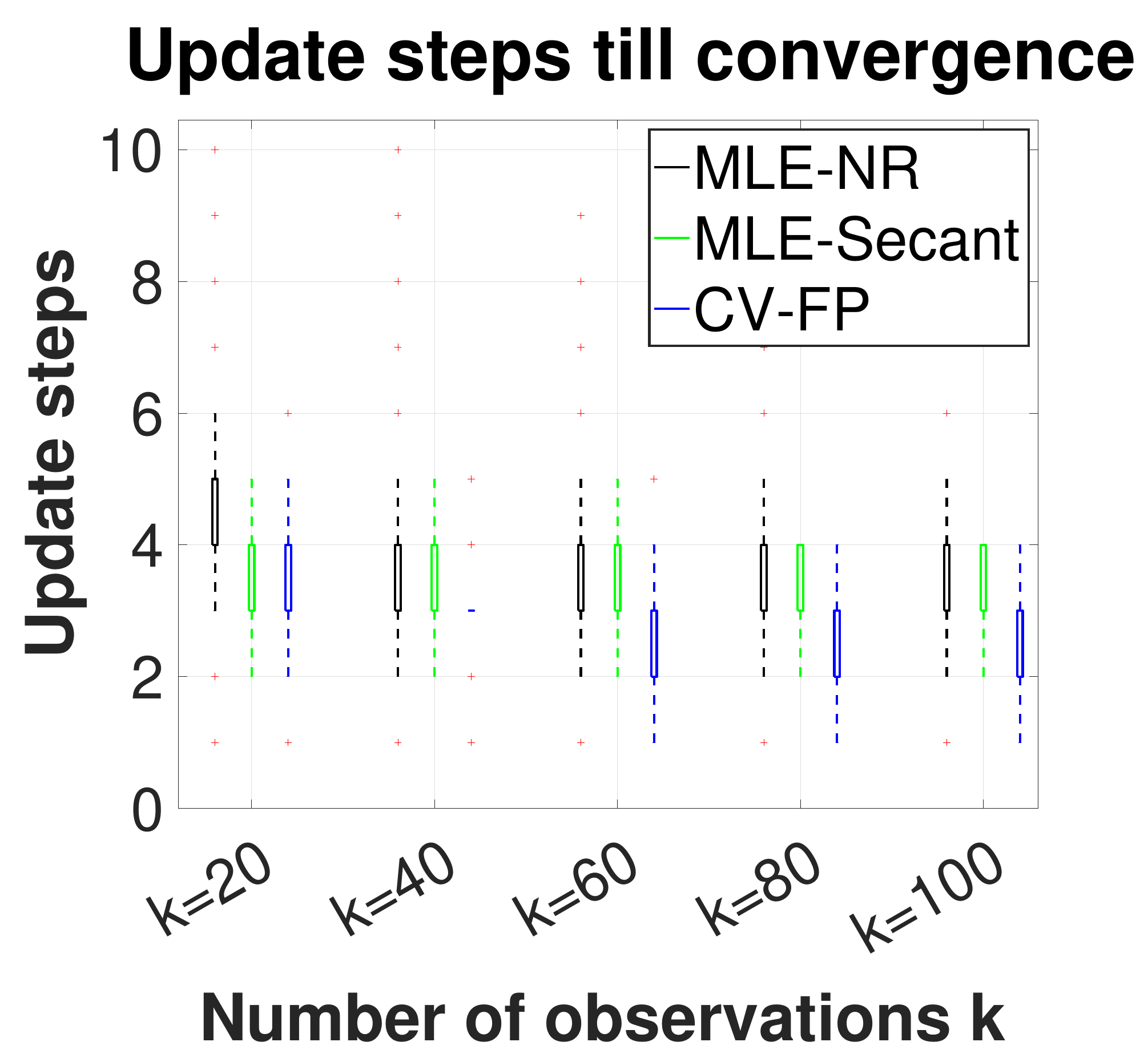}
        \vspace*{-0.6cm}
        \caption{\small Boxplots of update steps till convergence at $k = \{20,40,60,80,100\}$}
        \label{boxplot_update_steps_FH_main}
    \end{minipage}
\end{figure*}

Figure~\ref{single_MSE_FH_main} summarizes the mean square error (MSE) across $10,000$ iterations for various sketch sizes ($k$), considering $\|\vec{x}_1\|^2 = \|\vec{x}_2\|^2$ and $\theta = \frac{\pi}{12}$. A lower MSE indicates better performance. We display $\theta = \frac{\pi}{12}$ since similar vectors are usually of interest with more plots in Appendix~\ref{secA4}. From Figure~\ref{single_MSE_FH_main}, it is clear that the MSE of all five methods is lower than that of the baseline feature hashing estimate. {\tt  CV-FP} and {\tt MLE-Secant} show similar best performances compared to others. Unsurprisingly, out of the control variate implementions, {\tt CV-Init} is the worse, followed by {\tt CV-Emp}, with {\tt CV-FP} being the best, since the theoretical variance calculations are invalidated with empirical values, and only {\tt CV-FP} achieves the MLE variance reduction.

At smaller values of $k$, {\tt MLE-NR} has the worst performance, and after a threshold ($k \approx70$), its performance converges to the performance of {\tt MLE-Secant} and {\tt CV-FP}. Instead of error bars, we give boxplots in Figure~\ref{boxplot_outliers} for the respective estimates for these algorithms at sketch size $k = 20$. We can see the poor performance of {\tt MLE-NR} is mostly due to a higher number of outliers, since the interquartile range is comparable to other methods. Figure~\ref{boxplot_update_steps_FH_main} gives boxplots of update steps till convergence for {\tt MLE-NR}, {\tt MLE-Secant} and {\tt CV-FP}. {\tt CV-FP} generally takes fewer number of steps compared to the other two baselines, indicating the faster empirical convergence of  {\tt CV-FP}. Additional plots across all angles and norms, discussion of converging to the wrong estimate and convergence rates of a), b), and e) are in Appendix~\ref{CONVERGENCEDISCUSSION} and~\ref{secA4}.

{\bf Implication of our experiments on reproducibility:} Most papers involving MLEs and CVEs focus on theoretical work (proving variance bounds) and experiments (comparing against benchmarked algorithms), but do not explicitly mention how respective estimates from the CVE and MLE are found, as the assumption is that a user can implement  them on their own. We suspect this is why there are mixed empirical results when CVE and/or MLE are used.

{\it Our plots show there is a great difference in the methods used: from numerical error (Figure~\ref{single_MSE_FH_main}) to speed of convergence (Figure~\ref{boxplot_update_steps_FH_main}), if a suboptimal implementation of {\tt Newton Raphson} is used for the MLE, or computing $\hat{c}$ via {\tt CV-Init} (or {\tt CV-Emp}) via the CVE}. This gives  contradictory results based on the type of experiments. 

For example, identifying vectors with high similarity is a goal of similarity search, where the angle $\theta$ between the two vectors is small. A difference in root finding methods used can lead to different performance, e.g. {\tt Newton Raphson} has poor performance with small $k$ (Figure~\ref{boxplot_outliers}), but that does not mean the MLE is bad. Equivalently, researchers may use the median (which does not take into account outliers) to show better results (boxplots in Figure~\ref{boxplot_outliers} show {\tt MLE-NR} has similar (good) performance with {\tt MLE-Secant} and {\tt CV-FP} ignoring outliers); but is not realistic in a practical scenario where outliers are not known in advance.  Moreover, if sole error bars are used (based on computing standard deviations of the estimates), this may lead to a conclusion that an empirical MLE has ``higher error", even though the outliers increase standard deviations (Figure~\ref{boxplot_outliers}) and can be misleading. Figure~\ref{boxplot_outliers} shows there are more outliers in one direction for Newton Raphson. Using error bars implicitly assumes ``equal variation above and below the mean", and masks what is happening in practice.

Algorithm~\ref{FP_algo_format} ({\tt CV-FP}) can mitigate these issues.

\section{FINDING THE MLE VIA CVE WEIGHTS}
\label{sec_new_HUTCH}

Two heuristics arising from the construction of {\tt CV-FP} can be used to find the MLE, even if the conditions in Theorem~\ref{equal_variances_theorem_thingy} or Corollary~\ref{LC_corollary} are not satisfied, by numerically solving the fixed point iteration, or solving it analytically.

While we describe an application with respect to sketching algorithms, we believe our technique can be applied in different fields (e.g. Bayesian updates via finite mixture models) where once we find the CVE weights, we can repeatedly iterate over them, or find a closed form solution which gives the MLE.

We give an example for Hutchinson's estimator \citep{hutchinson1989stochastic}, which is used to estimate the trace of a matrix $\MatrixForm{M}$; but has found use in many applications, e.g. estimating the log determinant of a matrix \citep{cortinovis2021randomized} (using the fact that $\log(\det(\MatrixForm{M})) = \trace{\log(\MatrixForm{M})}$), counting number of triangles in graphs (using the fact that number of triangles in graph $G=\frac{1}{6} \trace{\MatrixForm{M}^3}$, where $\MatrixForm{M}$ is the adjacency matrix of $G$) \citep{avron2010counting}. Improved versions of Hutchinson's Estimator, such as Hutch++ \citep{meyer2021hutch++} also involve estimating the trace of a (different) matrix, and reducing the variance of the trace estimation in these cases is desirable. 

Let $\vec{r}_i \in \mathbb R^d$ be distributed $\mathcal N(\vec{0}_d,\MatrixForm{I}_d)$. Let $y_i = \vec{r}_i^T\MatrixForm{M}\vec{r}_i \in \mathbb R$ for $1 \leq i \leq k$. For a symmetric $\MatrixForm{M}_{d\times d}$ matrix, the estimator $ Y = \sum_{i=1}^k \frac{y_i}{k} = \sum_{i=1}^k \frac{\vec{r}_i^T\MatrixForm{M}\vec{r}_i}{k}$ estimates the trace of $\MatrixForm{M}$, with $\Var{Y} = \frac{2\trace{\MatrixForm{M}^2}}{k} = \frac{2\|\MatrixForm{M}\|_F^2}{k}$, where $\|\MatrixForm{M}\|_F^2$ is the Frobenius norm, defined by $\|\MatrixForm{M}\|_F^2 = \sum_{i,j}m_{ij}^2$. \cite{adams2018estimating} gives the following CVE for $\trace{\MatrixForm{M}}$ when there is a known diagonal matrix $\MatrixForm{B}$,
\begin{align*}
Z = \sum_{i=1}^k \frac{\vec{r}_i^T\MatrixForm{M}\vec{r}_i}{k} + c\left( \sum_{i=1}^k \frac{\vec{r}_i^T\MatrixForm{B}\vec{r}_i}{k} - \trace{\MatrixForm{B}} \right),
\end{align*}
and Lemma 4.1 in \cite{adams2018estimating} gives the optimal $\hat{c}$ to be
\begin{align*}
\hat{c} &= -\frac{
\Cov{\vec{r}^T\MatrixForm{M}\vec{r}}{\vec{r}^T\MatrixForm{B}\vec{r}}}{\Var{\vec{r}^T\MatrixForm{B}\vec{r}}} = -\frac{\trace{\MatrixForm{M}\MatrixForm{B}}}{\trace{\MatrixForm{B}^2}},
\end{align*}
with a variance reduction of $\frac{
2\trace{\MatrixForm{M}\MatrixForm{B}}^2}{k\trace{\MatrixForm{B}^2}}$.
The empirical covariance
$\Cov{\vec{r}^T\MatrixForm{M}\vec{r}}
{\vec{r}^T\MatrixForm{B}\vec{r}}$
and variance
$\Var{\vec{r}^T\MatrixForm{B}\vec{r}}$
may be computed from the observed data, since
$\trace{\MatrixForm{M}\MatrixForm{B}}$ is not known (so if empirical covariances and variances are used, the variance reduction is not theoretically correct).

{\it Given the work in \cite{adams2018estimating}, a natural question would be to ask if a MLE could be found, which might yield a lower variance reduction. However, $Y$ is distributed as a linear combination of $\chi^2$ variables, and finding an MLE is non-trivial.}

Our first heuristic is to {\it find linearly independent sufficient statistics} to construct the CVE  and find optimal CV corrections $\hat{c}_i$. We do so for each diagonal term $\MatrixForm{M}_{tt}$ in $\trace{\MatrixForm{M}}$. The new CVE is given by $\Expt{ \vec{r}^T\MatrixForm{M}\vec{r} + \sum_{i=1}^d c_i\left(  \sum_{s=1}^d r_{s} \left(\MatrixForm{B}\vec{r}\right)_s - b_{ii}\right)}$. Detailed derivations for this heuristic is described in the proof of Theorem~\ref{diag_CVE_thm} in Appendix~\ref{hutch_expt_appendix}.

The second heuristic is to {\it treat the CVE as a fixed point iteration (and perhaps find a closed form solution)}. In the case of Hutchinson's estimator, there exists a closed form solution, leading to the estimator $\widehat{\trace{\MatrixForm{M}}} = \sum_{s=1}^d \frac{b_{ss}r_{s}(\MatrixForm{M}\vec{r})_s}{r_s(\MatrixForm{B}\vec{r})_s}$ with overall approximate variance reduction given by $ \frac{2}{k} \sum_{i=1}^d m_{ii}^2$. We remark that even though the conditions for Theorem~\ref{equal_variances_theorem_thingy} and Corollary~\ref{LC_corollary} are not satisfied, we can do standard variance analysis on any new estimator we find. Details are on Pages~\pageref{heur2_appendix} and \pageref{heur2_appendix2} in Appendix \ref{hutch_expt_appendix}.

The estimator we find under our two heuristics is Bekas' diagonal estimator \citep{bekas2007estimator} (when Normal random variables are used). Hence the answer to the question proposed regarding the MLE would be: {\it The computations for the MLE could be bypassed via the CVE (and in fact, has the same form of \cite{bekas2007estimator}), and the MLE would be a biased estimator.} We remark that the original paper of \cite{bekas2007estimator} does not give any variance analysis of their estimator (or further intuition on constructing the estimator), though \cite{baston2022stochastic} builds on their work to construct probability bounds, so our results in Appendix~\ref{hutch_expt_appendix}  can be thought of as a mini-technical report complementing both papers.

To summarize, even if conditions do not hold, numerically solving the fixed point iteration, or finding its closed form analytically can give new estimators, where its variance (and other properties) can be computed.  With respect to our field in sketching algorithms, our heuristic can lead to re-examining past work on estimators to give alternate proofs for variance reduction (and using these variances in probability bounds), and future work on understanding new estimators along similar lines, e.g. applying our work to find better estimators for estimating Euclidean distances under composition of functions \citep{leroux2024euclidean}, or for the Jaccard similarity under circulant permutations \citep{pmlr-v162-li22m}. In both these cases, computing expectations and covariances (hence CVE weights) are much easier than analytically deriving the MLE involved.

\section{CONCLUSION, BROADER IMPACT, AND FUTURE WORK}
\label{conclusionsection}
We have shown a connection in exponential families between $\MatrixForm{V}_{ij} = \frac{\partial \mu_i}{\partial \eta_j}$ and $\MatrixForm{V}^{-1}_{ij} = \frac{\partial \eta_i}{\partial \mu_j}$ for CVE and MLE respectively, with {\tt CV-FP} converging to the MLE, leading to two important results: {\it reproducibility} between algorithms using MLE / CVE, and {\it finding MLEs when the CV weights are known}. We briefly mention the limitations of our work, two broad impacts our work might have, and two directions for future research.

{\bf Limitations:} Our work is limited to exponential families where each $\mu_i$ is a linear combination of $\nu_i$ (and vice versa), and our experiments only look at the bivariate Normal distribution. However, we expect the performance of {\tt CV-FP} to generalize across distributions from exponential families under these conditions. 

{\bf Unify Existing MLE/CVE}: Showing that $\MatrixForm{V}^{\mathrm{MLE}} = \MatrixForm{V}^{\mathrm{CVE}}$ gives an algorithm ({\tt CV-FP}) which is numerically stable and faster via experiments for the Normal distribution. {\tt CV-FP} has the potential to lead to better reproducibility of estimates from existing MLE/CVE, and ensure that evaluation of future estimators against benchmark MLE/CVE is fair when comparing against accuracy and speed.

{\bf Intuition For Existing / New Estimators:} The heuristic of {\it finding linearly independent sufficient statistics} to construct a CVE, and {\it treating the CVE as a fixed point iteration} gives an estimator, even if conditions for Theorem~\ref{equal_variances_theorem_thingy} are not met. The variance reduction from the CVE also gives some intuition of how the initial data can affect variance reduction. 

{\bf Future Work (Extending {\tt CV-FP}):} $\MatrixForm{V}^{\mathrm{MLE}} = \MatrixForm{V}^{\mathrm{CVE}}$ holds when $\nu$s are linear combinations of $\mu$s. There is analogous future work to show the relationship between $\MatrixForm{V}^{\mathrm{MLE}}$ and $ \MatrixForm{V}^{\mathrm{CVE}}$ and convergence of {\tt CV-FP} when this condition does not hold. We gave an example of the $\chi^2$ distribution in Section~\ref{sec_new_HUTCH} where we derived Bekas' estimator a different way, which hints that $\MatrixForm{V}^{\mathrm{MLE}} = \MatrixForm{V}^{\mathrm{CVE}}$  when $\nu$s are non linear combinations of $\mu$s.

{\bf Future Work (Convergence of {\tt CV-FP}):} Simulations in Appendix~\ref{remarks_convergence_rates_rp} show {\tt CV-FP} could either be linear or superlinear, with a small constant $C$. Future research could prove this convergence and give a bound on $C$ for various probability distributions.

{\bf Acknowledgments:} The authors would like to thank the many reviewers who contributed helpful suggestions, the Bucknell Physics and Astronomy Summer Research program, and Pamela Gorkin, Peter McNamara, Sanjay Dharmavaram, and Sara Stoudt for insightful conversations. 
\bibliographystyle{apalike}
\bibliography{biblio}
\newpage

\section*{Checklist}

\begin{enumerate}

  \item For all models and algorithms presented, check if you include:
  \begin{enumerate}
    \item A clear description of the mathematical setting, assumptions, algorithm, and/or model. 

    Yes. We give a clear description of exponential families, MLEs, and CVEs in Section~\ref{Sec: Intro}, Section~\ref{Sec: main results}, Section~\ref{section_equiv}, and Appendix~\ref{secA3}. We also know some readers may need a refresher on exponential families are, so we also recap in Appendix~\ref{secA1}.
    \item An analysis of the properties and complexity (time, space, sample size) of any algorithm. 

    Yes. We give an analysis of convergence in Appendix~\ref{secA3}, and experiments (including convergence properties) in Appendix~\ref{secA4}
    \item (Optional) Anonymized source code, with specification of all dependencies, including external libraries. Yes, see supplementary material.
  \end{enumerate}

  \item For any theoretical claim, check if you include:
  \begin{enumerate}
    \item Statements of the full set of assumptions of all theoretical results. Yes, we give the full set of assumptions for Theorem~\ref{equal_variances_theorem_thingy}.
    \item Complete proofs of all theoretical results. Yes. All proofs are in Appendix~\ref{secA3}.
    \item Clear explanations of any assumptions. Yes, we give clear explanations in Section~\ref{Sec: main results}, Section~\ref{section_equiv}, and Appendix~\ref{secA3}. 
  \end{enumerate}

  \item For all figures and tables that present empirical results, check if you include:
  \begin{enumerate}
    \item The code, data, and instructions needed to reproduce the main experimental results (either in the supplemental material or as a URL). Yes, the code is in the supplementary material. In fact, the instructions are on Page~\pageref{secA4}.
    \item All the training details (e.g., data splits, hyperparameters, how they were chosen). Yes, they are described in Section~\ref{sec_expt} and Appendix~\ref{secA4}.
    \item A clear definition of the specific measure or statistics and error bars (e.g., with respect to the random seed after running experiments multiple times). Yes, they are described in Section~\ref{sec_expt} and Appendix~\ref{secA4}.
    \item A description of the computing infrastructure used. (e.g., type of GPUs, internal cluster, or cloud provider). Yes, they are described in Section~\ref{sec_expt} and Appendix~\ref{secA4}.
  \end{enumerate}

  \item If you are using existing assets (e.g., code, data, models) or curating/releasing new assets, check if you include:
  \begin{enumerate}
    \item Citations of the creator If your work uses existing assets. Yes, we cite authors of algorithms we used.
    \item The license information of the assets, if applicable. Not applicable.
    \item New assets either in the supplemental material or as a URL, if applicable. Not applicable.
    \item Information about consent from data providers/curators. Not applicable.
    \item Discussion of sensible content if applicable, e.g., personally identifiable information or offensive content. Not applicable.
  \end{enumerate}

  \item If you used crowdsourcing or conducted research with human subjects, check if you include:
  \begin{enumerate}
    \item The full text of instructions given to participants and screenshots. Not applicable.
    \item Descriptions of potential participant risks, with links to Institutional Review Board (IRB) approvals if applicable. Not applicable.
    \item The estimated hourly wage paid to participants and the total amount spent on participant compensation. Not applicable.
  \end{enumerate}

\end{enumerate}

\clearpage
\appendix
\thispagestyle{empty}

\onecolumn
\aistatstitle{Supplementary Materials}

\section{EXPONENTIAL FAMILY MODEL WITH BIVARIATE NORMAL}\label{secA1}

We give an example of the notation  in Sections \ref{subsection_efamily_review}-\ref{section_CV} using the zero-mean bivariate Normal. We use this distribution as several authors \citep{li2006improving,kang2021correlations,pratap2021variance,verma2022variance} use either MLE or CVE in the context of feature hashing or similarity search, when {\bf the underlying distribution is the zero-mean bivariate (or multivariate) Normal}.

The probability density function of the bivariate Normal with zero mean for $n$ observations is
\begin{equation}
\label{pdf for BVN}
    p(\xset | \Sigma)= \frac{1}{(2\pi)^n|\Sigma|^{n/2}} \exp\left\{\sum_{i=1}^n -\frac{1}{2} \begin{pmatrix}
        x_{1_i} \\ x_{2_i}
    \end{pmatrix}^T \Sigma^{-1} \begin{pmatrix}
        x_{1_i} \\ x_{2_i}
    \end{pmatrix}\right\}.
\end{equation}

The parameters of the distribution are given by $\Sigma = \left(\begin{array}{c c}
\sigma_{11} & \sigma_{12} \\
\sigma_{21} & \sigma_{22}
\end{array}\right)$, and we write $\vec{\nu} \equiv  (\nu_1,\nu_2,\nu_3) = ( \sigma_{11},\sigma_{22}, \sigma_{12})$ noting that $\sigma_{12}=\sigma_{21}$. Equation~\eqref{pdf for BVN} can be expressed in exponential family form as
\begin{align*}
p(\xset | \Sigma) & = \frac{\exp \left\{-\frac{\sum_{i=1}^n \left( x_{1_i}^2\sigma_{22}+x_{2_i}^2\sigma_{11} - 2x_{1_i}x_{2_i}\sigma_{12}\right)}{2(\sigma_{11}\sigma_{22}-\sigma_{12}^2)} \right\}}{(2\pi)^n|\Sigma|^{n/2}} \\
 & = \frac{\exp \left\{
    -\frac{\sum_{i=1}^n x_{1_i}^2\sigma_{22}+x_{2_i}^2\sigma_{11} - 2x_{1_i}x_{2_i}\sigma_{12}}{2(\sigma_{11}\sigma_{22}-\sigma_{12}^2)}
    \right\}}{\exp \left\{\frac{n}{2} \log (\sigma_{11}\sigma_{22}-\sigma_{12}^2) \right\}} \frac{1}{(2\pi)^\frac{n}{2}}\\
& = \frac{ \exp \left\{ 
    \colorEta{
    \left(\begin{array}{c c c c c}
        \frac{-\sigma_{22}}{2(\sigma_{11}\sigma_{22}-\sigma_{12}^2)}\\
        \frac{-\sigma_{11}}{2(\sigma_{11}\sigma_{22}-\sigma_{12}^2)}\\
        \frac{\sigma_{12}}{(\sigma_{11}\sigma_{22}-\sigma_{12}^2)}
    \end{array}\right)^T
    }
    \colorSuffStat{
    \left(\begin{matrix}
        \sum_{i=1}^n x_{1_i}^2\\
        \sum_{i=1}^n x_{2_i}^2\\
        \sum_{i=1}^n x_{1_i}x_{2_i}\\
    \end{matrix}\right)
    }
    \right\}}
    { \exp\left\{ 
    \colorPsi{
    \frac{n}{2} \left( \log (\sigma_{11}\sigma_{22}-\sigma_{12}^2) 
    \right)
    }
    \right\} } \frac{1}{(2\pi)^{\frac{n}{2}}}\\
& = \frac{ \exp \left\{ n
    \colorEta{
    \left(\begin{array}{c c c c c}
        \frac{-\sigma_{22}}{2(\sigma_{11}\sigma_{22}-\sigma_{12}^2)}\\
        \frac{-\sigma_{11}}{2(\sigma_{11}\sigma_{22}-\sigma_{12}^2)}\\
        \frac{\sigma_{12}}{(\sigma_{11}\sigma_{22}-\sigma_{12}^2)}
    \end{array}\right)^T
    }
    \colorSuffStat{
    \left(\begin{matrix}
        \frac{\sum_{i=1}^n x_{1_i}^2}{n}\\
        \frac{\sum_{i=1}^n x_{2_i}^2}{n}\\
        \frac{\sum_{i=1}^n x_{1_i}x_{2_i}}{n}\\
    \end{matrix}\right)
    }
    \right\}}
    { \exp\left\{ 
    n\colorPsi{
     \left(
    \frac{1}{2} \log (\sigma_{11}\sigma_{22}-\sigma_{12}^2) 
    \right)
    }
    \right\} } \frac{1}{(2\pi)^{\frac{n}{2}}} \\
 &= \frac{\exp\left\{n \colorEta{\vec{\eta}^T}\colorSuffStat{\vec{y}} \right\}}{\exp\left\{n \colorPsi{\psi(\vec{\eta})}\right\}}g\left(\{\vec{x}_i\}_{i=1}^n\right)
\end{align*}
where
\begin{align*}
    \colorEta{\Vec{\eta}^T} &=\colorEta{\left(\begin{array}{c c c}
        \displaystyle\frac{-\sigma_{22}}{2(\sigma_{11}\sigma_{22}-\sigma_{12}^2)},
        &\displaystyle\frac{-\sigma_{11}}{2(\sigma_{11}\sigma_{22}-\sigma_{12}^2)},
        &\displaystyle\frac{\sigma_{12}}{(\sigma_{11}\sigma_{22}-\sigma_{12}^2)}
    \end{array}\right)} \\
    \colorSuffStat{\vec{y}^T} &= \colorSuffStat{
    \left(\begin{matrix}
        \displaystyle\frac{\sum_{i=1}^n x_{1_i}^2}{n}~,~
        &\displaystyle\frac{\sum_{i=1}^n x_{2_i}^2}{n}~,~
        &\displaystyle\frac{\sum_{i=1}^n x_{1_i}x_{2_i}}{n}
    \end{matrix}\right)
    } \\
\vec{\nu}^T & = \left(\sigma_{11}~~,~~\sigma_{22}~~,~~\sigma_{12}   \right) \\
 & = \left(-\frac{2 \eta_2}{4\eta_1\eta_2 - \eta_3^2},~
  -\frac{2 \eta_1}{4\eta_1\eta_2 - \eta_3^2},~\frac{ \eta_3}{4\eta_1\eta_2 - \eta_3^2}
                     \right) \\
\colorPsi{\psi(\vec{\eta})} & = \colorPsi{-\frac{1}{2}\log\left({4 \eta_1 \eta_2 - \eta_3^2} \right) }.
\end{align*}
In practice, we do not need to express $\colorPsi{\psi(\colorEta{\vec{\eta})}}$ in terms of $\colorEta{\vec{\eta}}$ as $\vec{\mu}$ and $\MatrixForm{V}$ can be computed via the multivariate chain rule using partial derivatives since
\begin{align*}
\frac{\partial \colorPsi{\psi(\colorEta{\vec{\eta}})}}{\partial \colorEta{\eta_i}} = \sum_{j=1}^p \frac{\partial \colorPsi{\psi(\colorEta{\vec{\eta}})}}{\partial {\nu_j}} \frac{\partial \nu_j}{\partial \colorEta{\eta_i}}~\forall~1 \leq i \leq p.
\end{align*}

The mean vector $\vec{\mu}$  is computed via Theorem~\ref{mean_and_var_exponential_family_thm} with 
\begin{align*}
\frac{\partial \colorPsi{\psi(\colorEta{\vec{\eta}})}}{\partial \colorEta{\eta_1}} & = -\frac{2 \eta_2}{4\eta_1\eta_2 - \eta_3^2} =  \sigma_{11} \\
\frac{\partial \colorPsi{\psi(\colorEta{\vec{\eta}})}}{\partial \colorEta{\eta_2}} & = -\frac{2 \eta_1}{4\eta_1\eta_2 - \eta_3^2} =  \sigma_{22} \\
\frac{\partial \colorPsi{\psi(\colorEta{\vec{\eta}})}}{\partial \colorEta{\eta_3}} & = \phantom{-}\frac{ \eta_3}{4\eta_1\eta_2 - \eta_3^2} =  \sigma_{12} 
\end{align*}
and they match the expectations of $\colorSuffStat{\vec{y}}$.  $V_n$ is analogously computed via Theorem~\ref{mean_and_var_exponential_family_thm} where
\begin{align*}
\MatrixForm{V}_n = \frac{1}{n}\left(\begin{array}{c c c c c}
\frac{\partial^2 \colorPsi{\psi(\colorEta{\vec{\eta}})}}{\partial \colorEta{\eta_1}^2} & \frac{\partial^2 \colorPsi{\psi(\colorEta{\vec{\eta}})}}{\partial \colorEta{\eta_1} \partial \colorEta{\eta_2}} & \frac{\partial^2 \colorPsi{\psi(\colorEta{\vec{\eta}})}}{\partial \colorEta{\eta_1} \partial \colorEta{\eta_3}}                \\
\frac{\partial^2 \colorPsi{\psi(\colorEta{\vec{\eta}})}}{\partial \colorEta{\eta_2} \partial \colorEta{\eta_1}} & \frac{\partial^2 \colorPsi{\psi(\colorEta{\vec{\eta}})}}{\partial \colorEta{\eta_2}^2}  & \frac{\partial^2 \colorPsi{\psi(\colorEta{\vec{\eta}})}}{\partial \colorEta{\eta_2} \partial \colorEta{\eta_3}}           \\
\frac{\partial^2 \colorPsi{\psi(\colorEta{\vec{\eta}})}}{\partial \colorEta{\eta_3} \partial \colorEta{\eta_1}} & \frac{\partial^2 \colorPsi{\psi(\colorEta{\vec{\eta}})}}{\partial \colorEta{\eta_3} \partial \colorEta{\eta_2}}  & \frac{\partial^2 \colorPsi{\psi(\colorEta{\vec{\eta}})}}{\partial \colorEta{\eta_3}^2}
\end{array}\right) = \frac{1}{n}\left(\begin{array}{c c c c c}
2\sigma_{11}^2 & 2 \sigma_{12}^2 & 2\sigma_{11}\sigma_{12}               \\
2 \sigma_{12}^2 & 2 \sigma_{22}^2  & 2\sigma_{22}\sigma_{12}            \\
2\sigma_{11}\sigma_{12} & 2\sigma_{22}\sigma_{12}  & \sigma_{11}\sigma_{22} +\sigma_{12}^2
\end{array}\right),
\end{align*}

which matches the variance-covariances between $\colorSuffStat{\vec{y}}$.

The MLEs of $\colorEta{\vec{\eta}}$ are given by $\colorSuffStat{\vec{y}}$ from the score function. To find the MLE of $\vec{\nu}$, we can use the invariance properties of MLE with the estimates $\colorEta{\hat{\eta}}$. In the case of the bivariate Normal with zero mean, the MLE of $\colorEta{\vec{\eta}}$ is exactly the same as the MLE of $\vec{\nu}$ (up to some permutation of its entries).

Suppose we know $\sigma_{11}, \sigma_{22}$, and want an MLE for $\sigma_{12}$. We compute the score function given by Equation~\eqref{curved_ll2} to get
\begin{align}
\frac{\partial l(\xset|\vec{\eta}(\sigma_{12}))}{\partial \sigma_{12}} & = n \left(\frac{\partial \vec{\eta}(\sigma_{12})}{\partial \sigma_{12}}\right)^T \left(\colorSuffStat{\vec{y}} - \vec{\mu}  \right) \notag \\
 & = n \left(\begin{array}{c}
 \frac{-\sigma_{22}\sigma_{12}}{(\sigma_{11}\sigma_{22} - \sigma_{12}^2)^2} \\
 \frac{-\sigma_{11}\sigma_{12}}{(\sigma_{11}\sigma_{22} - \sigma_{12}^2)^2}\\
 \frac{\sigma_{11}\sigma_{12} + \sigma_{12}^2}{(\sigma_{11}\sigma_{22} - \sigma_{12}^2)^2}
 \end{array}\right)^T\left(\colorSuffStat{\left(\begin{array}{c}
        \frac{\sum_{i=1}^n x_{1_i}^2}{n}\\
        \frac{\sum_{i=1}^n x_{2_i}^2}{n}\\
        \frac{\sum_{i=1}^n x_{1_i}x_{2_i}}{n}\\
 \end{array}\right)} - \left(\begin{array}{c}
 \sigma_{11} \\
 \sigma_{22} \\
 \sigma_{12}
 \end{array}\right)\right). \label{bivariate_normal_eg_score_mle_fn}
\end{align}

Setting the score function in Equation~\eqref{bivariate_normal_eg_score_mle_fn} to zero 
leads to
\begin{align}
-\sigma_{22}\sigma_{12}\left(\frac{\sum_{i=1}^n x_{1_i}^2}{n} - \sigma_{11}    \right) - \sigma_{11}\sigma_{12}\left(      \frac{\sum_{i=1}^n x_{2_i}^2}{n} - \sigma_{22} \right) + (\sigma_{11}\sigma_{12} + \sigma_{12}^2) \left(\frac{\sum_{i=1}^n x_{1_i}x_{2_i}}{n} - \sigma_{12} \right) = 0 \label{MLE_cubic_eqn_normal}
\end{align}

where $\frac{n}{\sigma_{11}\sigma_{22}-\sigma_{12}^2}$ is factored out, since $\mathrm{det}(\Sigma) = \sigma_{11}\sigma_{22}-\sigma_{12}^2 > 0$, due to positive-definiteness of the covariance matrix.

The LHS of Equation~\ref{MLE_cubic_eqn_normal} is a cubic in $\sigma_{12}$, and can be solved via root-finding methods to get $\hat{\sigma}_{12}$.

Using the differential relationship in Theorem~\ref{diff_rs_thm}, we have $\left(\frac{\partial \mu_1}{\partial \sigma_{12}}, \hdots, \frac{\partial \mu_3}{\partial \sigma_{12}}\right)^T = \frac{1}{n}(0,0,1)^T = \MatrixForm{V}_n \frac{\partial \vec{\eta}(\sigma_{12})}{\partial \sigma_{12}}$. From Equation~\eqref{curved_ll2}, the Fisher Information with respect to $\sigma_{12}$ is
\begin{align*}
i_{{\sigma}_{12}}\ & = \Expt{n^2 \frac{\partial \vec{\eta}(\sigma_{12})}{\partial \sigma_{12}} \left( \colorSuffStat{\vec{y}} - \frac{\partial \psi(\vec{\eta})}{\partial \vec{\eta}}  \right)\left( \colorSuffStat{\vec{y}} - \frac{\partial \psi(\vec{\eta})}{\partial \vec{\eta}}  \right)^T\frac{\partial \vec{\eta}(\sigma_{12})}{\partial \sigma_{12}}} \\
& = n^2 \frac{\partial \vec{\eta}(\sigma_{12})}{\partial \sigma_{12}} \MatrixForm{V} \frac{\partial \vec{\eta}(\sigma_{12})}{\partial \sigma_{12}} =n \frac{\partial \eta_3}{\partial \sigma_{12}}.
\end{align*}

This implies the distribution of the MLE for $\hat{\sigma}_{12}$ converges to $\mathcal N\left(\sigma_{12}, \frac{1}{n  \frac{\partial \eta_3}{\partial \sigma_{12}}}\right)$, i.e. the asymptotic variance of $\hat{\sigma}_{12}$ is $\frac{(\sigma_{11}\sigma_{12}-\sigma_{12}^2)^2}{n(\sigma_{11}\sigma_{12} + \sigma_{12}^2)}$. Now suppose we know $\sigma_{11}, \sigma_{22}$, and want a CVE for $\sigma_{12}$. The CVE given by the sufficient statistics is
\begin{align}
\hat{\sigma}_{12} & = \Expt{\colorSuffStat{\frac{\sum_{i=1}^n x_{1i}x_{2i}}{n}} + c_1\left(\colorSuffStat{\frac{\sum_{i=1}^n x_{1i}^2}{n}}  - \sigma_{11}\right) + c_2\left(\colorSuffStat{\frac{\sum_{i=1}^n x_{2i}^2}{n}}   - \sigma_{22}\right)} \notag \\ 
& = \Expt{y_3 + \sum_{i=1}^2 c_i(\colorSuffStat{y}_i - \mu_i)}. \label{cv_bivariate_normal_eg_eqn1}
\end{align}
The variance of $\hat{\sigma}_{12}$ in Equation~\eqref{cv_bivariate_normal_eg_eqn1} is given by
\begin{align}
\Var{\colorSuffStat{y_3}} + \sum_{i=1}^2 c_i^2 \Var{\colorSuffStat{y_i}} +2\sum_{i=1}^2c_i\Cov{\colorSuffStat{{y}_i}}{\colorSuffStat{{y}_3}} + 2 \sum_{i>j }^2 c_ic_j \Cov{\colorSuffStat{y_i}}{\colorSuffStat{y_j}}, \notag
\end{align}
and the optimal value of $\hat{c}_i$ given by $
\widetilde{\MatrixForm{V}} \left(\begin{array}{c}
c_1 \\
c_2 \\
\end{array}\right) = -\vec{d} $ with $\widetilde{\MatrixForm{V}} = \left(\begin{array}{c c c c}
\Var{\colorSuffStat{y_1}} & \Cov{\colorSuffStat{y_1}}{\colorSuffStat{y_2}}  \\
\Cov{\colorSuffStat{y_2}}{\colorSuffStat{y_1}} & \Var{\colorSuffStat{y_2}}
\end{array}\right)$ and $\vec{d} = \left(\begin{array}{c}
\Cov{\colorSuffStat{y_1}}{\colorSuffStat{y}_3} \\
\Cov{\colorSuffStat{y_2}}{\colorSuffStat{y}_3} 
\end{array}\right)$. Solving for $\hat{c}_1$ and $\hat{c}_2$ gives
\begin{align*}
\hat{c}_1 = -\frac{\sigma_{12} \sigma_{22}}{\sigma_{12}^2 + \sigma_{11}\sigma_{22}}~~~~~\hat{c}_2 = -\frac{\sigma_{12} \sigma_{11}}{\sigma_{12}^2 + \sigma_{11}\sigma_{22}}.
\end{align*}
and thus the CVE is given by
{\small
\begin{align}
\hat{\sigma}_{12} = \Expt{\colorSuffStat{\frac{\sum_{i=1}^n x_{1i}x_{2i}}{n}} - \frac{\sigma_{12} \sigma_{22}}{\sigma_{12}^2 + \sigma_{11}\sigma_{22}}\left( \colorSuffStat{\frac{\sum_{i=1}^n x_{1i}^2}{n}}  - \sigma_{11} \right) - \frac{\sigma_{12} \sigma_{11}}{\sigma_{12}^2 + \sigma_{11}\sigma_{22}}\left( \colorSuffStat{\frac{\sum_{i=1}^n x_{2i}^2}{n}}  - \sigma_{22} \right)} \label{CV_eqn_bivar_norm}
\end{align}
}

The variance-covariance matrix $\MatrixForm{V}_n$ can be partitioned as
\begin{align}
\MatrixForm{V}_n & = \left(\begin{array}{c | c}
\widetilde{\MatrixForm{V}} & \vec{d}^T \\ \hline
\vec{d} & \Var{\colorSuffStat{y_3}}
\end{array}\right), \notag
\end{align}
hence the variance given by the CVE is
\begin{align}
\Var{\hat{\sigma}_{12}} & = \Var{\colorSuffStat{y_3}} - \vec{d}^T\widetilde{\MatrixForm{V}}^{-1}\vec{d} \notag \\
 & = \MatrixForm{V}_{3,3} - \MatrixForm{V}_{3,1:2}(\MatrixForm{V}_{1:2,1:2})^{-1}\MatrixForm{V}_{1:2,3} \notag \\
 & = \frac{(\sigma_{11}\sigma_{22}-\sigma_{12}^2)^2}{n(\sigma_{11}\sigma_{22} + \sigma_{12}^2)} \label{control_variate_bivariate_variance_eg}
\end{align}

\label{observations_in_appendix}
We make the following observations. 
\begin{enumerate}
\item The asymptotic variance given by the MLE of $\hat{\sigma}_{12}$ is $\frac{(\sigma_{11}\sigma_{12}-\sigma_{12}^2)^2}{n(\sigma_{11}\sigma_{12} + \sigma_{12}^2)}$. However,  Equation~\eqref{control_variate_bivariate_variance_eg} also gives an identical variance.

\item The CVE for $\hat{\sigma}_{12}$ in Equation~\eqref{CV_eqn_bivar_norm} contains $\sigma_{12}$ in the optimal $\hat{c}_1$, $\hat{c}_2$, which we do not have, since the goal was to estimate $\sigma_{12}$. Substituting the empirical values of $\hat{\sigma}_{12}$ from the data would invalidate the variance calculations for the CVE for Equation~\eqref{control_variate_bivariate_variance_eg}. 

\item This naturally leads to a question: {\it How can the asymptotic variance for the MLE be ``correct" when $n$ is large, but the same variance for the CVE be ``wrong" (invalid) since $\hat{c}_1, \hat{c}_2$ uses the empirical $\hat{\sigma}_{12}$?}

\item We observe  that if the expectation was removed from Equation~\eqref{CV_eqn_bivar_norm}, and we let
{\small
\begin{align}
{\sigma}_{12} = {\colorSuffStat{\frac{\sum_{i=1}^n x_{1i}x_{2i}}{n}} - \frac{\sigma_{12} \sigma_{22}}{\sigma_{12}^2 + \sigma_{11}\sigma_{22}}\left( \colorSuffStat{\frac{\sum_{i=1}^n x_{1i}^2}{n}}  - \sigma_{11} \right) - \frac{\sigma_{12} \sigma_{11}}{\sigma_{12}^2 + \sigma_{11}\sigma_{22}}\left( \colorSuffStat{\frac{\sum_{i=1}^n x_{2i}^2}{n}}  - \sigma_{22} \right)}, \label{CV_no_expt}
\end{align}
}

collecting terms on one side of Equation~\ref{CV_no_expt} yields the same expression as Equation~\ref{MLE_cubic_eqn_normal}, up to scalar  multiples. 

\item Observation 1 seems to suggest that we can somehow make use of the CVE to get an estimate of $\hat{\sigma}_{12}$, even if $\hat{c}_1, \hat{c}_2$ contain $\sigma_{12}$ since the MLE and CVE have the same (asymptotic) variance for $\hat{\sigma}_{12}$. Observation 4 somehow suggests a fixed point iteration (or similar) given the same terms in Equation~\ref{MLE_cubic_eqn_normal} and Equation~\ref{CV_no_expt}, which motivates our  Algorithm~\ref{FP_algo_format}.

\end{enumerate}

In the main text of the paper, we will give conditions for the equivalence of variances for the estimators under MLE and CVE. We show that the structure  of the CVE and MLE leads to a fixed point algorithm which converges to the required estimates.

ionion
\newpage

\section{PROOFS}\label{secA3}

\subsection{PRELIMINARY LEMMA AND THEOREM}

We give a lemma and a theorem which we use in the main text. 

\begin{lemma} \label{lemma_invertible_jacobian}
Suppose we have an invertible map between $\vec{\eta} \in \mathbb R^p$ and $\vec{\mu} \in \mathbb R^p$, and the Jacobian is given by $\MatrixForm{V} = \left(\begin{array}{c c c}
\frac{\partial \mu_1}{\partial \eta_1} & \hdots & \frac{\partial \mu_1}{\partial \eta_p} \\
\vdots & \ddots & \vdots \\
\frac{\partial \mu_p}{\partial \eta_1} & \hdots & \frac{\partial \mu_p}{\partial \eta_p} \\
\end{array}
\right)$. Then $\MatrixForm{V}^{-1} = \left(\begin{array}{c c c}
\frac{\partial \eta_1}{\partial \mu_1} & \hdots & \frac{\partial \eta_1}{\partial \mu_p} \\
\vdots & \ddots & \vdots \\
\frac{\partial \eta_p}{\partial \mu_1} & \hdots & \frac{\partial \eta_p}{\partial \mu_p} \\
\end{array}
\right)$.
\end{lemma}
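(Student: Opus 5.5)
The plan is to treat this as the inverse function theorem applied to the map $\vec{\eta}\mapsto\vec{\mu}(\vec{\eta})=\nabla\psi(\vec{\eta})$. Write $F:\vec{\eta}\mapsto\vec{\mu}$ and $G=F^{-1}:\vec{\mu}\mapsto\vec{\eta}$. By hypothesis both maps are defined and mutually inverse, so $G(F(\vec{\eta}))=\vec{\eta}$ and $F(G(\vec{\mu}))=\vec{\mu}$ on their domains. In the exponential family setting of Theorem~\ref{mean_and_var_exponential_family_thm}, $F$ is $C^1$ because $\psi$ is twice continuously differentiable. Its Jacobian is $\MatrixForm{V}=\nabla^2\psi(\vec{\eta})$, which is positive definite and hence invertible at the point under consideration.

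The first step is to get differentiability of $G$. I would invoke the inverse function theorem: since $F$ is $C^1$ with invertible Jacobian at $\vec{\eta}$, it has a local $C^1$ inverse near $\vec{\mu}(\vec{\eta})$. Because $F$ is globally injective by assumption, this local inverse must agree with $G$ there. This is the only point needing care. Without it, the entries $\partial\eta_i/\partial\mu_j$ need not exist, so the hypothesis $\MatrixForm{V}\succ 0$ (or at least nonsingular) has to be used here.

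The second step is to apply the multivariate chain rule to $G\circ F=\mathrm{id}$. This gives $\sum_{k=1}^p \frac{\partial \eta_i}{\partial \mu_k}\frac{\partial \mu_k}{\partial \eta_j}=\delta_{ij}$, that is, $J_G\,\MatrixForm{V}=\MatrixForm{I}_p$, where $J_G=\left[\partial\eta_i/\partial\mu_j\right]_{i,j}$. Symmetrically, $F\circ G=\mathrm{id}$ gives $\MatrixForm{V}J_G=\MatrixForm{I}_p$. Hence $J_G=\MatrixForm{V}^{-1}$, which is the claim. This is exactly the differential relationship $d\eta=\MatrixForm{V}^{-1}d\mu$ recorded in Theorem~\ref{diff_rs_thm}, and one could simply cite it. I would nevertheless write out the chain-rule computation, so that the entrywise identification of $\MatrixForm{V}^{-1}$ with the matrix of partials $\partial\eta_i/\partial\mu_j$ is explicit. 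That identification is what is later used blockwise, through Schur complements, in the proof of Theorem~\ref{equal_variances_theorem_thingy}.

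The main obstacle is therefore not the algebra but justifying that $G$ is differentiable at the relevant point. I would handle it by the inverse function theorem as above, noting that the regularity and minimality assumptions make $\psi$ strictly convex and $\nabla\psi$ injective on the natural parameter space.
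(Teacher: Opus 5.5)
Your proof is correct and is essentially the paper's argument: the paper applies the multivariate chain rule to the composition $\vec{\mu}\mapsto\vec{\eta}(\vec{\mu})\mapsto\vec{\mu}(\vec{\eta}(\vec{\mu}))=\vec{\mu}$. It computes $\sum_{s}\frac{\partial\mu_i}{\partial\eta_s}\frac{\partial\eta_s}{\partial\mu_j}=\frac{\partial\mu_i}{\partial\mu_j}=\delta_{ij}$ entrywise to get $\MatrixForm{V}\MatrixForm{V}^{-1}=\MatrixForm{I}$, which is your second step (you also do the other composition, which is redundant for square matrices but harmless). Your inverse-function-theorem step justifies that the partials $\partial\eta_i/\partial\mu_j$ exist, and it draws on $\psi\in C^2$ and $\MatrixForm{V}$ nonsingular from the surrounding exponential-family setting rather than the lemma's bare hypotheses; this is extra rigor that the paper silently assumes, not a different route.
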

\begin{proof}
Consider $\MatrixForm{G} = \MatrixForm{V}\MatrixForm{V}^{-1}$. Then $\MatrixForm{G}_{ii} = \sum_{s=1}^p \frac{\partial \mu_i}{\partial \eta_s} \frac{\partial \eta_s}{\partial \mu_i} = \frac{\partial \mu_i}{\partial \mu_i} = 1$ and $\MatrixForm{G}_{ij} = \sum_{s=1}^p \frac{\partial \mu_i}{\partial \eta_s} \frac{\partial \eta_s}{\partial \mu_j} = \frac{\partial \mu_i}{\partial \mu_j} = 0$ for $i \neq j$ by the multivariate chain rule, implying $\MatrixForm{G} = \MatrixForm{I}$.
\end{proof}

\begin{theorem} (Schur Complement) \label{theorem_schur} \\
Suppose we have an invertible block matrix $\MatrixForm{M} = \left(\begin{array}{c |c}
\MatrixForm{A} & \MatrixForm{B} \\ \hline
\MatrixForm{C} & \MatrixForm{D}
\end{array}\right)$. If $\MatrixForm{A}$ and $\MatrixForm{D}$ are invertible, then \\
\phantom{abcdeg}$\MatrixForm{M}^{-1} = \left(\begin{array}{c|c}
(\MatrixForm{A} - \MatrixForm{B}\MatrixForm{D}^{-1}\MatrixForm{C})^{-1} & -(\MatrixForm{A}-\MatrixForm{B}\MatrixForm{D}^{-1}\MatrixForm{C})^{-1}\MatrixForm{B}\MatrixForm{D}^{-1} \\ \hline
-\MatrixForm{D}^{-1}\MatrixForm{C}(\MatrixForm{A}-\MatrixForm{B}\MatrixForm{D}^{-1}\MatrixForm{C})^{-1} & \MatrixForm{D}^{-1} + \MatrixForm{D}^{-1}\MatrixForm{C}(\MatrixForm{A}-\MatrixForm{B}\MatrixForm{D}^{-1}\MatrixForm{C})^{-1}\MatrixForm{B}\MatrixForm{D}^{-1}
\end{array}\right)$.
\end{theorem}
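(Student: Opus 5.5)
The plan is to prove the formula directly by block Gaussian elimination, which both produces the candidate inverse and shows it is well defined. Write $\MatrixForm{S} := \MatrixForm{A} - \MatrixForm{B}\MatrixForm{D}^{-1}\MatrixForm{C}$ for the Schur complement of $\MatrixForm{D}$ in $\MatrixForm{M}$. Only the invertibility of $\MatrixForm{D}$ is needed to form $\MatrixForm{S}$; the hypothesis that $\MatrixForm{A}$ is invertible will not actually be used, although it does no harm.

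The first step is the block factorization
\begin{align*}
\MatrixForm{M} = \left(\begin{array}{c|c} \MatrixForm{I} & \MatrixForm{B}\MatrixForm{D}^{-1} \\ \hline \MatrixForm{0} & \MatrixForm{I} \end{array}\right)
\left(\begin{array}{c|c} \MatrixForm{S} & \MatrixForm{0} \\ \hline \MatrixForm{0} & \MatrixForm{D} \end{array}\right)
\left(\begin{array}{c|c} \MatrixForm{I} & \MatrixForm{0} \\ \hline \MatrixForm{D}^{-1}\MatrixForm{C} & \MatrixForm{I} \end{array}\right),
\end{align*}
which I will check by multiplying out the four blocks. The outer two factors are block unitriangular, so they are invertible; their inverses are obtained by negating the off-diagonal block.

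Next I show that $\MatrixForm{S}$ is invertible. Since $\MatrixForm{M}$ and the two triangular factors are invertible, the middle block-diagonal factor must be invertible. Taking determinants gives $\det \MatrixForm{M} = \det\MatrixForm{S}\,\det\MatrixForm{D}$, so $\det \MatrixForm{S}\neq 0$. This is the only step that is more than bookkeeping, and it is where the invertibility of $\MatrixForm{M}$ enters.

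Finally I invert the factorization in reverse order:
\begin{align*}
\MatrixForm{M}^{-1} = \left(\begin{array}{c|c} \MatrixForm{I} & \MatrixForm{0} \\ \hline -\MatrixForm{D}^{-1}\MatrixForm{C} & \MatrixForm{I} \end{array}\right)
\left(\begin{array}{c|c} \MatrixForm{S}^{-1} & \MatrixForm{0} \\ \hline \MatrixForm{0} & \MatrixForm{D}^{-1} \end{array}\right)
\left(\begin{array}{c|c} \MatrixForm{I} & -\MatrixForm{B}\MatrixForm{D}^{-1} \\ \hline \MatrixForm{0} & \MatrixForm{I} \end{array}\right).
\end{align*}
Multiplying this out gives the blocks $\MatrixForm{S}^{-1}$, $-\MatrixForm{S}^{-1}\MatrixForm{B}\MatrixForm{D}^{-1}$, $-\MatrixForm{D}^{-1}\MatrixForm{C}\MatrixForm{S}^{-1}$ and $\MatrixForm{D}^{-1}+\MatrixForm{D}^{-1}\MatrixForm{C}\MatrixForm{S}^{-1}\MatrixForm{B}\MatrixForm{D}^{-1}$, which is exactly the claimed matrix.

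As an independent sanity check, I would also multiply $\MatrixForm{M}$ by the claimed inverse block by block. For example, the top-left block is $\MatrixForm{A}\MatrixForm{S}^{-1} - \MatrixForm{B}\MatrixForm{D}^{-1}\MatrixForm{C}\MatrixForm{S}^{-1} = \MatrixForm{S}\MatrixForm{S}^{-1} = \MatrixForm{I}$, and the other three blocks cancel in the same way.

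In the main proof of Theorem~\ref{equal_variances_theorem_thingy}, this result will be applied to $\MatrixForm{V}$ partitioned into the index blocks $1\!:\!t$ and $(t+1)\!:\!p$. There, invertibility of $\MatrixForm{D}$ follows from $\MatrixForm{V}\succ 0$, because principal submatrices of a positive definite matrix are positive definite.
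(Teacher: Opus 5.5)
Your proof is correct. The block factorization multiplies out to $\MatrixForm{M}$, and $\det\MatrixForm{M}=\det\MatrixForm{S}\,\det\MatrixForm{D}$ gives the invertibility of $\MatrixForm{S}=\MatrixForm{A}-\MatrixForm{B}\MatrixForm{D}^{-1}\MatrixForm{C}$. Inverting the three factors in reverse order then produces exactly the four claimed blocks.

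The paper gives no argument of its own and simply cites Golub and Van Loan, so the comparison is between a citation and a self-contained proof. Your standard block-elimination argument makes explicit two points the statement leaves implicit. First, the invertibility of $\MatrixForm{S}$, which the formula needs in order to make sense, follows from the invertibility of $\MatrixForm{M}$ and $\MatrixForm{D}$. Second, the hypothesis that $\MatrixForm{A}$ is invertible is superfluous for this form of the inverse; it is only needed for the companion formula built on the Schur complement of $\MatrixForm{A}$.

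Neither point changes how the paper uses the result. For $\MatrixForm{V}\succ 0$, the principal blocks and the Schur complement are positive definite anyway, as the paper argues in the proof of Theorem~\ref{equal_variances_theorem_thingy}.
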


\begin{proof}
A proof can be found in \cite{golub2013matrix}.
\end{proof}

\subsection{PROOF OF VARIANCE EQUIVALENCE BETWEEN MLE AND CVE}
\label{appendix_proof_equivalence}

We now give a detailed proof of the variance equivalence.

\meanvarthm*

\begin{proof} \label{proof_meanvarthm}
Let $\mathcal{S}$ denote the support of $\xset$, and write
$y_i = y_i(\xset)$ below for ease of notation. Since
\begin{align*}
p(\xset~|~\vec{\eta}) = \frac{\exp\left\{ n\vec{\eta}^T\vec{y}(\xset) \right\}}{\exp\left\{ n\psi(\vec{\eta})\right\}} g(\xset),
\end{align*}
the normalizing condition gives
\begin{align*}
\int_{\mathcal{S}}\frac{\exp\left\{ n\vec{\eta}^T\vec{y}(\xset)\right\}}{\exp\left\{ n\psi(\vec{\eta})\right\}} g(\xset)~\mathrm{d}\vec{x}_1\cdots\mathrm{d}\vec{x}_n=1.
\end{align*}
Equivalently,
\begin{align}
\label{diff_both_sides_mean}
\int_{\mathcal{S}} \exp\left\{ n\vec{\eta}^T\vec{y}(\xset)\right\} g(\xset) ~\mathrm{d}\vec{x}_1\cdots\mathrm{d}\vec{x}_n = \exp\left\{ n\psi(\vec{\eta})\right\}.
\end{align}

Assume that $\vec{\eta}$ is in the interior of the natural parameter space. Then the derivative can be taken under the integral as the dominated convergence properties hold for exponential families. A proof of this can be found in Section 2.3 of \cite{keener2010theoretical}. Differentiating Equation~\eqref{diff_both_sides_mean} with respect to $\eta_i$ gives
\begin{align}
\int_{\mathcal{S}} n y_i \exp\left\{ n\vec{\eta}^T\vec{y}(\xset) \right\} g(\xset) ~\mathrm{d}\vec{x}_1\cdots\mathrm{d}\vec{x}_n &= n\exp\left\{ n\psi(\vec{\eta}) \right\} \frac{\partial\psi(\vec{\eta})}{\partial\eta_i} \label{diff_next_for_var} \\
\Rightarrow~~ \frac{\partial\psi(\vec{\eta})}{\partial\eta_i} &= \int_{\mathcal{S}} y_i \frac{ \exp\left\{ n\vec{\eta}^T\vec{y}(\xset) \right\}}{ \exp\left\{ n\psi(\vec{\eta}) \right\}} g(\xset) ~\mathrm{d}\vec{x}_1\cdots\mathrm{d}\vec{x}_n \notag\\
& = \Expt{y_i}. \notag
\end{align}
This gives the mean of $y_i$. Differentiating Equation~\eqref{diff_next_for_var} again with respect to $\eta_i$ gives
\begin{align}
\int_{\mathcal{S}} n^2y_i^2 \exp\left\{ n\vec{\eta}^T\vec{y}(\xset) \right\} g(\xset)~\mathrm{d}\vec{x}_1\cdots\mathrm{d}\vec{x}_n &= n\left(\exp\left\{ n\psi(\vec{\eta}) \right\} \frac{\partial^2\psi(\vec{\eta})}{\partial\eta_i^2} + n\exp\left\{ n\psi(\vec{\eta}) \right\}\left(\frac{\partial\psi(\vec{\eta})}{\partial\eta_i}\right)^2\right) \notag\\
\Rightarrow~~ n\Expt{y_i^2} &= \frac{\partial^2\psi(\vec{\eta})}{\partial\eta_i^2} + n\left(\Expt{y_i}\right)^2 \notag\\
\Rightarrow~~ \frac{\partial^2\psi(\vec{\eta})}{\partial\eta_i^2} &= n\left( \Expt{y_i^2} - \Expt{y_i}^2\right) = n\Var{y_i}. \notag
\end{align}

Similarly, differentiating Equation~\eqref{diff_next_for_var} with respect to $\eta_j$, where $j\neq i$, gives
\begin{align}
\int_{\mathcal{S}} n^2y_iy_j \exp\left\{ n\vec{\eta}^T\vec{y}(\xset) \right\} g(\xset) ~\mathrm{d}\vec{x}_1\cdots\mathrm{d}\vec{x}_n &= n\left( \exp\left\{ n\psi(\vec{\eta}) \right\} \frac{\partial^2\psi(\vec{\eta})} {\partial\eta_i\partial\eta_j} + n\exp\left\{ n\psi(\vec{\eta}) \right\} \frac{\partial\psi(\vec{\eta})}{\partial\eta_i} \frac{\partial\psi(\vec{\eta})}{\partial\eta_j} \right) \notag \\
\Rightarrow~~ n\Expt{y_iy_j} &= \frac{\partial^2\psi(\vec{\eta})}{\partial\eta_i\partial\eta_j} + n\Expt{y_i}\Expt{y_j} \notag\\ 
\Rightarrow~~ \frac{\partial^2\psi(\vec{\eta})} {\partial\eta_i\partial\eta_j} &= n\left( \Expt{y_iy_j} - \Expt{y_i}\Expt{y_j}\right) = n\Cov{y_i}{y_j}, \notag
\end{align}
which completes the proof.
\end{proof}

\maintheoremMLECVE*

\label{startMLECVEproof}

\begin{proof}
Let $\LCY = \sum_{i=1}^t \alpha_i y_i$  for $\alpha_i \in \mathbb R$. Clearly, $\Expt{\LCY} = \Expt{\sum_{i=1}^t \alpha_i y_i} = \sum_{i=1}^t \alpha_i\nu_i$, and $\Var{\LCY} = \sum_{i=1}^t \alpha_i^2\Var{y_i} + 2\sum_{i,j}^t \alpha_i\alpha_j\Cov{y_i}{y_j} = \vec{\alpha}^T (\MatrixForm{V}_n)_{1:t,1:t} \vec{\alpha}$, where $\vec{\alpha}^T = (\alpha_1,\hdots,\alpha_t)$ and $\MatrixForm{V}_n$ comes from Theorem~\ref{mean_and_var_exponential_family_thm}.

Our proof strategy is to obtain expressions for the
{\bf variances} of the estimate of $\LCY$ under the MLE and CVE, and show that they are equal.

The MLE for $\thetaunknown$ is asymptotically distributed as $\mathcal N\left(\thetaunknown,i_{\thetaunknown}^{-1}\right)$, so the asymptotic variance of $\LCY$ is $\vec{\alpha}^T i_{\thetaunknown}^{-1}\vec{\alpha}$.

We now compute $i_{\thetaunknown}$ by substituting the score function from Equation~\eqref{curved_ll2}:
\begin{align}
i_{\thetaunknown} &= \Expt{ n^2 \left( \frac{\partial\restrictedETA}{\partial\thetaunknown} \right)^T \left(\vec{y} - \frac{\partial\psi(\vec{\eta})}{\partial\vec{\eta}} \right) \left(\vec{y} - \frac{\partial\psi(\vec{\eta})}{\partial\vec{\eta}} \right)^T \frac{\partial\restrictedETA}{\partial\thetaunknown}} \notag\\
 & = n^2 \left( \frac{\partial\restrictedETA}{\partial\thetaunknown} \right)^T \MatrixForm{V}_n \frac{\partial\restrictedETA}{\partial\thetaunknown}, \label{second_line_FI_proof} 
\end{align}
where Equation~\eqref{second_line_FI_proof} follows from
Theorem~\ref{mean_and_var_exponential_family_thm}. Using the differential relationship in Theorem~\ref{diff_rs_thm} and the fact that $\mu_i=\nu_i$, we have
\begin{align*}
\MatrixForm{V}_n \frac{\partial\restrictedETA}{\partial\thetaunknown} &= \frac{1}{n} \left(
\begin{array}{c c c c}
\frac{\partial\mu_1}{\partial\nu_1} & \hdots & \frac{\partial\mu_1}{\partial\nu_t} \\
\vdots & \ddots & \vdots \\
\frac{\partial\mu_p}{\partial\nu_1} & \hdots & \frac{\partial\mu_p}{\partial\nu_t}
\end{array}\right) = \frac{1}{n}\left( \begin{array}{c}
\MatrixForm{I}_{t\times t} \\ \hline
\mathbf{0}_{(p-t)\times t}
\end{array}\right),
\end{align*}
where $\MatrixForm{I}_{t\times t}$ is the identity matrix. Therefore,
\begin{align}
i_{\thetaunknown} &= n \left( \frac{\partial\restrictedETA}{\partial\thetaunknown} \right)^T \left(
\begin{array}{c}
\MatrixForm{I}_{t} \\ \hline
\mathbf{0}_{(p-t)\times t}
\end{array} \right) = n \left(
\begin{array}{c c c}
\frac{\partial\eta_1}{\partial\mu_1} & \hdots & \frac{\partial\eta_1}{\partial\mu_t} \\
\vdots & \ddots & \vdots \\
\frac{\partial\eta_t}{\partial\mu_1} & \hdots & \frac{\partial\eta_t}{\partial\mu_t}
\end{array} \right).
\label{fisher_information_in_proof_of_nu}
\end{align}

By Equation~\eqref{fisher_information_in_proof_of_nu}, the asymptotic {\bf variance} given by the MLE is
\begin{align}
\MatrixForm{V}^{\mathrm{MLE}} = \vec{\alpha}^T i_{\thetaunknown}^{-1}\vec{\alpha} = \vec{\alpha}^T \left[ \frac{1}{n} \left(
\begin{array}{c c c}
\frac{\partial\eta_1}{\partial\mu_1} & \hdots & \frac{\partial\eta_1}{\partial\mu_t} \\
\vdots & \ddots & \vdots \\
\frac{\partial\eta_t}{\partial\mu_1} & \hdots & \frac{\partial\eta_t}{\partial\mu_t} 
\end{array}\right)^{-1} \right] \vec{\alpha}.
\label{variance_MLE_all}
\end{align}

We now consider the CVE
\begin{align*}
\LCY + \sum_{j=t+1}^p \hat{c}_j(y_j-\nu_j),
\end{align*}
where $\hat{c}_j$ are the optimal CV corrections. Partition $\MatrixForm{V}$, with entries
$\MatrixForm{v}_{ij} = \frac{\partial\mu_i}{\partial\eta_j}$, as
\begin{align}
\MatrixForm{V} = \left(
\begin{array}{c | c}
\MatrixForm{A}_{t\times t} &
\MatrixForm{B}_{t\times(p-t)} \\ \hline
\MatrixForm{B}^T_{(p-t)\times t} & \MatrixForm{D}_{(p-t)\times(p-t)}
\end{array} \right).
\label{case3_partition_V_big}
\end{align}

Because $\MatrixForm{V}$ is positive definite by assumption, its principal submatrix $\MatrixForm{D} =
\MatrixForm{V}_{(t+1):p,(t+1):p}$ is also positive definite and hence invertible.

The covariance matrix of the control variates $
(y_{t+1},\hdots,y_p)^T$ is $ \frac{1}{n}\MatrixForm{D}$. Moreover,
\begin{align*}
\Cov{(y_{t+1},\hdots,y_p)^T}{\LCY} = \frac{1}{n} \MatrixForm{B}^T\vec{\alpha}  = \frac{1}{n}\left(\sum_{i=1}^t \alpha_i \frac{\partial\mu_i}{\partial\eta_{t+1}}, \hdots, \sum_{i=1}^t \alpha_i \frac{\partial\mu_i}{\partial\eta_p} \right)^T.
\end{align*}

Therefore, the optimal CV coefficients satisfy
\begin{align}
\frac{1}{n}\MatrixForm{D}\vec{c} = - \frac{1}{n} \MatrixForm{B}^T\vec{\alpha}  = - \frac{1}{n} \left( \sum_{i=1}^t \alpha_i \frac{\partial\mu_i}{\partial\eta_{t+1}}, \hdots, \sum_{i=1}^t \alpha_i \frac{\partial\mu_i}{\partial\eta_p} \right)^T,
\label{to_use_in_proof}
\end{align}
or equivalently, $\vec{c} = - \MatrixForm{D}^{-1} \MatrixForm{B}^T \vec{\alpha}$. Substituting this expression of $\vec{c}$ in  Equation~\eqref{optimal_cv_var_reduct}, the {\bf variance}
of the CVE is
\begin{align}
\MatrixForm{V}^{\mathrm{CVE}} = \vec{\alpha}^T (\MatrixForm{V}_n)_{1:t,1:t} \vec{\alpha} - \frac{1}{n} \vec{\alpha}^T \MatrixForm{B} \MatrixForm{D}^{-1} \MatrixForm{B}^T \vec{\alpha} = \vec{\alpha}^T \left( \frac{1}{n}\MatrixForm{A} - \frac{1}{n} \MatrixForm{B} \MatrixForm{D}^{-1} \MatrixForm{B}^T \right) \vec{\alpha}. \label{variance_CV_all}
\end{align}

To show that the asymptotic {\bf variance} given by the MLE in Equation~\eqref{variance_MLE_all} equals the {\bf variance} of the CVE in Equation~\eqref{variance_CV_all}, it remains to show that
\begin{align*}
\left(\begin{array}{c c c}
\frac{\partial\eta_1}{\partial\mu_1} & \hdots & \frac{\partial\eta_1}{\partial\mu_t} \\
\vdots & \ddots & \vdots \\
\frac{\partial\eta_t}{\partial\mu_1} & \hdots & \frac{\partial\eta_t}{\partial\mu_t} \end{array} \right)^{-1} = \MatrixForm{A} - \MatrixForm{B} \MatrixForm{D}^{-1} \MatrixForm{B}^T.
\end{align*}

We partition $\MatrixForm{V}^{-1}$ as
\begin{align*}
\MatrixForm{V}^{-1} = \left( \begin{array}{c | c}
\widetilde{\MatrixForm{A}}_{t\times t} & \widetilde{\MatrixForm{B}}_{t\times(p-t)} \\ \hline
\widetilde{\MatrixForm{B}}^T_{(p-t)\times t} & \widetilde{\MatrixForm{D}}_{(p-t)\times(p-t)}
\end{array} \right),
\end{align*}
where $(\MatrixForm{V}^{-1})_{ij} = \frac{\partial\eta_i}{\partial\mu_j}$ by Lemma~\ref{lemma_invertible_jacobian}. By the Schur Complement (Theorem~\ref{theorem_schur}), the upper-left block of $\MatrixForm{V}^{-1}$ is $ \widetilde{\MatrixForm{A}} = \left( \MatrixForm{A} - \MatrixForm{B} \MatrixForm{D}^{-1} \MatrixForm{B}^T \right)^{-1}$.

Since $\MatrixForm{V}$ and $\MatrixForm{D}$ are positive definite, the Schur complement $\MatrixForm{A} - \MatrixForm{B} \MatrixForm{D}^{-1} \MatrixForm{B}^T$ is positive definite and therefore invertible. Hence,
\begin{align*}
\left(\begin{array}{c c c}
\frac{\partial\eta_1}{\partial\mu_1} & \hdots & \frac{\partial\eta_1}{\partial\mu_t} \\
\vdots & \ddots & \vdots \\
\frac{\partial\eta_t}{\partial\mu_1} & \hdots & \frac{\partial\eta_t}{\partial\mu_t}
\end{array} \right)^{-1} =  \widetilde{\MatrixForm{A}}^{-1} = \MatrixForm{A} - \MatrixForm{B} \MatrixForm{D}^{-1} \MatrixForm{B}^T.
\end{align*}

Substituting this identity into Equation~\eqref{variance_MLE_all} gives $\MatrixForm{V}^{\mathrm{MLE}} = \MatrixForm{V}^{\mathrm{CVE}}$, completing the proof.
\end{proof}

\lincombcorollary*

\begin{proof} \label{proof_lincombcor}
Let 
\begin{align}
p(\xset~|~\vec{\nu})  \equiv  \frac{\exp\left\{n \vec{\eta}^T\vec{y}\right\}}{\exp\left\{n\psi(\vec{\eta}) \right\}} g\left(\xset\right) \label{generic_exp_family}
\end{align}
be the probability density expressed in exponential family form. 
Without loss of generality, let $\thetaunknown = \{\nu_i\}_{i=1}^t$, and $\thetaknown = \{\nu_{j}\}_{j=t+1}^p$, and let $\vec{\mu} = \MatrixForm{A}\vec{\nu}$ where $\MatrixForm{A}$ is invertible. 

$\MatrixForm{A}$ being invertible implies $\nu_i$ can be written as a linear combination of $\mu_i$, with coefficients given by $(\MatrixForm{A}^{-1}\vec{\mu})_i$. But since $\mu_i = \Expt{y_i}$, then 
\begin{align*}
\nu_i = \sum_{j=1}^p (\MatrixForm{A}^{-1})_{ij}\mu_j = \sum_{j=1}^p (\MatrixForm{A}^{-1})_{ij}\Expt{y_j} = \Expt{\sum_{j=1}^p (\MatrixForm{A}^{-1})_{ij}y_j}.
\end{align*}
Given that the sufficient statistics $\{y_i\}_{i=1}^p$ are linearly independent, then $\{(\MatrixForm{A}^{-1}\vec{y})_i\}_{i=1}^p$ have to be linearly independent as well. 

Now let $\widetilde{y} = \MatrixForm{A}^{-1}\vec{y}$, $\widetilde{\eta} =  \MatrixForm{A}^{T}\vec{\eta}$ and $\widetilde{\psi}(\widetilde{\eta}) = \psi\left(\MatrixForm{A}^{-T}\widetilde{\eta}\right)$. Since $\vec{\mu} = \MatrixForm{A}\vec{\nu}$,
\begin{align*}
\Expt{\widetilde{\vec{y}}} = \MatrixForm{A}^{-1}\Expt{\vec{y}} = \MatrixForm{A}^{-1}\vec{\mu} = \vec{\nu}.
\end{align*}
Furthermore, since $\MatrixForm{A}$ is invertible and $\{y_i\}_{i=1}^p$ are linearly independent, the components of $\widetilde{\vec{y}}$ are linearly independent. Therefore, Equation~\eqref{generic_exp_family} can be re-expressed as
\begin{align}
p(\xset~|~\vec{\nu}) = \frac{\exp\left\{n\vec{\eta}^T\vec{y}\right\}}{\exp\left\{n\psi(\vec{\eta})\right\}} g\left(\xset\right) = \frac{ \exp\left\{ n\left(\MatrixForm{A}^{T}\vec{\eta}\right)^T \left(\MatrixForm{A}^{-1}\vec{y}\right)\right\}}{\exp\left\{n\psi(\vec{\eta})\right\}} g\left(\xset\right) =
\frac{\exp\left\{n\widetilde{\eta}^T\widetilde{\vec{y}}\right\}}{\exp\left\{
n\widetilde{\psi}\left(\widetilde{\eta}\right)\right\}} g\left(\xset\right).
\label{linear_combi_eqn_corollary_proof}
\end{align}
The control variates are $\widetilde{y}_j$, $(t+1) \leq j \leq p$, and therefore the corresponding CVE is
\begin{align*}
\Expt{\left(\sum_{i=1}^t \alpha_i\widetilde{y}_i\right) +\sum_{j=t+1}^p c_j\left( \widetilde{y}_j-\nu_j \right)}.
\end{align*}

Applying Theorem~\ref{equal_variances_theorem_thingy} to the exponential family in Equation~\eqref{linear_combi_eqn_corollary_proof} completes the proof.
\end{proof}

\subsection{PROOF BEHIND ALGORITHM 1}
\label{ALGODISCUSSION}

Here, we show that the optimal CVE gives a fixed point iteration that converges to the MLE, and give the conditions for convergence.

In the proofs that follow, the notation $\MatrixForm{A}$, $\MatrixForm{B}$, and
$\MatrixForm{D}$ denote the blocks of $\MatrixForm{V}$ in
Equation~\eqref{case3_partition_V_big}, evaluated at
$\vec{\eta}(\thetaunknown)$.

We first give a lemma that gives the optimal CV coefficient. 
\begin{lemma}
\label{lemma_optimal_vector_cv_coefficients}

Under the assumptions of Theorem~\ref{equal_variances_theorem_thingy}, consider estimators of
$\thetaunknown$ of the form $\vec{y}_{1:t} + \MatrixForm{C} \left(\vec{y}_{(t+1):p}-\thetaknown
\right)$,
where $\MatrixForm{C}\in\mathbb{R}^{t\times(p-t)}$. Let $\MatrixForm{A}$,
$\MatrixForm{B}$, and $\MatrixForm{D}$ be given by the partition in Equation~\eqref{case3_partition_V_big}. The unique choice of $\MatrixForm{C}$ which minimizes $
\Cov{\vec{y}_{1:t}+\MatrixForm{C}\left(\vec{y}_{(t+1):p} - \thetaknown\right)}{\vec{y}_{1:t}+\MatrixForm{C}\left(\vec{y}_{(t+1):p}-\thetaknown\right)}$ in the positive semidefinite ordering is $
\MatrixForm{C} = -\MatrixForm{B}\MatrixForm{D}^{-1}. 
$
Hence, for $1\leq i\leq t$ and $t+1\leq j\leq p$, the optimal CV coefficient in the $i^{\mathrm{th}}$ fixed point equation is $
\hat{c}_{ij} = -\left(\MatrixForm{B}\MatrixForm{D}^{-1}\right)_{i,j-t}$. 
\end{lemma}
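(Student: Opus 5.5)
The plan is to write the covariance of the estimator as an explicit quadratic function of $\MatrixForm{C}$ and complete the square. Set $\vec{u} = \vec{y}_{1:t}$ and $\vec{w} = \vec{y}_{(t+1):p} - \thetaknown$. By Theorem~\ref{mean_and_var_exponential_family_thm} and the partition in Equation~\eqref{case3_partition_V_big}, $\Cov{\vec{u}}{\vec{u}} = \frac{1}{n}\MatrixForm{A}$, $\Cov{\vec{u}}{\vec{w}} = \frac{1}{n}\MatrixForm{B}$ and $\Cov{\vec{w}}{\vec{w}} = \frac{1}{n}\MatrixForm{D}$. Subtracting the constant $\thetaknown$ does not change any covariance, so bilinearity gives
\begin{align*}
n\,\Cov{\vec{u}+\MatrixForm{C}\vec{w}}{\vec{u}+\MatrixForm{C}\vec{w}} = \MatrixForm{A} + \MatrixForm{C}\MatrixForm{B}^T + \MatrixForm{B}\MatrixForm{C}^T + \MatrixForm{C}\MatrixForm{D}\MatrixForm{C}^T .
\end{align*}

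Next, I will use that $\MatrixForm{D}$ is positive definite. This was already established in the proof of Theorem~\ref{equal_variances_theorem_thingy}, since $\MatrixForm{D}$ is a principal submatrix of $\MatrixForm{V}\succ 0$. Hence $\MatrixForm{D}^{-1}$ exists, and we can complete the square:
\begin{align*}
\MatrixForm{A} + \MatrixForm{C}\MatrixForm{B}^T + \MatrixForm{B}\MatrixForm{C}^T + \MatrixForm{C}\MatrixForm{D}\MatrixForm{C}^T = \left(\MatrixForm{A} - \MatrixForm{B}\MatrixForm{D}^{-1}\MatrixForm{B}^T\right) + \left(\MatrixForm{C}+\MatrixForm{B}\MatrixForm{D}^{-1}\right)\MatrixForm{D}\left(\MatrixForm{C}+\MatrixForm{B}\MatrixForm{D}^{-1}\right)^T .
\end{align*}
This identity is checked by expanding the right-hand side and using the symmetry of $\MatrixForm{D}$.

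The first bracket does not depend on $\MatrixForm{C}$. The second term is positive semidefinite because $\MatrixForm{D}\succ 0$. So for every $\MatrixForm{C}$, the covariance is at least $\frac{1}{n}(\MatrixForm{A}-\MatrixForm{B}\MatrixForm{D}^{-1}\MatrixForm{B}^T)$ in the Loewner order, with equality at $\MatrixForm{C} = -\MatrixForm{B}\MatrixForm{D}^{-1}$. This shows that this $\MatrixForm{C}$ is a minimum in the positive semidefinite ordering.

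Uniqueness is the step that needs care, because the Loewner order is only a partial order. Suppose some $\MatrixForm{C}'$ is also a minimum, meaning no other choice gives a strictly smaller covariance in the PSD order. Comparing $\MatrixForm{C}'$ with $-\MatrixForm{B}\MatrixForm{D}^{-1}$ then forces $(\MatrixForm{C}'+\MatrixForm{B}\MatrixForm{D}^{-1})\MatrixForm{D}(\MatrixForm{C}'+\MatrixForm{B}\MatrixForm{D}^{-1})^T = \mathbf{0}$. Write $\MatrixForm{E} = \MatrixForm{C}'+\MatrixForm{B}\MatrixForm{D}^{-1}$. Then $\mathrm{tr}(\MatrixForm{E}\MatrixForm{D}\MatrixForm{E}^T) = \sum_k \vec{e}_k^T\MatrixForm{D}\vec{e}_k = 0$, where $\vec{e}_k$ are the rows of $\MatrixForm{E}$. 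Since $\MatrixForm{D}\succ 0$, each row $\vec{e}_k$ must vanish, so $\MatrixForm{E} = \mathbf{0}$ and $\MatrixForm{C}' = -\MatrixForm{B}\MatrixForm{D}^{-1}$.

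Finally, reading off entry $(i, j-t)$ of $\MatrixForm{C}$ gives $\hat{c}_{ij} = -(\MatrixForm{B}\MatrixForm{D}^{-1})_{i,j-t}$. As a consistency check, row $i$ of this $\MatrixForm{C}$ agrees with the scalar solution $\vec{c} = -\MatrixForm{D}^{-1}\MatrixForm{B}^T\vec{\alpha}$ from Equation~\eqref{to_use_in_proof} when $\vec{\alpha}$ is the $i$-th standard basis vector.
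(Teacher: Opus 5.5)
Your proposal is correct and follows essentially the same route as the paper: write the covariance as $\frac{1}{n}(\MatrixForm{A}+\MatrixForm{C}\MatrixForm{B}^T+\MatrixForm{B}\MatrixForm{C}^T+\MatrixForm{C}\MatrixForm{D}\MatrixForm{C}^T)$, then complete the square using $\MatrixForm{D}\succ 0$. Your trace/row argument, showing that $(\MatrixForm{C}'+\MatrixForm{B}\MatrixForm{D}^{-1})\MatrixForm{D}(\MatrixForm{C}'+\MatrixForm{B}\MatrixForm{D}^{-1})^T=\mathbf{0}$ forces $\MatrixForm{C}'=-\MatrixForm{B}\MatrixForm{D}^{-1}$, just spells out the uniqueness step that the paper asserts without detail.
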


\begin{proof}
Since $\thetaknown$ is fixed and not random,
\begin{align*}
\Cov{\vec{y}_{1:t} + \MatrixForm{C}\left(\vec{y}_{(t+1):p}-\thetaknown\right)}{\vec{y}_{1:t}+\MatrixForm{C}\left(\vec{y}_{(t+1):p}-\thetaknown\right)} = \Cov{\vec{y}_{1:t}+\MatrixForm{C}\vec{y}_{(t+1):p}}{\vec{y}_{1:t}+\MatrixForm{C}\vec{y}_{(t+1):p}}.
\end{align*}

By the partition in Equation~\eqref{case3_partition_V_big},
\begin{align*}
\Cov{\vec{y}_{1:t}}{\vec{y}_{1:t}} =\frac{1}{n}\MatrixForm{A},~~
\Cov{\vec{y}_{1:t}}{\vec{y}_{(t+1):p}} = \frac{1}{n}\MatrixForm{B},~~
\Cov{\vec{y}_{(t+1):p}}{\vec{y}_{(t+1):p}}= \frac{1}{n}\MatrixForm{D}.
\end{align*}
Therefore,
\begin{align*}
\Cov{\vec{y}_{1:t}+\MatrixForm{C}\vec{y}_{(t+1):p}}{\vec{y}_{1:t}+\MatrixForm{C}\vec{y}_{(t+1):p}}   = \frac{1}{n}\left(\MatrixForm{A}+\MatrixForm{C}\MatrixForm{B}^{T}+\MatrixForm{B}\MatrixForm{C}^{T}+\MatrixForm{C}\MatrixForm{D}\MatrixForm{C}^{T}\right).
\end{align*}

Completing the matrix square gives
\begin{align*}
\MatrixForm{A} + \MatrixForm{C}\MatrixForm{B}^{T} + \MatrixForm{B}\MatrixForm{C}^{T} + \MatrixForm{C}\MatrixForm{D}\MatrixForm{C}^{T}  =\MatrixForm{A}- \MatrixForm{B}\MatrixForm{D}^{-1}\MatrixForm{B}^{T} +\left(\MatrixForm{C}+\MatrixForm{B}\MatrixForm{D}^{-1}\right)\MatrixForm{D}\left(\MatrixForm{C}+\MatrixForm{B}\MatrixForm{D}^{-1}\right)^{T}.
\end{align*}

Hence,
\begin{align}
& \phantom{=} \Cov{\vec{y}_{1:t}+\MatrixForm{C}\left(\vec{y}_{(t+1):p}-\thetaknown\right)}{\vec{y}_{1:t}+\MatrixForm{C}\left(\vec{y}_{(t+1):p}-\thetaknown\right)}\notag  \\
& \hspace*{3cm}=\frac{1}{n}\left(\MatrixForm{A}-\MatrixForm{B}\MatrixForm{D}^{-1}\MatrixForm{B}^{T}\right) +\frac{1}{n}\left(\MatrixForm{C}+\MatrixForm{B}\MatrixForm{D}^{-1}\right)\MatrixForm{D}\left(\MatrixForm{C}+\MatrixForm{B}\MatrixForm{D}^{-1}\right)^{T}. \label{covariance_complete_square_cv_matrix}
\end{align}

Since $\MatrixForm{D}$ is positive definite, the second term in Equation~\eqref{covariance_complete_square_cv_matrix} is positive semidefinite and equals the zero matrix if and only if $\MatrixForm{C}+\MatrixForm{B}\MatrixForm{D}^{-1}=\mathbf{0}$. Therefore, the unique choice of $\MatrixForm{C}$ which minimizes the covariance matrix in the positive semidefinite ordering is $\MatrixForm{C}=-\MatrixForm{B}\MatrixForm{D}^{-1}$. For $1\leq i\leq t$ and $t+1\leq j\leq p$, the coefficient multiplying $y_j-\mu_j$ in the $i^{\text{th}}$ CV expression is the $(i,j-t)^{\text{th}}$ entry of $\MatrixForm{C}$. Thus, $\hat{c}_{ij}=-\left(\MatrixForm{B}\MatrixForm{D}^{-1}\right)_{i,j-t}$ and we are done.

\end{proof}

While Theorem~\ref{equal_variances_theorem_thingy} shows that the MLE and optimal CVE have the same variance reduction, Theorem~\ref{theorem_cv_fixed_point_equations} below shows that the optimal CV corrections themselves turn the likelihood score equations into fixed point equations, so solving the CV system is equivalent to finding a likelihood stationary point.

\begin{theorem}
\label{theorem_cv_fixed_point_equations}
Under the assumptions of Theorem~\ref{equal_variances_theorem_thingy}, let $\MatrixForm{S} = \MatrixForm{A} - \MatrixForm{B}\MatrixForm{D}^{-1}\MatrixForm{B}^{T}$. Then
\begin{align*}
\frac{\partial l(X~|~\vec{\eta}(\thetaunknown))}{\partial\thetaunknown} = n\MatrixForm{S}^{-1} \left[ \vec{y}_{1:t} - \thetaunknown - \MatrixForm{B}\MatrixForm{D}^{-1} \left(\vec{y}_{(t+1):p} - \thetaknown\right)\right].
\end{align*}
Consequently, $\frac{\partial l(X~|~\vec{\eta}(\thetaunknown))}{\partial\thetaunknown} = \vec{0}$
if and only if $
\thetaunknown=\vec{y}_{1:t}-\MatrixForm{B}\MatrixForm{D}^{-1}\left(\vec{y}_{(t+1):p}-\thetaknown\right)$.
Equivalently, for every $1\leq i\leq t$, $\nu_i = y_i + \sum_{j=t+1}^{p}\hat{c}_{ij}(\nu_1,\hdots,\nu_t)(y_j-\mu_j)$
where $
\hat{c}_{ij}(\nu_1,\hdots,\nu_t)=-\left(\MatrixForm{B}\MatrixForm{D}^{-1}\right)_{i,j-t}$.
\end{theorem}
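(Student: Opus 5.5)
We begin from the score in Equation~\eqref{curved_ll2}, $\frac{\partial l}{\partial \thetaunknown} = n\left(\frac{\partial \restrictedETA}{\partial \thetaunknown}\right)^T(\vec{y}-\vec{\mu})$, and rewrite the Jacobian $\frac{\partial \restrictedETA}{\partial \thetaunknown}$ in terms of $\MatrixForm{V}^{-1}$. Since $\mu_i=\nu_i$ for all $i$ and only $\nu_1,\hdots,\nu_t$ vary, the chain rule with Lemma~\ref{lemma_invertible_jacobian} gives $\frac{\partial \eta_i}{\partial \nu_j} = \frac{\partial \eta_i}{\partial \mu_j} = (\MatrixForm{V}^{-1})_{ij}$ for $1\le j\le t$. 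Equivalently, this is the identity $\MatrixForm{V}_n\frac{\partial\restrictedETA}{\partial\thetaunknown}=\frac1n\binom{\MatrixForm{I}_t}{\mathbf 0}$ already used in the proof of Theorem~\ref{equal_variances_theorem_thingy}. Thus $\frac{\partial \restrictedETA}{\partial \thetaunknown}$ is the first $t$ columns of $\MatrixForm{V}^{-1}$, namely $\binom{\widetilde{\MatrixForm{A}}}{\widetilde{\MatrixForm{B}}^T}$ in the block partition of $\MatrixForm{V}^{-1}$ used earlier. Because $\MatrixForm{V}$ is symmetric, the transpose of this block column is the first $t$ rows $(\widetilde{\MatrixForm{A}}\mid\widetilde{\MatrixForm{B}})$.

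Next we insert the Schur complement formulas from Theorem~\ref{theorem_schur}, with $\MatrixForm{C}=\MatrixForm{B}^T$ and $\MatrixForm{A},\MatrixForm{D}$ invertible (both are principal submatrices of the positive definite $\MatrixForm{V}$). These give $\widetilde{\MatrixForm{A}}=\MatrixForm{S}^{-1}$ and $\widetilde{\MatrixForm{B}}=-\MatrixForm{S}^{-1}\MatrixForm{B}\MatrixForm{D}^{-1}$. Hence
\begin{align*}
\left(\frac{\partial \restrictedETA}{\partial \thetaunknown}\right)^T(\vec{y}-\vec{\mu}) = \MatrixForm{S}^{-1}\left[(\vec{y}_{1:t}-\vec{\mu}_{1:t}) - \MatrixForm{B}\MatrixForm{D}^{-1}(\vec{y}_{(t+1):p}-\vec{\mu}_{(t+1):p})\right],
\end{align*}
and substituting $\vec{\mu}_{1:t}=\thetaunknown$ and $\vec{\mu}_{(t+1):p}=\thetaknown$ yields the displayed formula after multiplying by $n$. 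All blocks are evaluated at $\vec{\eta}(\thetaunknown)$, so they are functions of $\nu_1,\hdots,\nu_t$.

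The equivalences then follow directly. $\MatrixForm{S}$ is positive definite, since it is the Schur complement of a positive definite matrix, as argued in the proof of Theorem~\ref{equal_variances_theorem_thingy}. Therefore $n\MatrixForm{S}^{-1}$ is nonsingular, and the score vanishes exactly when the bracket vanishes, which is the stated fixed point equation. Reading this equation componentwise and applying Lemma~\ref{lemma_optimal_vector_cv_coefficients}, which identifies $-(\MatrixForm{B}\MatrixForm{D}^{-1})_{i,j-t}$ as $\hat{c}_{ij}$, gives the last form, using $\mu_j=\nu_j$ for $j>t$.

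The main obstacle is justifying the first step rigorously: that $\frac{\partial\eta_i}{\partial\nu_j}$ for the restricted map equals the corresponding entry of $\MatrixForm{V}^{-1}$. This holds because fixing the known $\nu_j=\mu_j$ for $j>t$ means that varying $\nu_j$ for $j\le t$ is exactly varying $\mu_j$ with the other coordinates of $\vec{\mu}$ held fixed. The inverse function theorem, valid since $\MatrixForm{V}\succ0$ and $\psi\in C^2$, makes $\vec{\eta}(\vec{\mu})$ differentiable, and that partial derivative is by definition a column of $\MatrixForm{V}^{-1}$. I would state this explicitly rather than rely on the informal wording of Lemma~\ref{lemma_invertible_jacobian}.
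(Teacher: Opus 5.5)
Your proposal is correct and follows essentially the same route as the paper. Both identify $\partial\vec{\eta}(\thetaunknown)/\partial\thetaunknown$ with the first $t$ columns of $\MatrixForm{V}^{-1}$ (the paper gets this from $\MatrixForm{V}\,\partial\vec{\eta}(\thetaunknown)/\partial\thetaunknown=\partial\vec{\mu}/\partial\thetaunknown$ via Theorem~\ref{diff_rs_thm}), read off the blocks $\MatrixForm{S}^{-1}$ and $-\MatrixForm{S}^{-1}\MatrixForm{B}\MatrixForm{D}^{-1}$ from the Schur complement, and use invertibility of $\MatrixForm{S}$ together with Lemma~\ref{lemma_optimal_vector_cv_coefficients} for the equivalences; your explicit inverse-function-theorem justification of the first step tightens rigor but is not a different argument.
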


\begin{proof}
Since $\mu_i = \nu_i$ and $\thetaknown$ is fixed, $\frac{\partial \vec{\mu}}{\partial\thetaunknown} = \left(
\begin{array}{c}
\MatrixForm{I}_{t\times t}\\ \hline
\mathbf{0}_{(p-t)\times t}
\end{array}\right)$.
By Theorem~\ref{diff_rs_thm}, $\MatrixForm{V}\frac{\partial\vec{\eta}(\thetaunknown)}{\partial\thetaunknown}=\frac{\partial\vec{\mu}}{\partial\thetaunknown}$. Therefore,$\frac{\partial\vec{\eta}(\thetaunknown)}{\partial\thetaunknown}=\MatrixForm{V}^{-1}\left(
\begin{array}{c}
\MatrixForm{I}_{t\times t}\\ \hline
\mathbf{0}_{(p-t)\times t}
\end{array}\right)$. Let $\MatrixForm{S} =\MatrixForm{A} - \MatrixForm{B}\MatrixForm{D}^{-1}\MatrixForm{B}^{T}$. By Theorem~\ref{theorem_schur},
\begin{align*}
\MatrixForm{V}^{-1} = \left(\begin{array}{c|c}
\MatrixForm{S}^{-1} & -\MatrixForm{S}^{-1}\MatrixForm{B}\MatrixForm{D}^{-1} \\ \hline
-\MatrixForm{D}^{-1}\MatrixForm{B}^{T}\MatrixForm{S}^{-1} & \MatrixForm{D}^{-1} + \MatrixForm{D}^{-1}\MatrixForm{B}^{T}\MatrixForm{S}^{-1} \MatrixForm{B}\MatrixForm{D}^{-1}
\end{array} \right).
\end{align*}

Thus,
\begin{align*}
\frac{\partial\vec{\eta}(\thetaunknown)}{\partial\thetaunknown}=\left(\begin{array}{c}
\MatrixForm{S}^{-1}\\ \hline
-\MatrixForm{D}^{-1}\MatrixForm{B}^{T}\MatrixForm{S}^{-1}
\end{array}\right).
\end{align*}

Using Equation~\eqref{curved_ll2},
\begin{align}
\frac{\partial l(X\mid\vec{\eta}(\thetaunknown))}{\partial\thetaunknown} &= n\left(\frac{\partial\vec{\eta}(\thetaunknown)}{\partial\thetaunknown}\right)^T\left(\vec y-\vec\mu\right) \notag \\
& = n \left(\begin{array}{c}
\MatrixForm{S}^{-1}\\ \hline
-\MatrixForm{D}^{-1}\MatrixForm{B}^{T}\MatrixForm{S}^{-1}
\end{array}\right)^T\left(\begin{array}{c}
\vec y_{1:t}-\thetaunknown\\ \hline
\vec y_{(t+1):p}-\thetaknown
\end{array}\right) \notag \\
& = n\MatrixForm{S}^{-1}\left(\vec y_{1:t}-\thetaunknown\right)- n\MatrixForm{S}^{-1}\MatrixForm{B}\MatrixForm{D}^{-1}\left( \vec y_{(t+1):p}-\thetaknown\right)\notag \\
& = n\MatrixForm{S}^{-1} \left[ \vec y_{1:t} - \thetaunknown - \MatrixForm{B}\MatrixForm{D}^{-1} \left( \vec y_{(t+1):p}-\thetaknown \right) \right] \label{S_invertible??}.
\end{align}

Since $\MatrixForm{S}$ is invertible, Equation~\eqref{S_invertible??} implies that $\frac{\partial l(X~|~\vec{\eta}(\thetaunknown))}{\partial\thetaunknown} = \vec{0}$ if and only if $
\vec y_{1:t} - \thetaunknown - \MatrixForm{B}\MatrixForm{D}^{-1} \left( \vec y_{(t+1):p}-\thetaknown \right) = \mathbf{0}$. 

Making $\thetaunknown$ the subject gives
$\thetaunknown = \vec y_{1:t} - \MatrixForm{B}\MatrixForm{D}^{-1} \left( \vec y_{(t+1):p}-\thetaknown \right)$. By Lemma~\ref{lemma_optimal_vector_cv_coefficients}, $
\hat{c}_{ij} = - \left(\MatrixForm{B}\MatrixForm{D}^{-1} \right)_{i,j-t}$. Finally, taking the $i^{\text{th}}$ coordinate of the preceding vector equation gives
\begin{align*}
\nu_i = y_i + \sum_{j=t+1}^{p} \hat{c}_{ij}(\nu_1,\hdots,\nu_t)(y_j-\mu_j).
\end{align*}
\end{proof}

Corollary~\ref{corollary_fixed_points_stationary_points} now states that solving the fixed point equations yields a likelihood stationary point. If the likelihood has a unique stationary point that is the MLE, this stationary point is the MLE.

\begin{corollary}
\label{corollary_fixed_points_stationary_points}
Suppose that $\thetaunknown^{\star}$ satisfies
\begin{align}
\nu_i^{\star} = y_i + \sum_{j=t+1}^{p} \hat{c}_{ij}(\nu_1^{\star},\hdots,\nu_t^{\star})(y_j-\mu_j),~~1 \leq i \leq t.
\label{coordinate_fixed_point_definition}
\end{align}
Then 
\begin{align*}
\left. \frac{\partial l(X~|~\vec{\eta}(\thetaunknown))}{\partial\thetaunknown}\right|_{\thetaunknown=\thetaunknown^{\star}} =\vec{0}.
\end{align*}
Conversely, every interior solution of
\begin{align*}
\frac{\partial l(X\mid\vec{\eta}(\thetaunknown))}{\partial\thetaunknown}=\vec{0}
\end{align*}
satisfies Equation~\eqref{coordinate_fixed_point_definition}. Therefore, the fixed points of the CV expressions are exactly the interior likelihood stationary points. If Algorithm~\ref{FP_algo_format} converges to a fixed point and the likelihood has a unique stationary point that is the MLE, then its limit is the MLE of $\thetaunknown$.
\end{corollary}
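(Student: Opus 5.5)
This corollary is essentially a restatement of Theorem~\ref{theorem_cv_fixed_point_equations}, so the plan is to read off both directions from the factorization of the score established there:
\begin{align*}
\frac{\partial l(X~|~\vec{\eta}(\thetaunknown))}{\partial\thetaunknown} = n\MatrixForm{S}^{-1}\left[\vec{y}_{1:t}-\thetaunknown-\MatrixForm{B}\MatrixForm{D}^{-1}\left(\vec{y}_{(t+1):p}-\thetaknown\right)\right],
\end{align*}
with $\MatrixForm{S}$, $\MatrixForm{B}$ and $\MatrixForm{D}$ evaluated at $\vec{\eta}(\thetaunknown)$. The only point needing care is that this identity, and the invertibility of $\MatrixForm{S}$ and $\MatrixForm{D}$, are pointwise statements. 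They hold at every $\thetaunknown$ for which $\vec{\eta}(\thetaunknown)$ lies in the interior of the natural parameter space and $\MatrixForm{V}\succ 0$. I will make that standing restriction explicit: it is what ``interior'' means in the statement, and it is needed so that the hypotheses of Theorem~\ref{equal_variances_theorem_thingy} hold at the point in question.

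\emph{Forward direction.} Suppose $\thetaunknown^{\star}$ satisfies Equation~\eqref{coordinate_fixed_point_definition}. By Lemma~\ref{lemma_optimal_vector_cv_coefficients}, $\hat{c}_{ij}(\thetaunknown^\star)=-(\MatrixForm{B}\MatrixForm{D}^{-1})_{i,j-t}$ evaluated at $\vec{\eta}(\thetaunknown^\star)$. Since $\mu_j=\nu_j$ for $j>t$, stacking the $t$ coordinate equations gives the vector identity
\[
\thetaunknown^\star=\vec{y}_{1:t}-\MatrixForm{B}\MatrixForm{D}^{-1}(\vec{y}_{(t+1):p}-\thetaknown).
\]
Hence the bracket in the displayed factorization vanishes at $\thetaunknown^\star$, and so the score vanishes there.

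\emph{Converse.} Let $\thetaunknown^\star$ be an interior zero of the score. Then $\MatrixForm{S}(\thetaunknown^\star)$ is positive definite: it is the Schur complement of the positive definite $\MatrixForm{D}$ in $\MatrixForm{V}\succ 0$. So $n\MatrixForm{S}^{-1}$ is invertible and the bracket must vanish. Reading the bracket coordinatewise, and again using Lemma~\ref{lemma_optimal_vector_cv_coefficients}, gives Equation~\eqref{coordinate_fixed_point_definition}. The two directions together show that the set of CV fixed points coincides with the set of interior stationary points.

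\emph{Algorithm statement.} Suppose Algorithm~\ref{FP_algo_format} produces iterates converging to some $\thetaunknown^\star$ in the interior. The maps $f_i$ are continuous there, because $\psi$ is $C^2$, so $\MatrixForm{V}$ is continuous, and matrix inversion is continuous on positive definite matrices. Passing to the limit in the Gauss--Seidel update $\hat{\nu}_i^{(n+1)}=f_i(\hat{\nu}_1^{(n+1)},\dots,\hat{\nu}_{i-1}^{(n+1)},\hat{\nu}_i^{(n)},\dots,\hat{\nu}_t^{(n)})$ then gives $\nu_i^\star=f_i(\thetaunknown^\star)$ for every $i$, so the limit is a fixed point. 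By the first part it is a stationary point, and by the uniqueness hypothesis it is the MLE.

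I expect the main obstacle to be justifying this limit step, namely making sure the limit stays in the region where $\MatrixForm{V}\succ 0$, so that the $f_i$ are defined and continuous at it. I would handle this by building it into the hypothesis: the statement says ``converges to a fixed point'', and I will read that as convergence to an interior point.
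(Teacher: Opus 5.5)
Your proposal is correct and matches the paper's proof: both directions are read off the score factorization $n\MatrixForm{S}^{-1}[\cdots]$ from Theorem~\ref{theorem_cv_fixed_point_equations}, with $\hat{c}_{ij}=-(\MatrixForm{B}\MatrixForm{D}^{-1})_{i,j-t}$ and the invertibility of $\MatrixForm{S}$ used for the converse. Your continuity argument, showing that an interior limit of the Gauss--Seidel iterates is itself a fixed point, is a small strengthening; the paper simply assumes convergence to a fixed point as part of the hypothesis.
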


\begin{proof}
Suppose that $\thetaunknown^{\star}$ satisfies
Equation~\eqref{coordinate_fixed_point_definition}. By the definition of $\hat{c}_{ij}$ in Theorem~\ref{theorem_cv_fixed_point_equations}, stacking
the $t$ coordinate equations gives
\begin{align*}
\thetaunknown^{\star} = \vec{y}_{1:t} - \MatrixForm{B} \MatrixForm{D}^{-1}\left(\vec{y}_{(t+1):p} - \thetaknown \right),
\end{align*}
where $\MatrixForm{B}$ and $\MatrixForm{D}$ are evaluated at $\vec{\eta}(\thetaunknown^{\star})$. By Theorem~\ref{theorem_cv_fixed_point_equations},
\begin{align*}
\left. \frac{\partial l(X~|~\vec{\eta}(\thetaunknown))}{\partial\thetaunknown}\right|_{\thetaunknown=\thetaunknown^{\star}} = \vec{0}.
\end{align*}

Thus, every fixed point of the CV expressions is a likelihood stationary point. Conversely, suppose that $\thetaunknown^{\star}$ is an interior solution of
\begin{align*}
\left. \frac{\partial l(X~|~\vec{\eta}(\thetaunknown))}
{\partial\thetaunknown}\right|_{\thetaunknown=\thetaunknown^{\star}} = \vec{0}.
\end{align*}
By Theorem~\ref{theorem_cv_fixed_point_equations}, $
\thetaunknown^{\star} = \vec{y}_{1:t} - \MatrixForm{B} \MatrixForm{D}^{-1} \left( \vec{y}_{(t+1):p} - \thetaknown \right)$. Taking the $i^{\text{th}}$ coordinate and using $ \hat{c}_{ij} = - \left(\MatrixForm{B}\MatrixForm{D}^{-1} \right)_{i,j-t} $ gives
\begin{align*}
\nu_i^{\star} = y_i + \sum_{j=t+1}^{p} \hat{c}_{ij}(\nu_1^{\star},\hdots,\nu_t^{\star})(y_j-\mu_j), ~~~ 1\leq i\leq t.
\end{align*}
Therefore, every interior likelihood stationary point is a fixed point of the CV expressions.

If Algorithm~\ref{FP_algo_format} converges to a fixed point and the likelihood has a unique stationary point that is the MLE, then its limit is the MLE of $\thetaunknown$.
\end{proof}

Finally, Theorem~\ref{theorem_local_convergence_cv_fixed_point} gives a condition for Algorithm~\ref{FP_algo_format} to converge.

\begin{theorem}
\label{theorem_local_convergence_cv_fixed_point}
Suppose that $\thetaunknown^{\star}$ satisfies Equation~\eqref{coordinate_fixed_point_definition}. For $1\leq i\leq t$, define
\begin{align}
g_i(\nu_1,\hdots,\nu_t) = y_i + \sum_{j=t+1}^{p}\hat{c}_{ij}(\nu_1,\hdots,\nu_t)(y_j-\mu_j).
\label{definition_coordinate_update}
\end{align}
Suppose that $g_1,\hdots,g_t$ are continuously differentiable in a neighborhood of $\thetaunknown^{\star}$. Let $\MatrixForm{G}$ denote one complete sequential update of
Algorithm~\ref{FP_algo_format}. That is, for
$\thetaunknown=(\nu_1,\hdots,\nu_t)^T$,
\begin{align}
\MatrixForm{G}(\thetaunknown) = \left(
\begin{array}{c}
g_1(\nu_1,\hdots,\nu_t) \\
g_2(g_1(\nu_1,\hdots,\nu_t),\nu_2,\hdots,\nu_t) \\
\vdots \\
g_t(\nu_1^{+},\hdots,\nu_{t-1}^{+},\nu_t)
\end{array}\right),
\label{definition_iteration_map}
\end{align}
where $\nu_1^{+},\hdots,\nu_{t-1}^{+}$ denote the updated values obtained earlier in the same sequential iteration. Let
\begin{align}
\MatrixForm{J}_{\MatrixForm{G}}(\thetaunknown^{\star}) =
\left(\frac{\partial G_i}{\partial \nu_j}(\thetaunknown^{\star})\right)_{1\leq i,j\leq t}
\label{definition_jacobian_map}
\end{align}
denote the Jacobian matrix of $\MatrixForm{G}$ evaluated at $\thetaunknown^{\star}$. If
\begin{align}
\rho\left(\MatrixForm{J}_{\MatrixForm{G}}(\thetaunknown^{\star})\right) < 1\label{local_cv_fp_convergence_condition}
\end{align}
where $\rho(\cdot)$ denotes the spectral radius, then there exists a neighborhood $\mathcal{N}$ of $\thetaunknown^{\star}$ such that every initial estimate $\widehat{\thetaunknown}^{(1)}\in\mathcal{N}$ produces iterates satisfying
\begin{align}
\widehat{\thetaunknown}^{(r)} \rightarrow \thetaunknown^{\star}~~~\text{as }r\rightarrow\infty.
\end{align}
\end{theorem}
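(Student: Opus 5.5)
The statement is an instance of the classical local convergence theorem for fixed point iterations (Ostrowski's theorem). The plan is to verify its hypotheses for the map $\MatrixForm{G}$ in Equation~\eqref{definition_iteration_map}.

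\textbf{Step 1: $\thetaunknown^{\star}$ is a fixed point of $\MatrixForm{G}$.} Since $\thetaunknown^{\star}$ satisfies Equation~\eqref{coordinate_fixed_point_definition}, we have $g_1(\thetaunknown^{\star})=\nu_1^{\star}$. Hence the first sequential update leaves $\nu_1^{\star}$ unchanged. Inductively, if $\nu_1^{+}=\nu_1^{\star},\hdots,\nu_{i-1}^{+}=\nu_{i-1}^{\star}$, then the $i^{\text{th}}$ component is $g_i(\nu_1^{\star},\hdots,\nu_t^{\star})=\nu_i^{\star}$. So $\MatrixForm{G}(\thetaunknown^{\star})=\thetaunknown^{\star}$.

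\textbf{Step 2: $\MatrixForm{G}$ is $C^1$ near $\thetaunknown^{\star}$.} Each $g_i$ is continuously differentiable on a neighborhood $U$ of $\thetaunknown^{\star}$. The component $G_i$ is a composition of $g_i$ with the maps producing $\nu_1^{+},\hdots,\nu_{i-1}^{+}$. By continuity and Step 1, there is a smaller neighborhood on which every intermediate argument stays inside $U$. By the chain rule, $\MatrixForm{G}$ is then $C^1$ there, and $\MatrixForm{J}_{\MatrixForm{G}}$ is continuous at $\thetaunknown^{\star}$.

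\textbf{Step 3: contraction in an adapted norm.} This is the main step. Since $\rho(\MatrixForm{J}_{\MatrixForm{G}}(\thetaunknown^{\star}))<1$, fix $\epsilon>0$ with $\rho+2\epsilon<1$. A standard lemma (via the Jordan or Schur form, rescaling the off-diagonal entries) gives a vector norm $\|\cdot\|_*$ whose induced matrix norm satisfies $\|\MatrixForm{J}_{\MatrixForm{G}}(\thetaunknown^{\star})\|_*\leq\rho+\epsilon$. By continuity of the Jacobian, choose a closed ball $\mathcal{N}=\{\|\thetaunknown-\thetaunknown^{\star}\|_*\leq\delta\}$ on which $\|\MatrixForm{J}_{\MatrixForm{G}}(\thetaunknown)\|_*\leq q:=\rho+2\epsilon<1$. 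The mean value inequality, i.e.\ $\MatrixForm{G}(\thetaunknown)-\MatrixForm{G}(\thetaunknown^{\star})=\int_0^1\MatrixForm{J}_{\MatrixForm{G}}(\thetaunknown^{\star}+s(\thetaunknown-\thetaunknown^{\star}))\,ds\,(\thetaunknown-\thetaunknown^{\star})$, then gives
\begin{align*}
\|\MatrixForm{G}(\thetaunknown)-\thetaunknown^{\star}\|_*\leq q\|\thetaunknown-\thetaunknown^{\star}\|_*.
\end{align*}
So $\mathcal{N}$ is mapped into itself. By induction, $\|\widehat{\thetaunknown}^{(r)}-\thetaunknown^{\star}\|_*\leq q^{r-1}\|\widehat{\thetaunknown}^{(1)}-\thetaunknown^{\star}\|_*\to0$. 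Norm equivalence in $\mathbb{R}^t$ gives convergence in any norm.

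\textbf{Expected obstacles.} The adapted norm construction is the only nontrivial ingredient, and it can be cited as a standard linear algebra fact. The other care point is Step 2: the neighborhood must be shrunk so that the sequential Gauss--Seidel compositions stay in the domain where the $g_i$ are $C^1$. This also keeps $\MatrixForm{V}$ positive definite there, so the coefficients $\hat{c}_{ij}=-(\MatrixForm{B}\MatrixForm{D}^{-1})_{i,j-t}$ remain well defined.
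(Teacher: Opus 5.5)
Your proposal is correct and follows essentially the same route as the paper. Both arguments check $\MatrixForm{G}(\thetaunknown^{\star})=\thetaunknown^{\star}$, choose an adapted norm with $\|\MatrixForm{J}_{\MatrixForm{G}}(\thetaunknown^{\star})\|_\star<q<1$, use continuity of the Jacobian and the mean value inequality on a small closed ball to get a self-map with factor $q$, and conclude convergence. Where you depart from the paper you improve on it: you justify why the Gauss--Seidel compositions keep $\MatrixForm{G}$ continuously differentiable near $\thetaunknown^{\star}$ (the paper only asserts this), and you finish with a direct geometric-rate induction instead of citing the contraction mapping theorem.
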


\begin{proof}
Since $\thetaunknown^{\star}$ satisfies Equation~\eqref{coordinate_fixed_point_definition}, $ \nu_i^{\star} = g_i(\nu_1^{\star},\hdots,\nu_t^{\star}), 1\leq i\leq t$. Therefore, one complete sequential update beginning at $\thetaunknown^{\star}$ leaves every coordinate unchanged. Hence, $\MatrixForm{G}(\thetaunknown^{\star}) = \thetaunknown^{\star}$. 

Now suppose that $\rho\left(\MatrixForm{J}_{\MatrixForm{G}}(\thetaunknown^{\star})\right)< 1$. Choose $q$ such that $\rho\left(\MatrixForm{J}_{\MatrixForm{G}}(\thetaunknown^{\star})
\right) < q < 1$. Then there exists a vector norm $\|\cdot\|_{\star}$ such that $\left\| \MatrixForm{J}_{\MatrixForm{G}}(\thetaunknown^{\star}) \right\|_{\star} < q < 1$.

Since $g_1,\hdots,g_t$ are continuously differentiable in a neighborhood of $\thetaunknown^{\star}$, $\MatrixForm{G}$ is continuously differentiable in a neighborhood of $\thetaunknown^{\star}$. Therefore, there exists $r>0$ such that $\mathcal{N} = \left\{ \thetaunknown: \left\| \thetaunknown-\thetaunknown^{\star} \right\|_{\star} \leq r \right\}$ is contained in that neighborhood and $\sup_{\thetaunknown\in\mathcal{N}} \left\| \MatrixForm{J}_{\MatrixForm{G}}(\thetaunknown)\right\|_{\star} \leq q < 1$.

For any $\thetaunknown,\widetilde{\thetaunknown}\in\mathcal{N}$, the multivariate mean value inequality \cite{ortega1970iterative} gives
\begin{align*}
\left\|\MatrixForm{G}(\thetaunknown) - \MatrixForm{G}(\widetilde{\thetaunknown}) \right\|_{\star} \leq q \left\| \thetaunknown-\widetilde{\thetaunknown} \right\|_{\star}.
\end{align*}
Hence, for every $\thetaunknown\in\mathcal{N}$,
\begin{align*}
\left\|\MatrixForm{G}(\thetaunknown) - \thetaunknown^{\star} \right\|_{\star} = \left\| \MatrixForm{G}(\thetaunknown) - \MatrixForm{G}(\thetaunknown^{\star}) \right\|_{\star} \leq q \left\| \thetaunknown \thetaunknown^{\star} \right\|_{\star} \leq qr < r.
\end{align*}
Therefore, $\MatrixForm{G}(\mathcal{N}) \subseteq \mathcal{N}$, and $\MatrixForm{G}$ is a contraction from $\mathcal{N}$ to itself. Since $\MatrixForm{G}(\thetaunknown^{\star}) = \thetaunknown^{\star}$, the contraction mapping theorem \cite{ortega1970iterative} implies that every initial estimate $\widehat{\thetaunknown}^{(1)} \in \mathcal{N}$ produces iterates satisfying $
\widehat{\thetaunknown}^{(r)}
\rightarrow
\thetaunknown^{\star}$ as $r\rightarrow\infty$.

\end{proof}

\subsection{DISCUSSION OF CONVERGENCE AND VARIANCE OF SMALL $K$}
\label{CONVERGENCEDISCUSSION}

We give plots (Figure~\ref{boxplot_convergence_FH} and Figure~\ref{boxplot_convergence_RP}) estimating the rate of convergence in Appendix~\ref{secA4} for feature hashing and random projections, and getting a bound on the rate of convergence would be a future open question for different types of exponential families.

In the case of the bivariate Normal (feature hashing and random projections), the form of the MLE is a cubic, and hence there is a possibility of the MLE (or Algorithm~\ref{FP_algo_format}) converging to the wrong root. However, Lemma 4 in \citep{li2006improving} gives an upper bound the probability of the cubic having multiple real roots (which decreases exponentially fast as $k$ increases). Our experiments in Appendix~\ref{secA4} (in particular Table~\ref{outlier_FH_table} and Table~\ref{outlier_RP_table}) show that even with $k = 10$, the proportion of cubics with three real roots is negligible, and with $k = 20$, there are no cubics with three real roots.

On the other hand, if applying Algorithm~\ref{FP_algo_format} gives a closed form solution such as an unbiased estimator for the trace as discussed in Section~\ref{sect_discussion} and Appendix~\ref{hutch_expt_appendix} for Hutchinson's Trace Estimator, then we can just do standard variance analysis on the derived estimator as shown in Appendix~\ref{secA5}.

\newpage
\section{SUPPLEMENT TO EXPERIMENTS}\label{secA4}

\subsection{LAYOUT OF OUR SUPPLEMENTARY MATERIAL AND CODE FILES}

We first describe the structure of code in our folder of supplementary material. The folder: {\tt plots in paper} gives the plots that we use in paper in eps and pdf format, for ease of comparison.

The other two folders, {\tt feature hashing} and {\tt random projection} contain the code to generate all the plots in our paper. 

There are three Matlab code files which we describe in detail below. The three files 
\begin{itemize}
\item {\tt newton\_raphson\_cubic\_vectorized.m}
\item {\tt secant\_cubic\_vectorized.m}
\item {\tt cv\_vectorized.m}
\end{itemize}
compute the estimates of $\langle \vec{x}_1,\vec{x}_2\rangle$ under the respective algorithms, with 10 iterations/update steps till convergence at $\epsilon < 0.0001$. We pick 10 iterations/update steps till convergence since most estimates took at most 7 iterations/update steps till convergence, with remaining estimates failing to converge (even with 100 iterations/update steps) ; i.e. $\|f_{n+1} - f_n\|$ is not (almost always) monotonically decreasing.

We use the same vectorized procedure across all three Matlab files ; where only the update step changes. Algorithm~\ref{FP_algo_format} via the CVE is given by
\begin{align*}
f_{n+1} & = \frac{\sum_{s=1}^k v_{1s}v_{2s}}{k} - \frac{ f_n \left( \|\vec{x}_2\|^2 \left( \frac{\sum_{s=1}^k v_{1s}^2}{k} - \|\vec{x}_1\|^2  \right) +\|\vec{x}_1\|^2 \left( \frac{\sum_{s=1}^k v_{2s}^2}{k} - \|\vec{x}_2\|^2  \right) \right)}{ f_n^2 + \|\vec{x}_1\|^2 \|\vec{x}_2\|^2}. \notag
\end{align*}
All three files do the following two checks after the 10 iterations/update steps to maintain consistency and to mimic what would have been done in an actual (non-experimental) situation.
\begin{enumerate}
\item If there was no convergence after 10 iterations/update steps, we set the estimate to be the initial estimate given by feature hashing / random projection.
\item If the converged estimate was greater than $\sqrt{\|\vec{x}_1\|^2 \|\vec{x}_2\|^2}$, we set it to be $\sqrt{\|\vec{x}_1\|^2 \|\vec{x}_2\|^2}$. Equivalently, if the converged estimate was lesser than $-\sqrt{\|\vec{x}_1\|^2 \|\vec{x}_2\|^2}$, we set it to be $-\sqrt{\|\vec{x}_1\|^2 \|\vec{x}_2\|^2}$. The reason for this comes from the cosine rule since
\begin{align*}
\cos(\theta) = \frac{\langle \vec{x}_1,\vec{x}_2\rangle}{\sqrt{\|\vec{x}_1\|^2 \|\vec{x}_2\|^2}}
\end{align*}
and as we would not know $\theta$ {\it a priori} (since we are estimating the inner product), 
\begin{align*}
\begin{array}{r c c c c c}
 & -1 &\leq &\cos(\theta) &\leq &1 \\
\Rightarrow & -\sqrt{\|\vec{x}_1\|^2 \|\vec{x}_2\|^2} &\leq& \langle\vec{x}_1,\vec{x}_2\rangle &\leq &\sqrt{\|\vec{x}_1\|^2 \|\vec{x}_2\|^2}.
\end{array}
\end{align*}

\end{enumerate}
The motivation for providing these three files is to show the difference (in implementation) for these three algorithms, since our experiments focus on comparing the performance of two common root-finding algorithms to recover the estimates of $\hat{\eta}$ under MLE, and {\tt CV-FP} given by Algorithm~\ref{FP_algo_format} in our paper.

{\it We stress that our paper demonstrates the performance of Algorithm~\ref{FP_algo_format}. Therefore, we are not aiming to get improved performance on benchmarked datasets, or outperform state-of-the-art algorithms. }

\subsection{FEATURE HASHING: FASTER CONVERGENCE AND BETTER STABILITY FOR THE MLE} \label{sec:fh_exp_appendix}

The feature hashing algorithm~\citep{weinberger2009feature} is a widely used method to quickly estimate inner products between high dimensional vectors $\vec{x}_i,\vec{x}_j$. It is a hashing-based dimension reduction algorithm that compresses high-dimensional data points into low-dimensional points so that the estimated pairwise inner product of the low-dimensional data points gives a close approximation of the corresponding pairwise inner product of the original high-dimensional data points.

Let $\MatrixForm{X}_{n \times p}$ be a data matrix, $h:[p] \mapsto [k]$ and $\phi:[p] \mapsto \{+1, -1\}$ be hash functions from a $2$-wise universal hash family. Suppose $\vec{x}_i$ is the $i$-th row of the data matrix $X$ and $\vec{v}_i$ be its $k$-dimensional sketch obtained using the feature hashing method. Then for any $s \in [k]$, 
\begin{align}
v_{is} = \sum_{t=1~s.t.~h(t)=s}^{p}\phi(t) \, x_{it}.  \label{eq:eqn_fh_append}
\end{align}
Let $\MatrixForm{R}_{n \times k}$ be a matrix defined as $\MatrixForm{R} = \MatrixForm{D}\MatrixForm{P}$, where $\MatrixForm{D}_{n \times n}$ is a diagonal matrix with i.i.d. $\pm1$ random entries and $\MatrixForm{P}_{n \times k}$ is a random matrix with exactly one non-zero entry at a random position per row, i.e. $\MatrixForm{P}_{i,:} \sim \mathrm{Uniform}\{\vec{e}_1, \ldots, \vec{e}_p\}$, where for $t \in [p]$, $\vec{e}_{t}$ represents a standard basis of the $p$-dimensional space. Then, the feature hashing operation on the data matrix $\MatrixForm{X}$ can also be visualized as $\MatrixForm{V}_{n \times k} = \MatrixForm{X}\MatrixForm{R}$. Suppose $\vec{v}_i, \vec{v}_{j} \in \R^k$ are the sketch vectors of the $i$-th and $j$-th row of the data matrix $\MatrixForm{X}$ (i.e. $\vec{x}_i$ and $\vec{x}_j$) obtained using feature hashing, respectively. Then $\langle \vec{v}_i, \vec{v}_j \rangle$ gives an unbiased estimate of $\langle \vec{x}_i, \vec{x}_j \rangle$.

\cite{verma2022variance} used Lyapunov's Central Limit Theorem to show that for any two rows $\vec{v}_{i}, \vec{v}_j$ in the matrix $\MatrixForm{V}$, $(\vec{v}_{i}, \vec{v}_{j}) = (v_{i1,}, \ldots, v_{ik},v_{j1},\ldots,v_{jk})$ asymptotically follows the multivariate Normal distribution with
\begin{align*}
    \left(\begin{array}{c}
        \vec{v}_{i}\\
        \vec{v}_j
    \end{array}\right) \sim \mathcal{N}\left(\left(\begin{array}{c}
        0\\
        \vdots\\
        0\\
        0\\
        \vdots\\
        0
    \end{array}\right),\frac{1}{k}\left(\begin{array}{cccccc}
    \|\vec{x}_i\|^2&\cdots& 0&\langle \vec{x}_i,\vec{x}_j\rangle&\cdots&0\\
    \vdots&\ddots&\vdots&\vdots&\ddots&\vdots\\
    0& \cdots &\|\vec{x}_i\|^2& 0 & \cdots &\langle \vec{x}_i,\vec{x}_j \rangle\\
    \langle \vec{x}_i,\vec{x}_j\rangle&\cdots&0& \|\vec{x}_j\|^2&\cdots& 0\\
    \vdots&\ddots&\vdots&\vdots&\ddots&\vdots\\
    0& \cdots &\langle \vec{x}_i,\vec{x}_j \rangle& 0 & \cdots &\|\vec{x}_j\|^2
    \end{array} \right)\right).
\end{align*}

\cite{verma2022variance} then used the asymptotic normality of $(\Vec{v}_{i}, \vec{v}_j)$ and the fact that if the original $\ell_2$ norm of the vectors $\vec{x}_1,\hdots,\vec{x}_n$ are known (as also considered in \cite{li2006improving} in case of random projection) to construct an MLE to estimate $\langle \vec{x}_i,\vec{x}_j\rangle$.  Their MLE estimate of the $\langle\vec{x}_i, \vec{x}_j \rangle$ corresponds to  finding the root of the following cubic equation:
 
\begin{multline}
    \lambda^3 - \left(\sum_{s=1}^k v_{is} v_{js} \right)\lambda^2 + \left( \|\vec{x}_i\|^2 \left(\sum_{s=1}^k v_{js}^2 \right) + \|\vec{x}_j\|^2 \left(\sum_{s=1}^k v_{is}^2 \right) - \|\vec{x}_i\|^2 \|\vec{x}_j\|^2\right) \lambda \\ - \|\vec{x}_i\|^2 \|\vec{x}_j\|^2 \left( \sum_{s=1}^k v_{is} v_{js}\right) =0.  \label{eq:eqn_mle_fh_appendix}
\end{multline}

However, instead of showing that $(\vec{v}_{i} ,\vec{v}_j)$ follows a multivariate Normal distribution, applying Lyapunov's Central Limit Theorem shows that the distribution of the tuple $(v_{is},v_{js})$, $1\leq s \leq k$ is bivariate Normal with
\begin{align*}
\left(\begin{array}{c}
v_{is}\\
v_{js}
\end{array}\right) \sim \mathcal N\left(\left(\begin{array}{c}
0 \\
0
\end{array}\right), \frac{1}{k}\left(\begin{array}{c c}
\|\vec{x}_i\|^2 & \langle \vec{x}_i,\vec{x}_j\rangle \\
\langle \vec{x}_i,\vec{x}_j\rangle & \|\vec{x}_j\|^2
\end{array}\right)\right),
\end{align*}
and hence an MLE is constructed to estimate $\langle\vec{v}_i, \vec{v}_j \rangle$. The probability density function of the distribution of $\{(v_{is}, v_{js})\}_{i=1}^k$ is given by
\begin{align*}
\frac{ \exp \left\{ k
    \left(\begin{array}{c c c c c}
        \frac{-\|\vec{x}_j\|^2}{2(\|\vec{x}_i\|^2\|\vec{x}_j\|^2-\langle \vec{x}_i,\vec{x}_j\rangle^2)}\\
        \frac{-\|\vec{x}_i\|^2}{2(\|\vec{x}_i\|^2\|\vec{x}_j\|^2-\langle \vec{x}_i,\vec{x}_j\rangle^2)}\\
        \frac{\langle \vec{x}_i,\vec{x}_j\rangle}{(\|\vec{x}_i\|^2\sigma_{22}-\langle \vec{x}_i,\vec{x}_j\rangle^2)}
    \end{array}\right)^T
    \left(\begin{matrix}
        \sum_{s=1}^k v_{is}^2\\
        \sum_{s=1}^k v_{js}^2\\
        \sum_{s=1}^k v_{is}v_{js}\\
    \end{matrix}\right)
    \right\}}
    { \exp\left\{ 
    k
     \left(
    \frac{1}{2} \log (\|\vec{x}_i\|^2\|\vec{x}_j\|^2-\langle \vec{x}_i,\vec{x}_j\rangle^2) 
    \right)
    \right\} } \frac{1}{(2\pi)^{\frac{k}{2}}}
\end{align*}
and the conditions in Theorem~\ref{equal_variances_theorem_thingy} hold. Hence, the asymptotic variance given by the MLE of $\langle \vec{x}_i,\vec{x}_j\rangle$ must be equivalent to the variance of the CVE.

The control variate estimator to estimate $\langle\vec{x}_i,\vec{x}_j\rangle$ is given by
\begin{align}
\langle \vec{x}_i,\vec{x}_j\rangle = \Expt{\sum_{s=1}^k v_{is}v_{js} + \hat{c}_1\left( \sum_{s=1}^k v_{is}^2 - \|\vec{x}_i\|^2 \right) + \hat{c}_2\left( \sum_{s=1}^k v_{js}^2 - \|\vec{x}_j\|^2 \right)}, \label{eq:eqn_cv_fh_append}
\end{align}
with
\begin{align*}
\hat{c}_1 = -\frac{\langle \vec{x}_i,\vec{x}_j\rangle \|\vec{x}_j\|^2}{\langle \vec{x}_i,\vec{x}_j\rangle^2 + \|\vec{x}_i\|^2 \|\vec{x}_j\|^2}~~~~~\hat{c}_2 = -\frac{\langle \vec{x}_i,\vec{x}_j\rangle \|\vec{x}_i\|^2}{\langle \vec{x}_i,\vec{x}_j\rangle^2 + \|\vec{x}_i\|^2 \|\vec{x}_j\|^2}.
\end{align*}
Since $\langle \vec{x}_i,\vec{x}_j\rangle$ is what we want to estimate, yet appears in $\hat{c}_1,\hat{c}_2$, Algorithm~\ref{FP_algo_format} gives an algorithm of the form
\begin{align}
f_{n+1} & = \sum_{s=1}^k v_{is}v_{js} - \frac{f_n \|\vec{x}_j\|^2}{f_n^2 + \|\vec{x}_i\|^2 \|\vec{x}_j\|^2}\left( \sum_{s=1}^k v_{is}^2  - \|\vec{x}_i\|^2 \right) - \frac{f_n \|\vec{x}_i\|^2}{f_n^2 + \|\vec{x}_i\|^2 \|\vec{x}_j\|^2}\left( \sum_{s=1}^k v_{js}^2 - \|\vec{x}_j\|^2 \right) \notag \\
 & = \sum_{s=1}^k v_{is}v_{js} - \frac{ f_n \left( \|\vec{x}_j\|^2 \left( \sum_{s=1}^k v_{is}^2 - \|\vec{x}_i\|^2  \right) +\|\vec{x}_i\|^2 \left( \sum_{s=1}^k v_{js}^2 - \|\vec{x}_j\|^2  \right) \right)}{ f_n^2 + \|\vec{x}_i\|^2 \|\vec{x}_j\|^2}  \label{eq:eqn_iter_cv_fh_appendix}
\end{align}
where we set $f_1 = \sum_{s=1}^k v_{is}v_{js}$.

Note that the control variate estimate given in Equation~\eqref{eq:eqn_cv_fh_append} is a generic version of the control variate estimate proposed by \cite{verma2022variance} and converges to their estimate when $\hat{c}_1 = \hat{c}_2$.

We use the following experimental setup to demonstrate the practical applications of the equivalence between CVE and MLE using the feature hashing algorithm.

\textbf{Hardware Description:} We performed the experiments on a machine having the following configuration: CPU: Intel(R) Core(TM) i7-8750H CPU @ 2.21GHz x 6; Memory: 16 GB; OS: Windows 10. 

\textbf{Dataset:} We randomly generate vector pairs $\vec{x}_1,\vec{x}_2 \in \mathbb R^{100,000}$ with varying squared norms and angles between them. Our experimental study involves the ratios $r \in \{0.1,0.5,1,2,10\}$ where $\|\vec{x}_1\|^2 = r \|\vec{x}_2\|^2$, and angles $\theta \in \{\frac{\pi}{12}, \frac{\pi}{4}, \frac{\pi}{2}, \frac{3\pi}{4}, \frac{11\pi}{12} \}$.\\

We generate feature hashing sketches, denoted as $\vec{v}_i$ and $\vec{v}_j$, for the vector pairs $\vec{x}_{i}$ and $\vec{x}_{j}$, respectively, for different sketch sizes ($k$) ranging from $1$ to $100$ using Equation~\eqref{eq:eqn_fh_append}. We then compute the estimate of the inner product as $\langle \vec{x}_i, \vec{x}_j\rangle \approx \sum_{s=1}^k v_{is}v_{js}$ and call it the baseline estimate. Further, we compute estimates of the inner product using: a) MLE \citep{verma2022variance} (Equation~\eqref{eq:eqn_mle_fh_appendix}) via Newton Raphson ({\tt MLE-NR}), b) MLE (Equation~\eqref{eq:eqn_mle_fh_appendix}) via the Secant method ({\tt MLE-Secant}), c) CV  (Equation~\eqref{eq:eqn_cv_fh_append} where the initial estimate $\sum_{s=1}^k v_{is}v_{js}$is used as part of $\hat{c}_i$ ({\tt CV-Init}), d) CV (Equation~\eqref{eq:eqn_cv_fh_append}) where the empirical covariances and variances are found for $\hat{c}_i$ ({\tt CV-Emp}), and e) our CVE   (Algorithm~\ref{FP_algo_format}) ({\tt CV-FP}). 
The MLE formulation implies that all five methods are finding the root of a cubic. We repeat this procedure using different hash functions (or projection matrix $\MatrixForm{R}$) $10,000$ times and compute the corresponding inner product estimates. We also record the number of updates each method took to converge to the root and the number of cubic equations with three real roots over $10,000$ iterations.

We use the following metrics to evaluate the performance of the aforementioned five methods:

(i) \textbf{Mean Square Error (MSE) w.r.t. $k$ over $10,000$ iterations across various combinations of $r$ and $\theta$:} A lower MSE indicates better performance of the method. We compute the MSE of the estimates provided by the baseline methods over the $10,000$ independent iterations and summarise them in Figure~\ref{5by5plot_FH}.

We can make two observations from the plots in Figure~\ref{5by5plot_FH}. First, when the angle $\theta$ between the two vectors is fixed, the squared norm of the vectors does not affect the convergence of all five methods, as each horizontal row displays a similar trend. Second, there is varying performance among the inner product estimates obtained via different root-finding algorithms when vectors are either close to each other ($\theta = \frac{\pi}{12})$ or far apart from each other ($\theta = \frac{11 \pi}{12}$).

\begin{figure}[h]
\begin{center}
       \includegraphics[width=16cm]{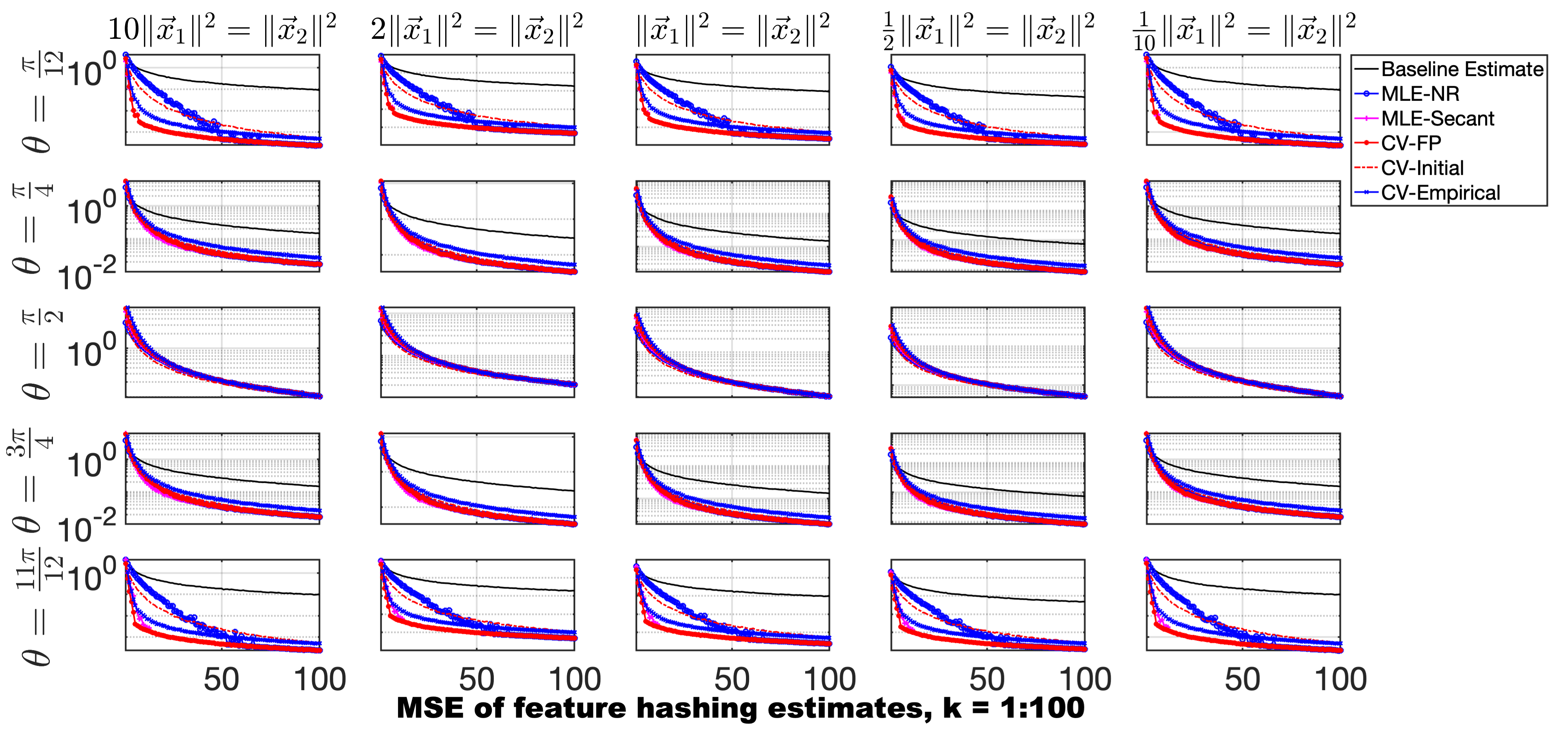}
    \caption{Plot of mean squared error (MSE) of respective estimates of inner product between $\vec{x}_1, \vec{x}_2$ as the squared vector norm and angle $\theta$ between the vectors vary for feature hashing. A lower value is better. \label{5by5plot_FH}}
\end{center}
\end{figure}

Further, the plots depicted in Figure~\ref{single_MSE_FH} examine the case where $\|\vec{x}_1\|^2 = \|\vec{x}_2\|^2$, and $\theta = \frac{\pi}{12}$ and summarizes the MSE of five baseline methods. From Figure~\ref{single_MSE_FH}, it is clear that {\tt MLE-NR} has inferior performance (excluding the baseline random projection estimate) until the number of observations $k$ surpasses a certain value ($\approx 70$); after that, it has similar performance to {\tt CV-FP} and {\tt MLE-Secant}. For the smaller value of $k$, the MSE of {\tt CV-Emp} is lower than {\tt CV-Init}. However, the MSE of both algorithms converges as $k$ increases. {\tt MLE-Secant} and {\tt CV-FP} have the same performance, which is superior to the other algorithms. 

\begin{figure}[h]
\begin{center}
\hspace*{-1cm} \includegraphics[width=15cm]{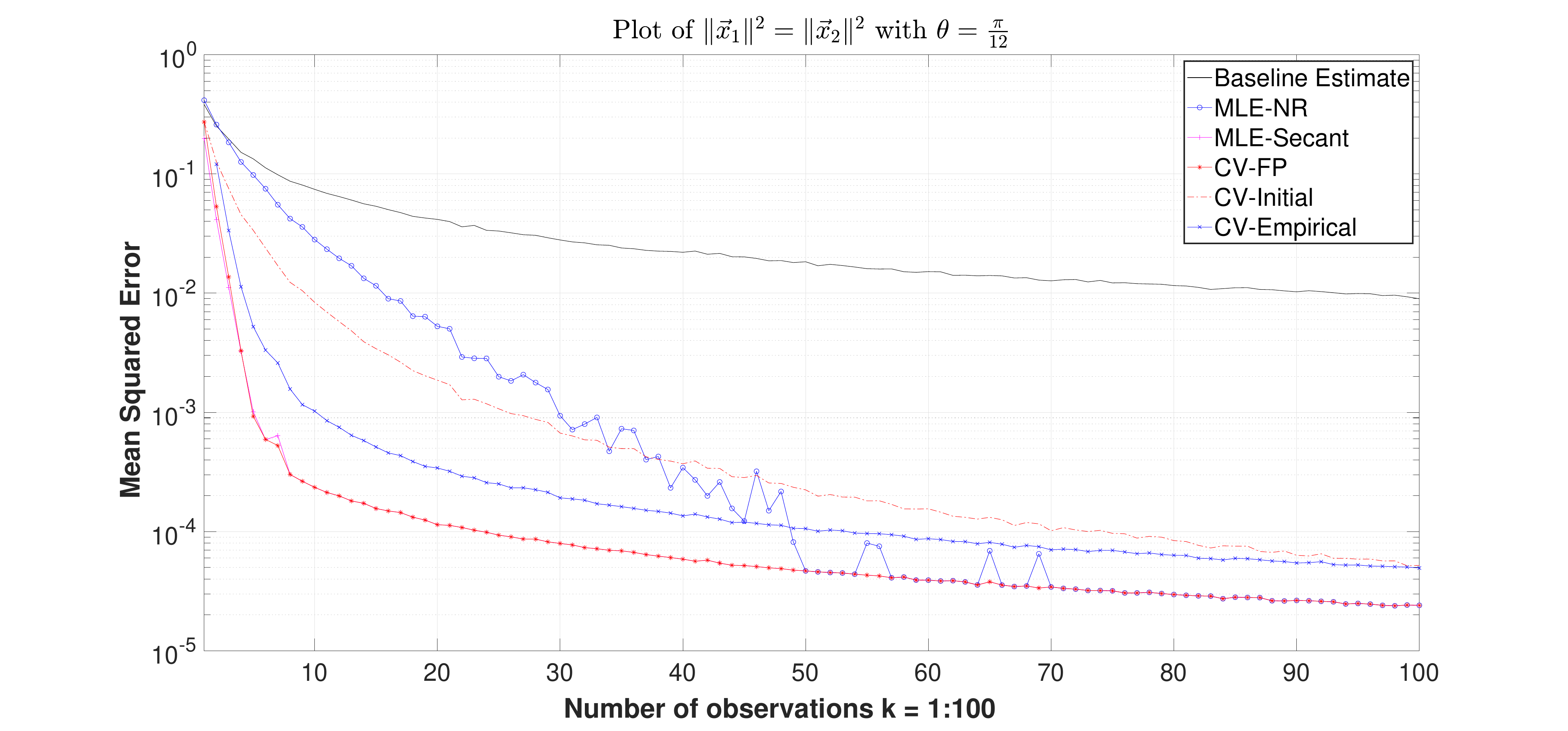}
    \caption{MSE plot when $\|\vec{x}_1\|^2 =\|\vec{x}_2\|^2$ with $\theta = \frac{\pi}{12}$ for feature hashing. A lower value is better. \label{single_MSE_FH}}
\end{center}
\end{figure}

(ii) \textbf{Proportion of outliers in $10,000$ estimates:} A smaller proportion of outliers suggests better performance. We generate the boxplots of the estimates of the inner product obtained using baseline methods over $10,000$  runs and present them in Figure~\ref{side_by_side_bp_FH} for $k=10$ and $k=20$. We summarise the corresponding proportion of outliers and a fraction of cubic equations with three real roots in Table~\ref{outlier_FH_table}.

\begin{figure}[h]
\begin{center}
\includegraphics[width=15cm]{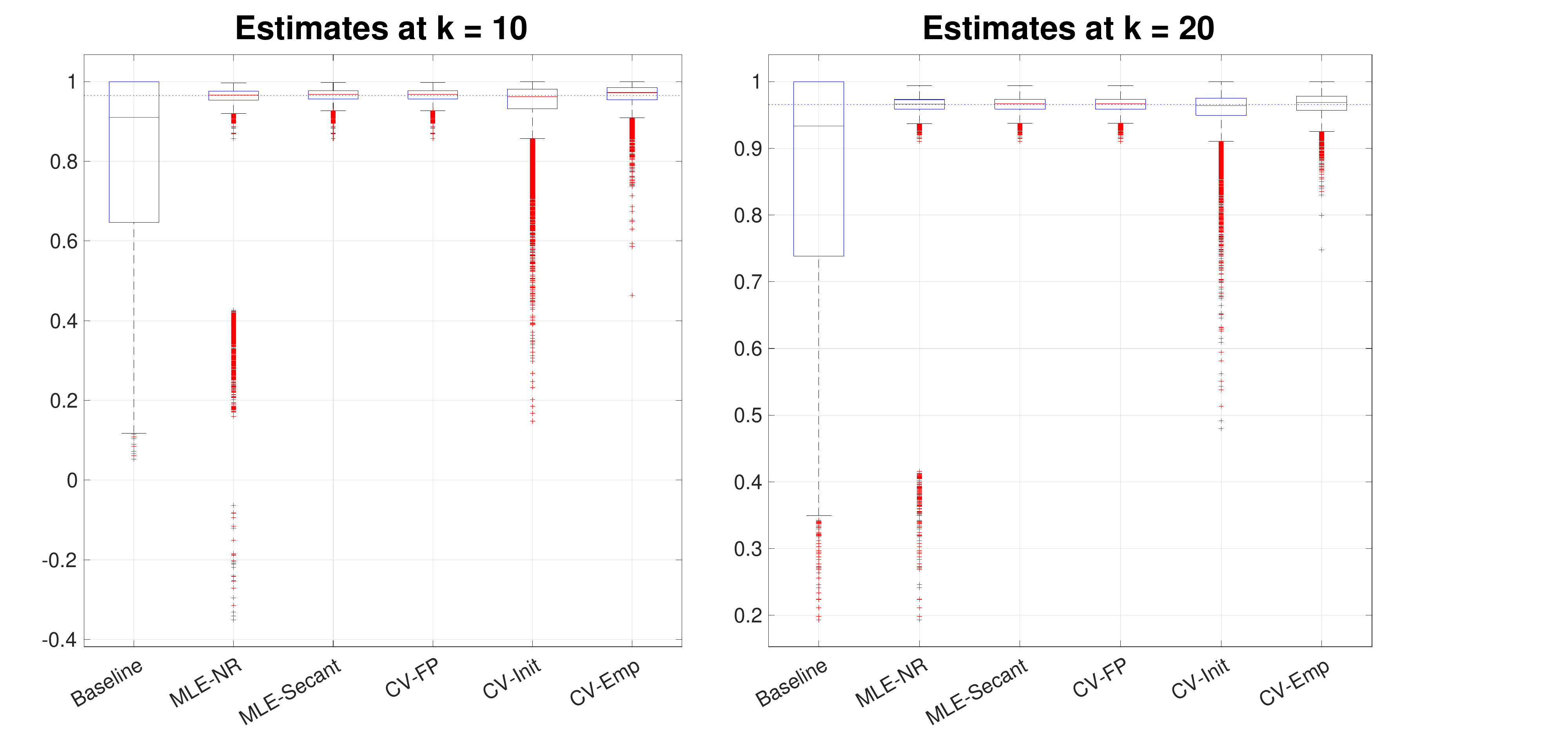}
    \caption{Boxplots of estimated inner products via the baseline feature hashing estimate ({\tt FH}), Newton Raphson ({\tt MLE-NR}), Secant method ({\tt MLE-Secant}), our algorithm ({\tt CV-FP}), CV using initial estimate ({\tt CV-Init}), and CV using empirical values of variances and covariances ({\tt CV-Emp}). The blue horizontal line denotes the true inner product between the vectors $\vec{x}_1,\vec{x}_2$. A narrower boxplot with few outliers is better. \label{side_by_side_bp_FH}}
\end{center}
\end{figure}

\begin{table}[h]
\begin{center}
\begin{tabular}{|lcc|}
\hline
\multicolumn{3}{|c|}{\textbf{Proportion of Outliers / Cubic with three real roots}}                        \\ \hline
\multicolumn{1}{|l|}{\textbf{Method}} & \multicolumn{1}{c|}{$\boldsymbol{k = 10}$} & $\boldsymbol{k = 20}$ \\ \hline
\multicolumn{1}{|l|}{MLE-NR}          & \multicolumn{1}{c|}{$0.1097$} & $0.0871$ \\ \hline
\multicolumn{1}{|l|}{MLE-Secant}      & \multicolumn{1}{c|}{$0.0670$} & $0.0758$ \\ \hline
\multicolumn{1}{|l|}{CV-FP}           & \multicolumn{1}{c|}{$0.0670$} & $0.0759$ \\ \hline
\multicolumn{1}{|l|}{CV-Init}         & \multicolumn{1}{c|}{$0.1156$} & $0.1959$ \\ \hline
\multicolumn{1}{|l|}{CV-Emp}          & \multicolumn{1}{c|}{$0.1020$} & $0.0736$ \\ \hline
\multicolumn{1}{|l|}{{\it Cubic with three real roots}}          & \multicolumn{1}{c|}{$0.0026$} & $0$ \\ \hline
\end{tabular}
\end{center}
\caption{Proportion of outliers in the first five lines of the table. Proportion of cubics with three real roots for the five methods at $k= 10$ and $k = 20$.  \label{outlier_FH_table}}
\end{table}

\clearpage

From Figure~\ref{side_by_side_bp_FH}, it is evident that the interquartile range of estimates using all five algorithms is considerably smaller compared to the baseline feature hashing estimate, and the median closely approximates the true inner product between the vectors $\langle \vec{x}_1,\vec{x}_2\rangle$. Table~\ref{outlier_FH_table} highlights that the proportion of outliers for both {\tt MLE-NR} and {\tt CV-Emp} decreases from $k=10$ to $k=20$, though {\tt MLE-NR} exhibits outliers further from the true value compared to {\tt CV-Emp} and {\tt CV-Init} (as depicted in Figure~\ref{side_by_side_bp_FH}). Furthermore, at $k=10$, there are $26$ out of $10,000$ cubic equations with three real roots, whereas at $k=20$, there are none, indicating that the poor performance of NR at small $k$ is not due to the fact that there are multiple roots. CV-FP and MLE-Secant have a narrower interquartile range and a lower proportion of outliers than the others, resulting in their superiority over other methods. 

(iii) \textbf{Number of updates required by each method until convergence to the estimate at respective sketch size ($k$):} A smaller number of updates implies that the method converges faster to the root. As we repeat the experiments $10,000$ times, each time we record the number of updates {\tt MLE-NR}, {\tt MLE-Secant}, and {\tt CV-FP} methods took to coverage to the root (i.e., inner product estimate) at different values of the sketch size ($k$). We generate the boxplots corresponding to the number of updates and summarise it in Figure~\ref{boxplot_update_steps_FH}.

\begin{figure}[h]
\begin{center}
 \includegraphics[width=15cm]{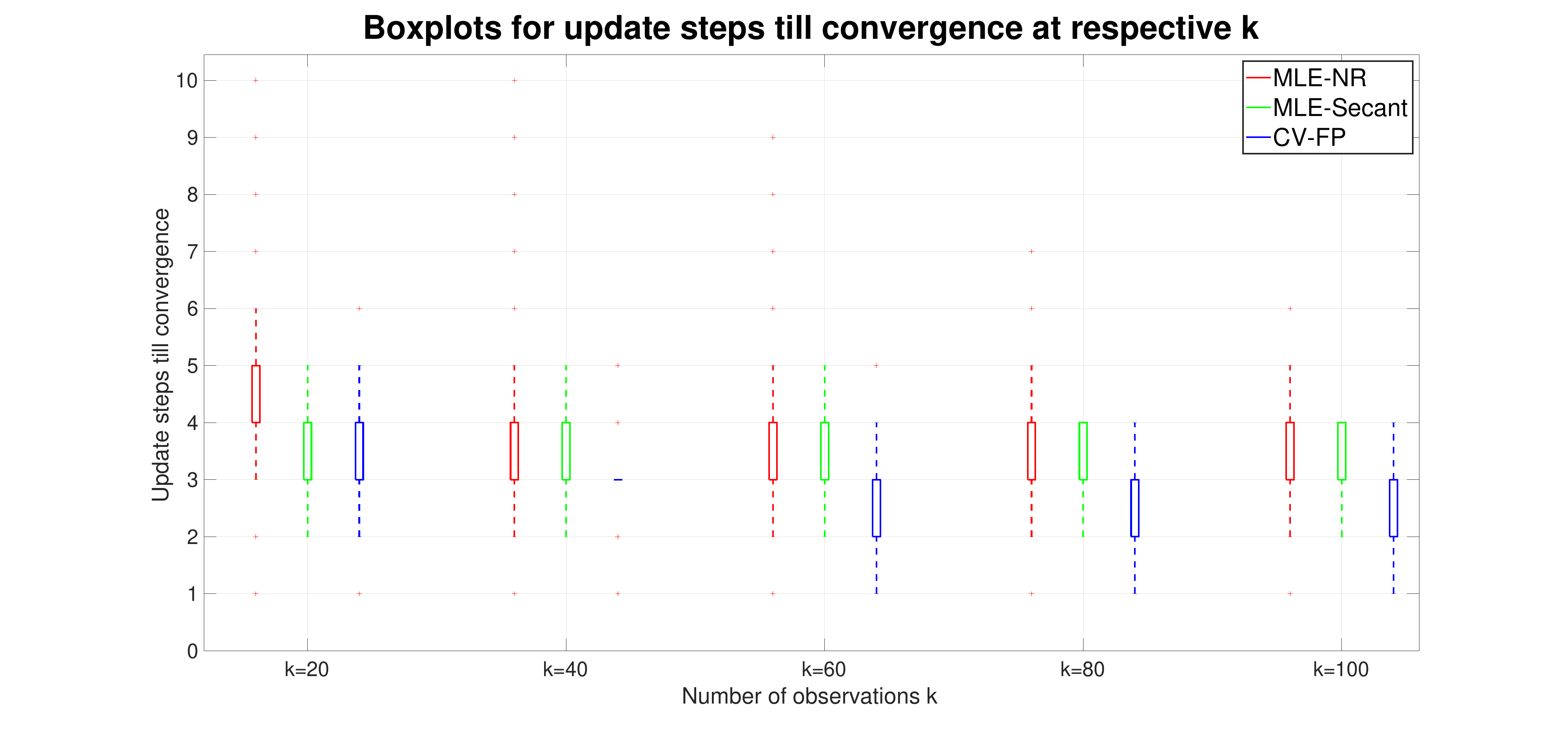}
\caption{Boxplots of update steps for {\tt MLE-NR}, {\tt MLE-Secant}, and {\tt CV-FP} at $k = \{20,40,60,80,100\}$ for feature hashing. A lower and narrower boxplot is better. \label{boxplot_update_steps_FH} }
\end{center}
\end{figure}
From Figure~\ref{boxplot_update_steps_FH}, it is clear that {\tt CV-FP} takes fewer update steps to converge, and this holds generally throughout the different vector squared norms and varying angles $\theta$. These experimental results indicate that {\tt CV-FP} empirically achieves faster convergence than other baseline methods.

We also give a table of time taken for convergence to show that fewer number of update steps also implies faster time in Table~\ref{table_time_taken}.

\begin{table}[h]
\begin{center}
\begin{tabular}{|c | c c| c c c|} \hline
\ & $\frac{\text{CV-FP}}{\text{Newton Raphson}}$ & $\frac{\text{CV-FP}}{\text{Secant method}}$ \\ \hline
10 & 0.905 $\pm$ 0.354 ~~~& 0.968 $\pm$ 0.414 \\
20 & 0.882 $\pm$ 0.340 ~~~& 0.841 $\pm$ 0.295 \\
30 & 0.910 $\pm$ 0.362 ~~~& 0.980 $\pm$ 0.364 \\
40 & 0.918 $\pm$ 0.423 ~~~& 0.899 $\pm$ 0.416 \\
50 & 0.877 $\pm$ 0.452 ~~~& 0.821 $\pm$ 0.432 \\
60 & 0.961 $\pm$ 0.420 ~~~& 0.875 $\pm$ 0.392 \\
70 & 0.945 $\pm$ 0.437 ~~~& 0.867 $\pm$ 0.416 \\
80 & 0.956 $\pm$ 0.474 ~~~& 0.826 $\pm$ 0.392 \\
90 & 0.890 $\pm$ 0.419 ~~~& 0.807 $\pm$ 0.340 \\
100 & 0.892 $\pm$ 0.494 ~~~& 0.813 $\pm$ 0.448 \\ \hline
\end{tabular}
\end{center}
\caption{Ratios of time taken with 2 standard deviations over 10000 iterations with $\|\vec{x}_1\|^2 = \|\vec{x}_2\|^2, \theta = \frac{\pi}{12}$, for $k \in \{10, \hdots, 100\}$. A ratio less than 1 means that the method on the numerator is faster than the method on the denominator.\label{table_time_taken}}
\end{table}

\subsection{RANDOM PROJECTION: FASTER CONVERGENCE AND BETTER STABILITY FOR THE MLE} \label{sec:rp_exp_appendix}

Suppose $\MatrixForm{X}_{n \times p}$ is a data matrix, and $\MatrixForm{R}_{p \times k}$ is matrix with entries $r_{ij} \sim \mathcal N(0,1)$. Consider the matrix $\MatrixForm{V}_{n \times k} = \MatrixForm{X}\MatrixForm{R}$. For any two rows $\vec{v}_i, \vec{v}_j$ in the matrix, the distribution of the tuple $(v_{is}, v_{js}), 1 \leq s \leq k$ is bivariate normal, with
\begin{align*}
\left(\begin{array}{c}
v_{is}\\
v_{js}
\end{array}\right) \sim \mathcal N\left(\left(\begin{array}{c}
0 \\
0
\end{array}\right) \left(\begin{array}{c c}
\|\vec{x}_i\|^2 & \langle \vec{x}_i,\vec{x}_j\rangle \\
\langle \vec{x}_i,\vec{x}_j\rangle & \|\vec{x}_j\|^2
\end{array}\right)\right).
\end{align*}

\cite{li2006improving} used the fact that if the original lengths of the vectors $\vec{x}_1,\hdots,\vec{x}_n$ were known (e.g. normalized to the length of 1), then an MLE could be constructed to estimate $\langle \vec{x}_i,\vec{x}_j\rangle$. The MLE estimate of the inner product is the real root of the following cubic equation~\cite{li2006improving}:
\begin{align}
    \lambda^3 - \left( \frac{\sum_{s=1}^k v_{is} v_{js}}{k}\right) \lambda^2 + \left( \|\vec{x}_{i}\|^2 \left(\frac{\sum_{s=1}^k v_{js}^2}{k} \right) +\|\vec{x}_{j}\|^2 \left(\frac{\sum_{s=1}^k v_{is}^2}{k} \right)  - \|\vec{x}_i\|^2 \|\vec{x}_{j}\|^2\right) \lambda \notag \\
    \phantom{=} - \left(\frac{\sum_{s=1}^k v_{is} v_{js}}{k}\right) \| \vec{x}_i \|^2 \|\vec{x}_j\|^2 = 0,  \label{eq:eqn_mle_rp_append}
\end{align}
and we note that Equation~\eqref{eq:eqn_mle_rp_append} is identical to Equation~\eqref{eq:eqn_mle_fh_appendix}. Hence we follow the same steps to get the update step similar to feature hashing.

We adopt a similar experimental setup used for the feature hashing experiments in Section~\ref{sec:fh_exp_appendix}.

We run simulations on vector pairs $\vec{x}_1,\vec{x}_2 \in \mathbb R^{100,000}$ with varying squared norms and angles between them over 10,000 iterations.  We look at the ratios $r \in \{0.1,0.5,1,2,10\}$ where $\|\vec{x}_1\|^2 = r \|\vec{x}_2\|^2$, and angles $\theta \in \{\frac{\pi}{12}, \frac{\pi}{4}, \frac{\pi}{2}, \frac{3\pi}{4}, \frac{11\pi}{12} \}$. For each iteration, we compute $V=XR$ and estimate the inner product as $\langle\vec{x}_i, \vec{x}_j \rangle \approx \frac{\sum{s=1}^k v_{is}v_{js}}{k}$ for $1\leq k \leq 100$, referring to this estimate as the baseline estimate. We compute the estimates of the inner product using: a) MLE \citep{li2006improving}  (Equation~\eqref{eq:eqn_mle_rp_append}) via Newton Raphson ({\tt MLE-NR}) b) MLE (Equation~\eqref{eq:eqn_mle_rp_append}) via the Secant method ({\tt MLE-Secant}), c) CV where the initial estimate $\frac{\sum_{s=1}^k v_{is}v_{js}}{k}$ is used as part of $\hat{c}_i$ ({\tt CV-Init}), d) CV where the empirical covariances and variances are found for $\hat{c}_i$ ({\tt CV-Emp}), and e) our CVE (Algorithm~\ref{FP_algo_format}) ({\tt CV-FP}).  
We record the inner product estimate provided by the above-mentioned methods and the number of updates each algorithm takes to converge the estimate in each iteration. Additionally, we note that cubic equations possess three real roots within the 10,000 iterations. We compare the performance of these methods using the following metrics:

(i) \textbf{Mean Square Error (MSE) w.r.t. $k$ over $10,000$ iterations across various combinations of $r$ and $\theta$:} We compute the MSE of the estimates generated by baseline methods across various values of sketch size ($k$) over $10,000$ iterations for different combinations of $r$ and $\theta$. The MSE results are summarised in Figure~\ref{5by5plot_RP}.
\begin{figure}[h]
\begin{center}
       \includegraphics[width=16cm]{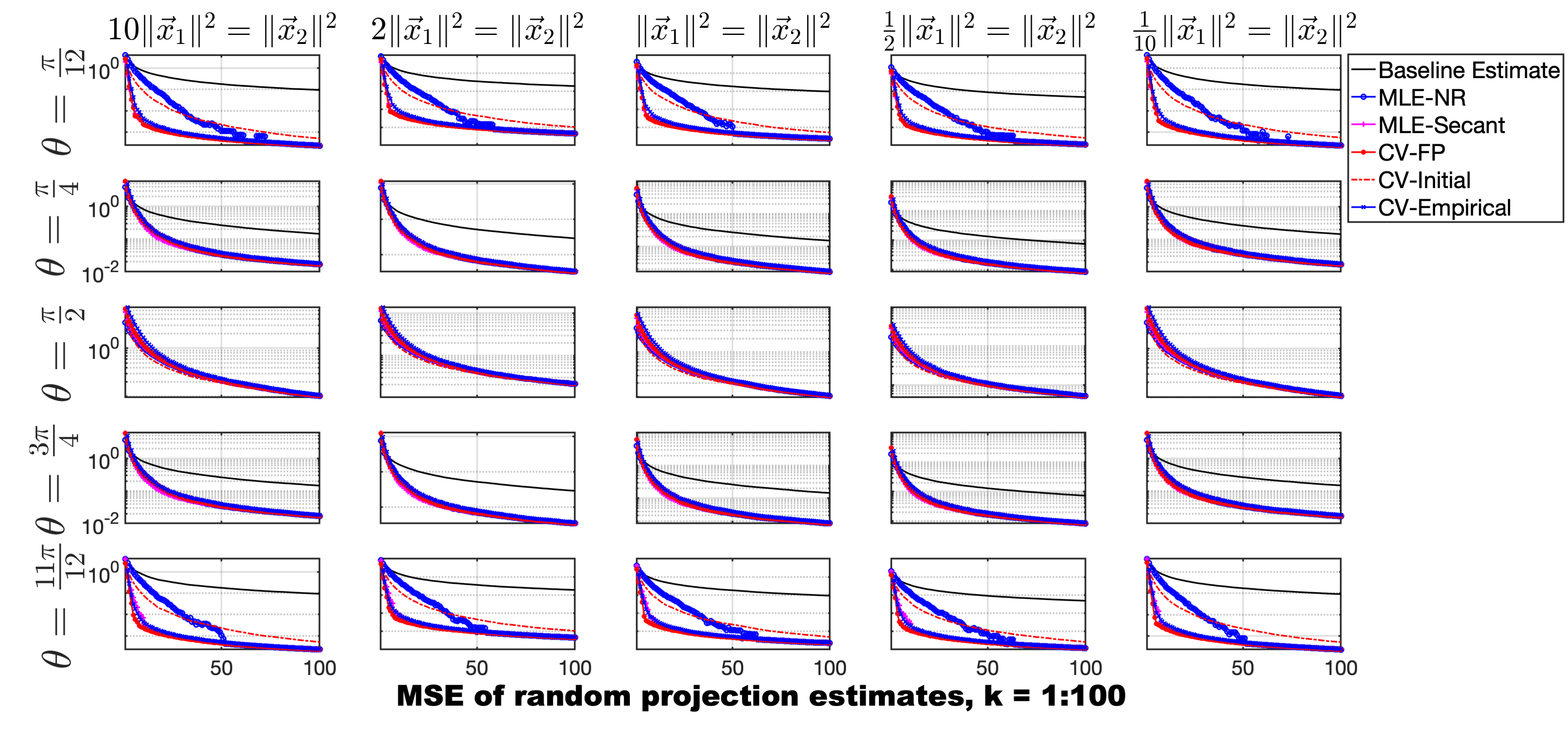}
    \caption{Plot of mean squared error (MSE) of respective estimates of inner product between $\vec{x}_1, \vec{x}_2$ as the squared vector norm and angle $\theta$ between the vectors vary for random projections. A lower value is better. \label{5by5plot_RP}}
\end{center}
\end{figure}

From Figure~\ref{5by5plot_RP}, we can make two observations. First, for a fixed angle $\theta$ between the two vectors, the squared norm of the vectors does not affect the convergence of the baseline methods, since each horizontal row shows roughly the same trend. Second, there is varying performance between the inner product estimates via different root-finding algorithms when vectors are close to each other ($\theta = \frac{\pi}{12})$ or far away from each other ($\theta = \frac{11}{12}$).

The subsequent plots in Figure~\ref{single_MSE_RP} depict the MSE of the baseline methods for $\| \vec{x}_1 \|^2 = \| \vec{x}_2 \|^2$ and $\theta = \frac{\pi}{12}$. It is evident from Figure~\ref{single_MSE_RP} that \texttt{MLE-NR} exhibits the poorest performance (apart from the baseline random projection estimate) until the number of observations $k$ surpasses a certain threshold (approximately $50$), after which it demonstrates similar performance to \texttt{CV-FP} and \texttt{MLE-Secant}. At lower values of $k$, \texttt{CV-Emp} has a slightly higher MSE compared to \texttt{MLE-Secant} and \texttt{CV-FP}, while at higher values of $k$ their MSE converges. The \texttt{MLE-Secant} and \texttt{CV-FP} exhibit nearly identical performance and outperform other algorithms across the entire range of sketch sizes ($k$).

\begin{figure}[h]
\begin{center}
\includegraphics[width=15cm]{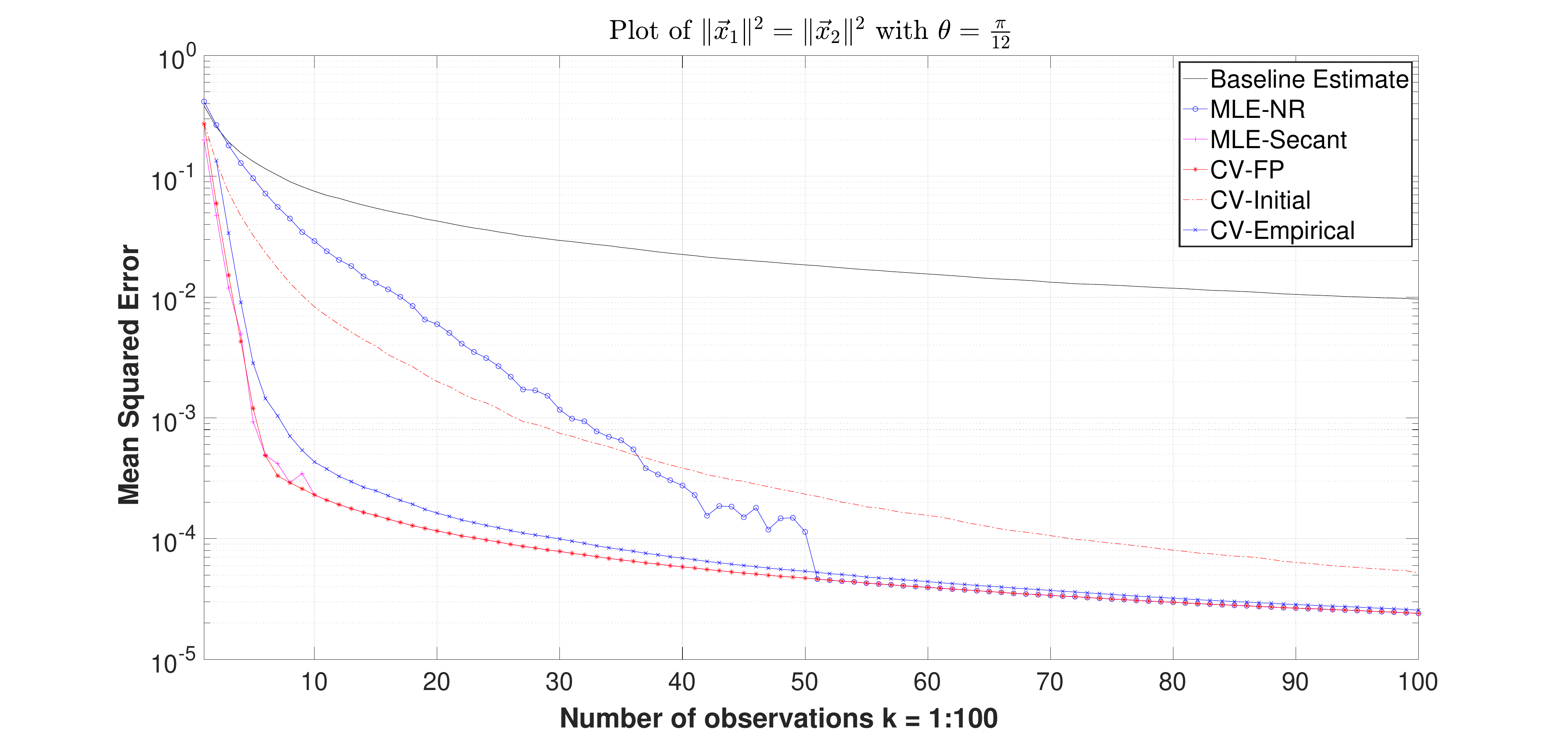}
    \caption{MSE plot when $\|\vec{x}_1\|^2 =\|\vec{x}_2\|^2$ with $\theta = \frac{\pi}{12}$ for random projections. A lower value is better. \label{single_MSE_RP}}
\end{center}
\end{figure}

(ii) \textbf{Proportion of outliers in $10,000$ estimates:} For each baseline method, we construct boxplots using the $10,000$ estimates obtained over the 10,000 iterations. The boxplots for $k=10$ and $k=20$ are depicted in Figure~\ref{side_by_side_bp_RP}. Furthermore, we provide a summary of the corresponding proportion of outliers in estimates and cubic equations with three real roots over 10,000 iterations in Table~\ref{outlier_RP_table}.
\begin{figure}[h]
\begin{center}
\includegraphics[width=15cm]{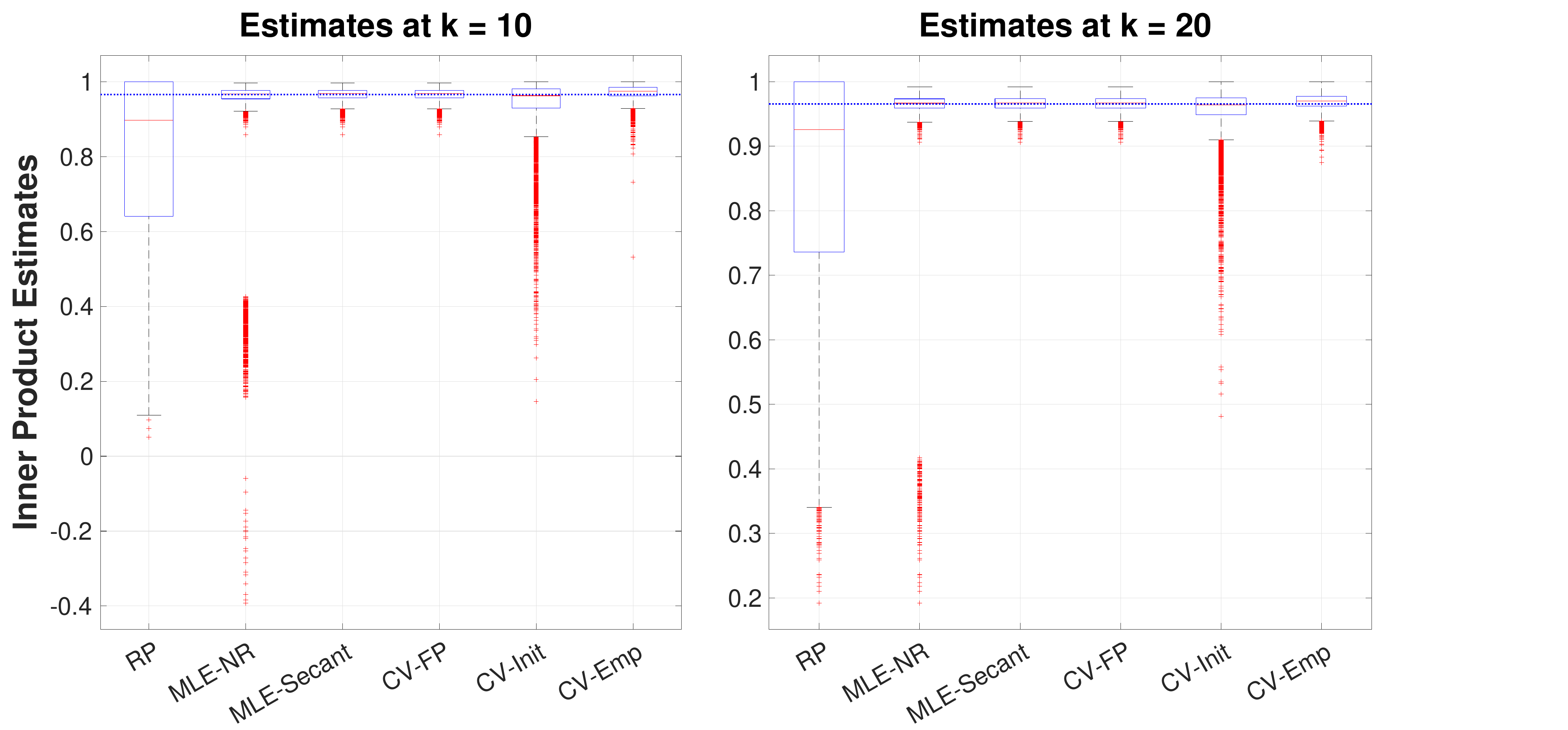}
    \caption{Boxplots of estimated inner products via the baseline random projection estimate ({\tt RP}), Newton Raphson ({\tt MLE-NR}), Secant method ({\tt MLE-Secant}), our algorithm ({\tt CV-FP}), CV using initial estimate ({\tt CV-Init}), and CV using empirical values of variances and covariances ({\tt CV-Emp}). The blue horizontal line denotes the true inner product between the vectors $\vec{x}_1,\vec{x}_2$. A narrower boxplot is better. \label{side_by_side_bp_RP}}
\end{center}
\end{figure}
\begin{table}[h]
\begin{center}
\begin{tabular}{|lcc|}
\hline
\multicolumn{3}{|c|}{\textbf{Proportion of Outliers / Cubic with three real roots}}                        \\ \hline
\multicolumn{1}{|l|}{\textbf{Method}} & \multicolumn{1}{c|}{k = 10} & k = 20 \\ \hline
\multicolumn{1}{|l|}{MLE-NR}          & \multicolumn{1}{c|}{0.1079} & 0.0861 \\ \hline
\multicolumn{1}{|l|}{MLE-Secant}      & \multicolumn{1}{c|}{0.0599} & 0.0737 \\ \hline
\multicolumn{1}{|l|}{CV-FP}           & \multicolumn{1}{c|}{0.0599} & 0.0737 \\ \hline
\multicolumn{1}{|l|}{CV-Init}         & \multicolumn{1}{c|}{0.1080} & 0.1936 \\ \hline
\multicolumn{1}{|l|}{CV-Emp}          & \multicolumn{1}{c|}{0.1057} & 0.0925 \\ \hline
\multicolumn{1}{|l|}{{\it Cubic with three real roots}}          & \multicolumn{1}{c|}{0.0021} & 0 \\ \hline
\end{tabular}
\end{center}
\caption{Proportion of outliers in the first five lines of the table. Proportion of cubics with three real roots for the five methods at $k= 10$ and $k = 20$. \label{outlier_RP_table}}
\end{table}

From Figure~\ref{side_by_side_bp_RP}, it is evident that the interquartile range of the estimates obtained using all five baseline methods is much smaller compared to the baseline random projection estimate, and the median is close to the true inner product between the two vectors $\langle \vec{x}_1,\vec{x}_2\rangle$. From Table~\ref{outlier_RP_table} we can see that the proportion of outliers for {\tt MLE-NR} and {\tt CV-Emp} decrease from $k= 10$ to $k = 20$, however {\tt MLE-NR} has outliers farther away from the true value as compared to {\tt CV-Emp} and {\tt CV-Init} (Figure~\ref{side_by_side_bp_RP}). Moreover, from Table~\ref{outlier_RP_table},  we can see that at $k = 10$, there are only 21 out of 10,000 cubics with three real roots, and at $k = 20$, there are 0 cubics with three real roots, implying that the poor performance of NR at small $k$ cannot solely be attributed to the presence of cubics with multiple real roots. Finally, it is apparent from both Figure~\ref{side_by_side_bp_RP} and Table~\ref{outlier_RP_table} that both \texttt{CV-FP} and \texttt{MLE-Secant} have a smaller interquartile range and a lower proportion of outliers compared to the other baseline methods, thus demonstrating their superiority over the other methods.

\clearpage

(iii) \textbf{Number of updates required by each method until convergence to the estimate at respective sketch size ($k$):} Lastly, we generate the boxplot corresponding to the number of updates each method took to converge to the inner product estimate in each of the $10,000$ iterations across various values of the sketch size ($k$). These results are summarized in Figure~\ref{boxplot_update_steps_RP}.
\begin{figure}[h]
\begin{center}
 \includegraphics[width=15cm]{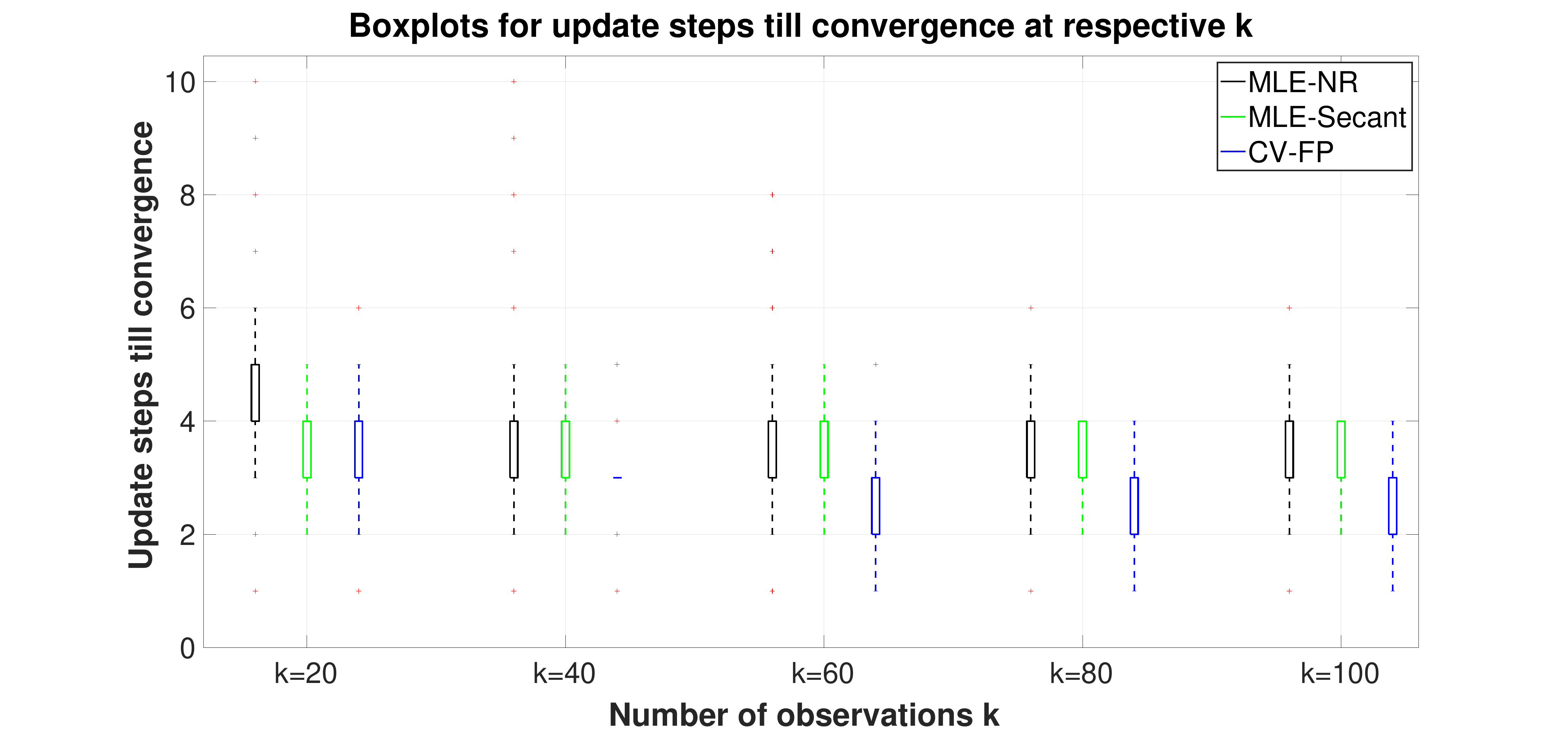}
\caption{Boxplots of update steps for {\tt MLE-NR}, {\tt MLE-Secant}, and {\tt CV-FP} at $k = \{20,40,60,80,100\}$ for random projections. A lower and narrower boxplot is better. \label{boxplot_update_steps_RP} }
\end{center}
\end{figure}

From Figure~\ref{boxplot_update_steps_RP}, it is evident that {\tt CV-FP} requires fewer updates to converge to the estimate compared to {\tt MLE-NR} and {\tt MLE-Secant}. This trend holds generally throughout the different vector squared norms and varying angles $\theta$, suggesting that empirically {\tt CV-FP} achieves faster convergence to the estimate compared to the other methods.

\subsection{IMPLICATION OF OUR EXPERIMENTS}
We now summarize the implications of our experiments.

MLEs and CVEs are used to improve estimates when information about the parameters are known. However, most papers involving MLEs and CVEs focus on theoretical work (proving variance bounds) and experiments (comparing against benchmarked algorithms), but do not explicitly mention how the respective estimates from the CVE and MLE should be found, as the assumption is that a user can implement them on their own.

However, our plots show {\bf there is a great difference} in the methods used, in terms of speed of convergence (Figure~\ref{boxplot_update_steps_FH}, Figure~\ref{boxplot_update_steps_RP}), and in terms of error (Figure~\ref{5by5plot_FH}, Figure~\ref{5by5plot_RP}), if a suboptimal implementation of {\tt Newton Raphson} is used for the MLE, or computing $\hat{c}$ via {\tt CV-Init} (or {\tt CV-Emp}) via the CVE. This leads to possibly contradictory results based on the type of experiments conducted. 

This leads to possibly contradictory results based on the type of experiments conducted.

For example, identifying vectors with high similarity is one goal of similarity search, where the angle $\theta$ between the two vectors are small. This is where a difference in methods used can lead to different performance (e.g. {\tt Newton Raphson} does not have good performance with small $k$ in Figure~\ref{boxplot_update_steps_FH}, Figure~\ref{boxplot_update_steps_RP}), but that does not mean an MLE is bad.  Equivalently, outliers may be removed during experiments to show better results (our boxplots in Figure~\ref{boxplot_update_steps_FH},Figure~\ref{boxplot_update_steps_RP} show that {\tt MLE-NR} has similar (good) performance with {\tt MLE-Secant} and {\tt CV-FP} ; but in a practical scenario the outliers are not known in advance.

Moreover, if sole error bars are used (based on computing the standard deviations of the estimates), this can lead to thinking that an MLE has ``higher error", even though the outliers increase the standard deviations (Figure~\ref{side_by_side_bp_FH}, Figure~\ref{side_by_side_bp_RP}), and can be misleading. From Figure~\ref{side_by_side_bp_FH}, Figure~\ref{side_by_side_bp_RP}, there are more outliers in one direction for Newton Raphson. Using error bars implicitly assumes there is ``equal variation above and below the mean", and may mask what is happening in practice.

Hence we suggest that Algorithm~\ref{FP_algo_format} ({\tt CV-FP}) be used, which we expect will mitigate any reproducibility issues.

\subsection{REMARKS ON CONVERGENCE RATE OF FIXED POINT ITERATION}
\label{remarks_convergence_rates_rp}
Despite our experiments showing that {\tt CV-FP} takes fewer update steps to converge compared to {\tt MLE-NR} and {\tt MLE-Secant}, {\tt CV-FP} does not have quadratic convergence, and might not even have superlinear convergence. 

\begin{definition}
Suppose there is a sequence of real numbers $\{x_i\}_{i=1}^n$ such that $\lim_{n \rightarrow \infty} = x$. Further suppose
\begin{align}
\lim_{n\rightarrow \infty} \frac{|x_{n+1}-x|}{|x_n - x|^\alpha} = C < \infty. \label{def_convergence}
\end{align}
If $\alpha = 1$, the sequence converges linearly to $x$. If $1 < \alpha < 2$, the sequence converges superlinearly to $x$. If $\alpha = 2$, the sequence converges quadratically to $x$.
\end{definition}

The Newton Raphson algorithm has quadratic convergence, where $\alpha = 2$, and the Secant method has superlinear convergence, with $\alpha = \phi = 1.618...$ \citep{atkinson2008introduction}. To (empirically) check the convergence of an algorithm, where $x_n$ are the updated values, and $x$ the true value, we plot $\log(|x_{n+1}-x|)$ versus $\log(x_n - x)$ and record the gradient of the best fit line for our 10,000 iterations at $k = 100$ across all our vector pairs and angles (25,000 observations) in the bivariate Normal case for both feature hashing and random projections.

Equation~\eqref{def_convergence} gives the motivation behind this, since
\begin{align*}
& |x_{n+1}-x| = C|x_n - x|^\alpha  \\
 \Rightarrow & \log(|x_{n+1}-x|) = \log(C) + \alpha \log(|x_n-x|).
\end{align*}

\begin{figure}[h]
\begin{center}
 \includegraphics[width=15cm]{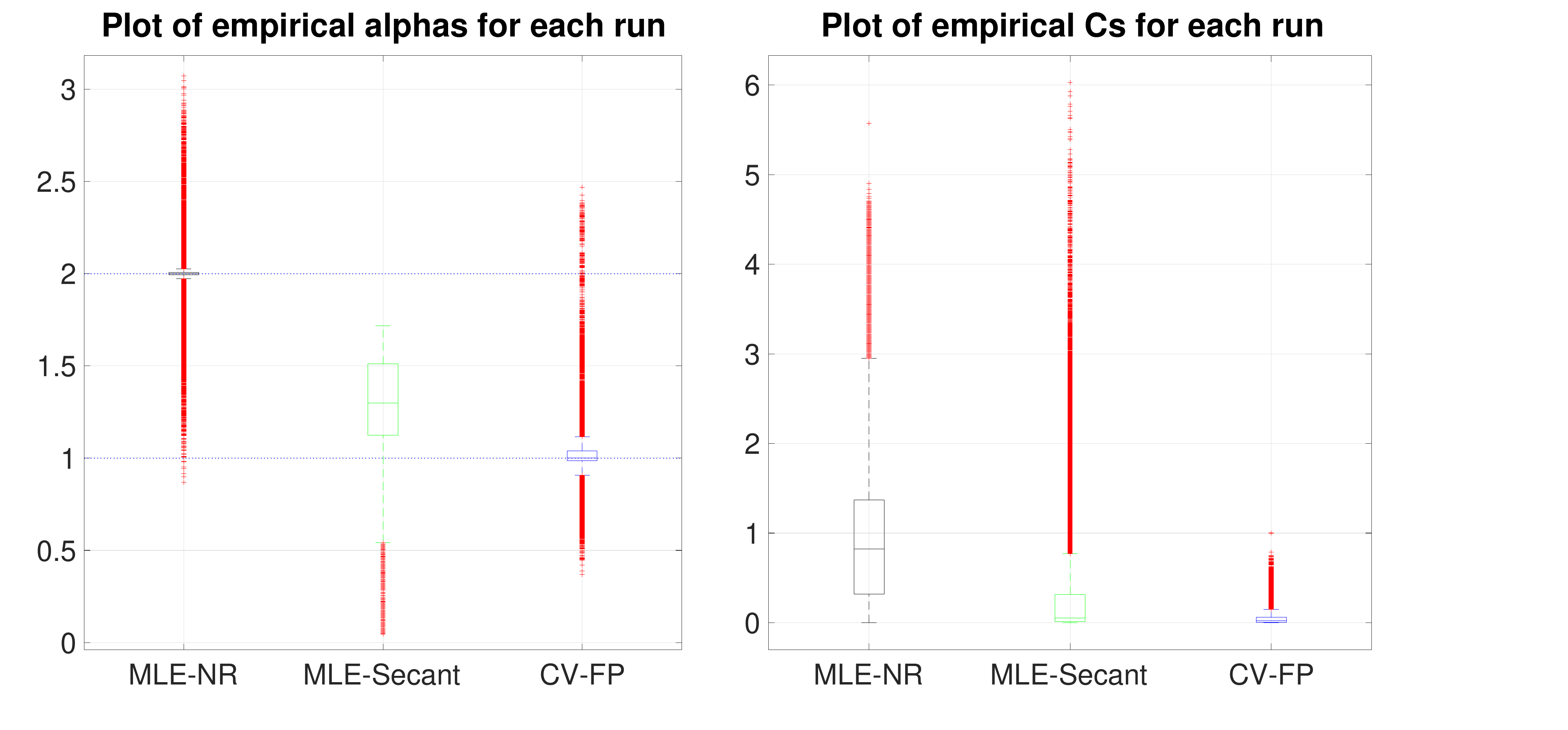}
\caption{Boxplots of empirical $\alpha$ and $C$ at $k = 100$ for {\tt MLE-NR}, {\tt MLE-Secant}, and {\tt CV-FP}  across all vector pairs at 25,000 observations for feature hashing. \label{boxplot_convergence_FH} }
\end{center}
\end{figure}

\begin{figure}[h]
\begin{center}
 \includegraphics[width=15cm]{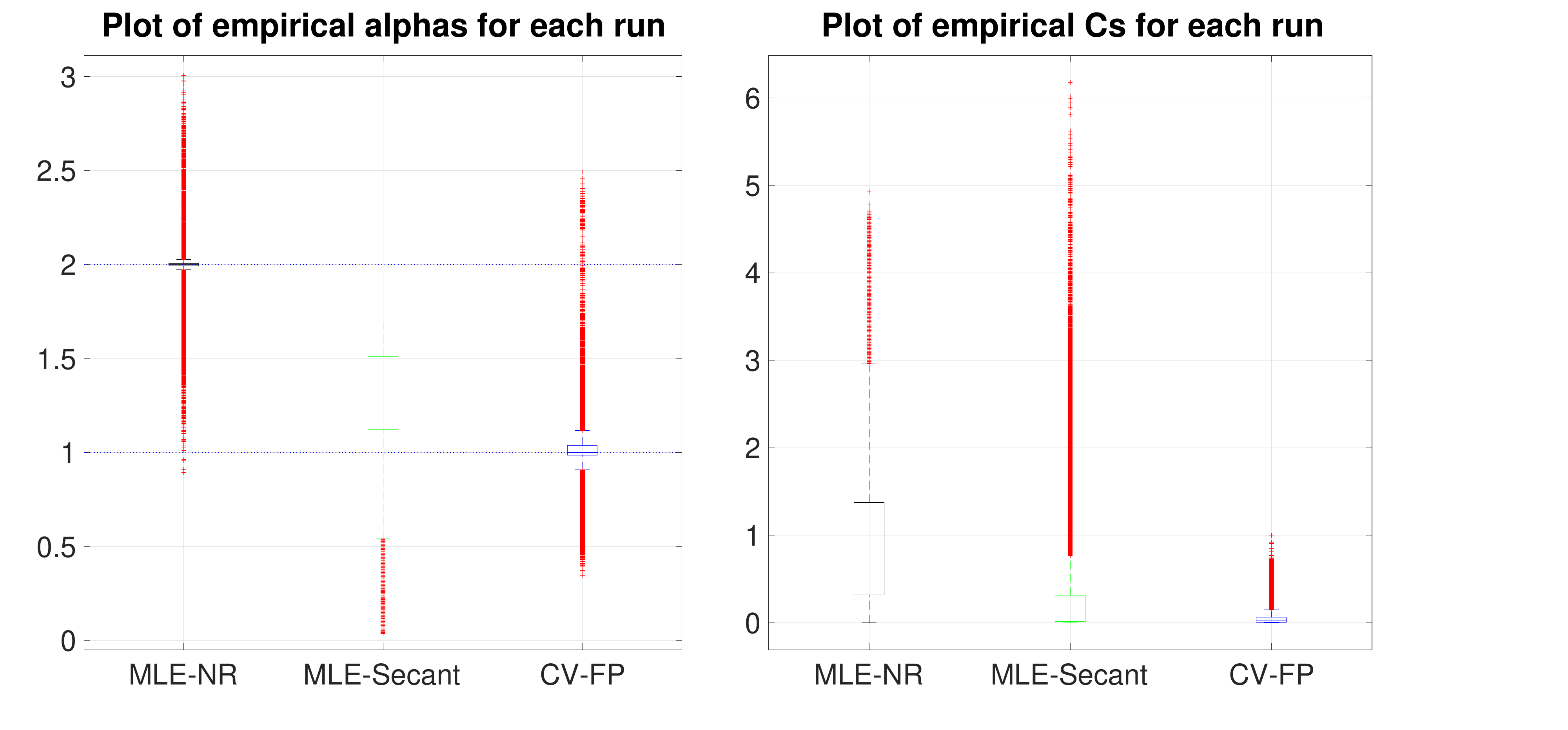}
\caption{Boxplots of empirical $\alpha$ and $C$ at $k = 100$ for {\tt MLE-NR}, {\tt MLE-Secant}, and {\tt CV-FP}  across all vector pairs at 25,000 observations for random projections. \label{boxplot_convergence_RP} }
\end{center}
\end{figure}

The boxplots in Figures \ref{boxplot_convergence_FH} and \ref{boxplot_convergence_RP} show the empirical $\alpha$ and $C$ for each run. The interquartile range of $\alpha$ for Newton Raphson is close to 2, but is less than $\phi$ for the Secant method. For {\tt CV-FP}, the interquartile range is slightly above 1, and hence the convergence of {\tt CV-FP} could either be linear or super-linear.

However, the empirical $C$ for {\tt CV-FP} is the lowest out of {\tt MLE-NR} and {\tt MLE-Secant}, which explains why there is faster convergence for {\tt CV-FP} in the number of update steps required. 

We leave this open for future work: to prove (or disprove) that the CVE has superlinear convergence, and that $C$ is lower than both Newton Raphson and the Secant method {\bf across all exponential families} satisfying the conditions in Theorem~\ref{equal_variances_theorem_thingy}.

\newpage
\section{FURTHER DISCUSSION ON HUTCHINSON'S ESTIMATOR}\label{secA5}

\label{hutch_expt_appendix}

This section of the appendix elaborates on our heuristic of how {\tt CV-FP} can be used to find different estimators, if the  conditions in Theorem~\ref{equal_variances_theorem_thingy} or Corollary~\ref{LC_corollary}, by either repeatedly iterating through the fixed point equations, or by finding a closed form solution analytically.

We demonstrate with Hutchinson's Estimator \citep{hutchinson1989stochastic} where we get an estimator, and analyze it directly.

Let $\vec{r}_i \in \mathbb R^d$ be distributed $\mathcal N(\vec{0}_d,\MatrixForm{I}_d)$. Let $y_i = \vec{r}_i^T\MatrixForm{M}\vec{r}_i \in \mathbb R$ for $1 \leq i \leq k$. For a symmetric $\MatrixForm{M}_{d\times d}$ matrix, the estimator $ Y = \sum_{i=1}^k \frac{y_i}{k} = \sum_{i=1}^k \frac{\vec{r}_i^T\MatrixForm{M}\vec{r}_i}{k}$ estimates the trace of $\MatrixForm{M}$, with $\Var{Y} = \frac{2\trace{\MatrixForm{M}^2}}{k} = \frac{2\|\MatrixForm{M}\|_F^2}{k}$, where $\|\MatrixForm{M}\|_F^2$ is the Frobenius norm, defined by $\|\MatrixForm{M}\|_F^2 = \sum_{i,j}m_{ij}^2$.

\cite{adams2018estimating} gives the following CVE for $\trace{\MatrixForm{M}}$ when there is a known diagonal matrix $\MatrixForm{B}$,

\begin{align}
Z = \sum_{i=1}^k \frac{\vec{r}_i^T\MatrixForm{M}\vec{r}_i}{k} + c \left( \sum_{i=1}^k \frac{\vec{r}_i^T\MatrixForm{B}\vec{r}_i}{k} - \trace{\MatrixForm{B}}\right) \label{easy_CV_estimator_hutch}
\end{align}

Lemma 4.1 in \cite{adams2018estimating} gives the optimal $\hat{c}$ to be
\begin{align*}
\hat{c} & = -\frac{\Cov{\vec{r}^T\MatrixForm{M}\vec{r}}{\vec{r}^T\MatrixForm{B}\vec{r}}}{\Var{\vec{r}^T\MatrixForm{B}\vec{r}}} = -\frac{\trace{\MatrixForm{M}\MatrixForm{B}}}{\trace{\MatrixForm{B}^2}},
\end{align*}
with a variance reduction of $\frac{2\trace{\MatrixForm{M}\MatrixForm{B}}^2}{k\trace{\MatrixForm{B}^2}}$. The empirical covariance $\Cov{\vec{r}^T\MatrixForm{M}\vec{r}}{\vec{r}^T\MatrixForm{B}\vec{r}}$ and variance $\Var{\vec{r}^T\MatrixForm{B}\vec{r}}$ is computed from the observed data, since $\trace{\MatrixForm{M}\MatrixForm{B}}$ is not known. 

{\it Given the work in \cite{adams2018estimating}, a natural question would be to ask if a MLE could be found, which might yield a lower variance reduction. However, $Y$ is distributed as a linear combination of $\chi^2$ variables, and finding an MLE is non-trivial.}

{\bf Heuristic 1: Find $y_i$ in the CVE formulation which correspond to linearly independent sufficient statistics}

Under the heuristic of $y_i$ needing to be sufficient statistics (of parameters $\eta_i$), and since $\trace{\MatrixForm{M}} = \sum_{i=1}^d m_{ii}$, we look for sufficient statistics of each $m_{ii}$. We decompose $\vec{r}^T\MatrixForm{M}\vec{r} = \sum_{j=1}^d r_j \left(\MatrixForm{M}\vec{r}\right)_j$, where $(\MatrixForm{M}\vec{r})_j$ denotes the $j^{\text{th}}$ component of the vector $\MatrixForm{M}\vec{r}$, to get a CVE of the form
\begin{align*}
Z & = \sum_{j=1}^d \frac{ \left( \sum_{i=1}^k  r_{ij} \left(\MatrixForm{M}\vec{r}_i\right)_j \right)}{k} + \sum_{j=1}^d c_i\left( \frac{ \sum_{i=1}^k r_{ij} \left(\MatrixForm{B}\vec{r}_i\right)_j}{k} - b_{ii}\right)  \\
 & = \tilde{y} + \sum_{j=1}^d c_i\left( \frac{ \sum_{i=1}^k r_{ij} \left(\MatrixForm{B}\vec{r}_i\right)_j}{k} - b_{ii}\right), 
\end{align*}
where each $r_{is}$ is the $s^{\text{th}}$ entry of $\vec{r}_i$. 

\begin{theorem}
\label{diag_CVE_thm}
For the CVE given by 
\begin{align*}
Z & = \sum_{j=1}^d \left(  \frac{\sum_{i=1}^k r_{ij} \left(\MatrixForm{M}\vec{r}_i\right)_j}{k} \right) + \sum_{j=1}^d c_i\left(  \frac{\sum_{i=1}^k r_{ij} \left(\MatrixForm{B}\vec{r}_i\right)_j}{k} - b_{ii}\right),
\end{align*}
the optimal coefficients $\hat{c}_i$ are given by $\hat{c}_i = -\frac{m_{ii}}{b_{ii}}$, with variance reduction given by $\frac{2}{k}\sum_{s=1}^d m_{ss}^2 \geq \frac{2\trace{\MatrixForm{M}\MatrixForm{B}}^2}{k\trace{\MatrixForm{B}^2}}$ when $\MatrixForm{B}$ is a diagonal matrix, where $b_{ii} \neq 0$.  
\end{theorem}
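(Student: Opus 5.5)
The plan is a direct second-moment computation, specialised by the diagonality of $\MatrixForm{B}$, followed by the optimal-CV formula of Section~\ref{section_CV} and a Cauchy--Schwarz comparison.

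\textbf{Step 1: write the estimator coordinatewise.} Since $\MatrixForm{B}$ is diagonal, $(\MatrixForm{B}\vec{r}_i)_j = b_{jj} r_{ij}$. The $j^{\text{th}}$ control variate is therefore
\[
u_j = \frac{1}{k}\sum_{i=1}^k b_{jj} r_{ij}^2, \qquad \Expt{u_j} = b_{jj}.
\]
(I read the subscript $c_i$ in the statement as $c_j$, so each $j$ carries its own coefficient.) The main term is $X = \frac{1}{k}\sum_{i=1}^k\sum_{j,l} m_{jl} r_{ij} r_{il}$.

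\textbf{Step 2: compute $\MatrixForm{W}$ and $\vec{d}$.} The samples $\vec{r}_i$ are i.i.d., so every covariance is $\frac{1}{k}$ times the single-sample covariance. For standard Gaussians I use the moment facts
\[
\Cov{r_j^2}{r_s^2} = 2\delta_{js}, \qquad \Cov{r_j r_l}{r_s^2} = \Expt{r_j r_l r_s^2} - \delta_{jl}.
\]
The second is zero unless $j=l=s$, in which case it equals $3-1=2$; when $j=l\neq s$ it is $1-1=0$, and odd moments vanish. This gives
\[
\MatrixForm{W} = \mathrm{diag}\!\left(\frac{2b_{ss}^2}{k}\right), \qquad d_s = \Cov{X}{u_s} = \frac{2 m_{ss} b_{ss}}{k}.
\]
Symmetry of $\MatrixForm{M}$ is not needed here, because only the diagonal entries $m_{ss}$ survive.

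\textbf{Step 3: apply the optimal-CV formula.} Since $b_{ss}\neq 0$, $\MatrixForm{W}$ is invertible. Solving $\MatrixForm{W}\vec{c} = -\vec{d}$ componentwise gives $\hat{c}_s = -m_{ss}/b_{ss}$. Equation~\eqref{optimal_cv_var_reduct} then gives the variance reduction
\[
\vec{d}^T\MatrixForm{W}^{-1}\vec{d} = \sum_{s=1}^d \frac{(2m_{ss}b_{ss}/k)^2}{2b_{ss}^2/k} = \frac{2}{k}\sum_{s=1}^d m_{ss}^2 .
\]

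\textbf{Step 4: compare with the single-CV estimator.} Because $\MatrixForm{B}$ is diagonal, $\trace{\MatrixForm{M}\MatrixForm{B}} = \sum_s m_{ss} b_{ss}$ and $\trace{\MatrixForm{B}^2} = \sum_s b_{ss}^2$. Cauchy--Schwarz gives
\[
\Big(\sum_s m_{ss}b_{ss}\Big)^2 \le \Big(\sum_s m_{ss}^2\Big)\Big(\sum_s b_{ss}^2\Big).
\]
Dividing by $k\,\trace{\MatrixForm{B}^2}/2$ yields the claimed inequality. As a consistency check, the scalar CV of \cite{adams2018estimating} is the restricted choice $c_s \equiv c$, so the unrestricted optimum must reduce variance at least as much.

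The only step needing real care is the fourth-moment bookkeeping in Step~2. There one must check that off-diagonal products $r_j r_l$ with $j\neq l$ are uncorrelated with every $r_s^2$. This is what makes $\MatrixForm{W}$ diagonal and reduces the problem to $d$ decoupled scalar CV problems.
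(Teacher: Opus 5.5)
Your proposal is correct and follows essentially the same route as the paper: compute the diagonal covariance matrix $2b_{ss}^2/k$ of the control variates and the cross-covariances $2m_{ss}b_{ss}/k$, solve the decoupled system to get $\hat{c}_s=-m_{ss}/b_{ss}$ with reduction $\frac{2}{k}\sum_s m_{ss}^2$, and finish with Cauchy--Schwarz. Your final step is actually stronger than the paper's. Weighting Cauchy--Schwarz by $b_{ss}$ (or using your nesting argument with $c_s\equiv c$) gives the inequality for every diagonal $\MatrixForm{B}$. The paper instead takes $b_s=1$ and so only establishes $\sum_s m_{ss}^2\ge \trace{\MatrixForm{M}}^2/d$, which is just the case $\MatrixForm{B}=\MatrixForm{I}$.
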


\begin{proof}
The optimal coefficients $\hat{c}_i$ are given by
\begin{align}
\left(\begin{array}{c c c}
\Var{r_1 (\MatrixForm{B}\vec{r})_1}  & \hdots & \Cov{r_1(\MatrixForm{B}\vec{r})_1}{r_d(\MatrixForm{B}\vec{r})_d} \\
\vdots  & \ddots & \vdots \\
\Cov{r_d(\MatrixForm{B}\vec{r})_d}{r_1(\MatrixForm{B}\vec{r})_1} &  \hdots & \Var{r_d(\MatrixForm{B}\vec{r})_d}
\end{array}\right) \left(\begin{array}{c}
c_1 \\
\vdots \\
c_d
\end{array}\right) = -\left(\begin{array}{c}
\Cov{\tilde{y}}{r_1(\MatrixForm{B}\vec{r})_1} \\
\vdots \\
\Cov{\tilde{y}}{r_d(\MatrixForm{B}\vec{r})_1}
\end{array}\right), \label{optimal_CV_diag}
\end{align}

noting that $\frac{1}{k^2}$ cancels out on both sides. Computing the entries of the variance-covariance matrix in Equation~\eqref{optimal_CV_diag} gives
\begin{align*}
\Var{r_s(\MatrixForm{B}\vec{r})_s} & = \|\vec{b}_s\|^2 + b_{ss}^2 \\
\Cov{r_s(\MatrixForm{B}\vec{r})_s}{r_t(\MatrixForm{B}\vec{r})_t} & = b_{st}b_{ts}, 
\end{align*}
where $\vec{b}_s, \vec{b}_t$ denotes the $s^{\text{th}}, t^{\text{th}}$ row of $\MatrixForm{B}$. By linearity of covariances,
\begin{align}
\Cov{\tilde{y}}{r_s(\MatrixForm{B}\vec{r})_s} & = \sum_{i=1}^d \Cov{r_i(\MatrixForm{M}\vec{r})_i}{r_s(\MatrixForm{B}\vec{r})_s}\label{linearity_cov_diag_Hutch}.
\end{align}
Equation~\eqref{linearity_cov_diag_Hutch} leads to
\begin{align*}
\Cov{r_s(\MatrixForm{M}\vec{r})_s}{r_s(\MatrixForm{B}\vec{r})_s} & =  \left(\sum_{t=1}^d m_{st}b_{st}\right) + m_{ss}b_{ss}   \\
\Cov{r_s(\MatrixForm{M}\vec{r})_s}{r_t(\MatrixForm{B}\vec{r})_t} & = m_{st}b_{ts}, 
\end{align*}
and hence
\begin{align*}
\Cov{\tilde{y}}{r_s(\MatrixForm{B}\vec{r})_s} & = 2m_{ss}b_{ss}.
\end{align*}
As $\MatrixForm{B}$ is a diagonal matrix, the optimal coefficients $\hat{c}_i$ are now solved via
\begin{align*}
\left(\begin{array}{c c c}
2b_{11}^2  & \hdots & 0 \\
\vdots  & \ddots & \vdots \\
0 &  \hdots & 2b_{dd}^2
\end{array}\right) \left(\begin{array}{c}
c_1 \\
\vdots \\
c_d
\end{array}\right) = -\left(\begin{array}{c}
2m_{11}b_{11} \\
\vdots \\
2m_{dd}b_{dd}
\end{array}\right)
\end{align*}
leading to $\hat{c}_i = -  \frac{m_{ii}}{b_{ii}}$ and variance reduction given by
\begin{align*}
\ & \frac{1}{2} \left(\begin{array}{c}
2m_{11}b_{11} \\
\vdots \\
2m_{dd}b_{dd} \\
\end{array}\right)^T \left(\begin{array}{c c c}
\frac{1}{b_{11}^2} & \hdots & 0 \\
\vdots & \ddots & \vdots \\
0 & \hdots & \frac{1}{b_{dd}^2}
\end{array}\right)\left(\begin{array}{c}
2m_{11}b_{11} \\
\vdots \\
2m_{dd}b_{dd} \\
\end{array}\right) \\
 & = \frac{2}{k} \sum_{s=1}^d m_{ss}^2. 
\end{align*}
We now need to prove that the variance reduction is greater than the variance reduction given by Adams' CVE, and need to show that $2 \sum_{s=1}^d m_{ss}^2 - 2\frac{\trace{\MatrixForm{M}}^2}{d} \geq 0$ and do so by the Cauchy-Schwartz Inequality. The Cauchy–Schwarz Inequality states that for any $m_s, b_s \in \mathbb R$, 
\begin{align}
\left( \sum_{s=1}^d m_s^2 \right)\left( \sum_{s=1}^d b_s^2 \right) \geq \left(\sum_{s=1}^d m_sb_s\right)^2. \label{CS_inequality}
\end{align}

Setting $m_s = m_{ss}$, and $b_s = 1, 1 \leq s \leq d$ in Equation~\eqref{CS_inequality} gives the identity
\begin{align}
& \left( \sum_{s=1}^d m_{ss}^2 \right)\left( \sum_{s=1}^d 1^2 \right) \geq \left(\sum_{s=1}^d m_{ss}\right)^2    \notag \\
\Rightarrow & \sum_{s=1}^d m_{ss}^2 \geq \frac{1}{d} \left(\sum_{s=1}^d m_{ss} \right)^2   \notag \\
\Rightarrow & \sum_{s=1}^d m_{ss}^2 \geq \frac{1}{d} \left(\trace{M}\right)^2  \notag
\end{align}
with equality when $m_{ss}$ are equal for all $1 \leq s \leq d$.
\end{proof}

From the proof of Theorem~\ref{diag_CVE_thm}, the variance reduction from the CVE is independent of $b$, which implies $B=I$ should be used.

{\bf Heuristic 2: Apply Algorithm~\ref{FP_algo_format} given the CVE}
\label{heur2_appendix}

While $\hat{c}_i$ is in terms of $m_{ii}$, we {\it consider the heuristic of taking the limit of the fixed point iteration, similar to Algorithm~\ref{FP_algo_format}}. Suppose we have $d$ control variates of the form
\begin{align*}
\frac{\sum_{i=1}^k r_{is} \left(\MatrixForm{M}\vec{r}_i\right)_s}{k}  +  \sum_{s=1}^d c_i\left( \frac{r_{is} \left(\MatrixForm{B}\vec{r}_i\right)_s}{k} - b_{ss}\right) 
\end{align*}
for each $s$. The optimal $\hat{c}_i$ are given by
\begin{align*}
\left(\begin{array}{c c c}
2b_{11}^2  & \hdots & 0 \\
\vdots  & \ddots & \vdots \\
0 &  \hdots & 2b_{dd}^2
\end{array}\right) \left(\begin{array}{c}
c_1 \\
\vdots \\
c_d
\end{array}\right) = -\left(\begin{array}{c}
\Cov{r_s(\MatrixForm{M}\vec{r})_s}{r_1(\MatrixForm{B}\vec{r})_1} \\
\vdots \\
\Cov{r_s(\MatrixForm{M}\vec{r})_s}{r_s(\MatrixForm{B}\vec{r})_s} \\
\vdots \\ 
\Cov{r_s(\MatrixForm{M}\vec{r})_s}{r_d(\MatrixForm{B}\vec{r})_d} \\
\end{array}\right) = -\left(\begin{array}{c}
0 \\
\vdots \\
2m_{ss}b_{ss} \\
\vdots \\ 
0 \\
\end{array}\right),
\end{align*}
which implies the control variate coefficients $\hat{c}_i = 0, i \neq s$, giving the following estimate for $m_{ss}$
\begin{align}
m_{ss}  = \Expt{\sum_{i=1}^k \frac{r_{is}(\MatrixForm{M}\vec{r}_i)_s}{k} - \frac{m_{ss}}{b_{ss}} \left( \sum_{i=1}^k\frac{r_{is}(\MatrixForm{B}\vec{r}_i)_s}{k} - b_{ss}\right)} .\label{d_control_var_hutch_diag}
\end{align}
Dropping the expectation in Equation~\eqref{d_control_var_hutch_diag}, we get $d$ fixed point iterations
\begin{align*}
f_{n+1} & = \frac{\sum_{i=1}^k r_{is}(\MatrixForm{M}\vec{r}_i)_s}{k} - \frac{f_n}{b_{ss}} \left( \frac{\sum_{i=1}^k r_{is}(\MatrixForm{B}\vec{r}_i)_s}{k} - b_{ss}\right),~~1 \leq s \leq d.
\end{align*}
Assuming these fixed point iterations converge, we make $f_n$ the subject to get
\begin{align*}
\widehat{m}_{ss} = \frac{b_{ss}\frac{\sum_{i=1}^k r_{is}(\MatrixForm{M}\vec{r}_i)_s}{k} }{\frac{\sum_{i=1}^k r_{is}(\MatrixForm{B}\vec{r}_i)_s}{k} } = \frac{b_{ss}\sum_{i=1}^k r_{is}(\MatrixForm{M}\vec{r}_i)_s }{\sum_{i=1}^k r_{is}(\MatrixForm{B}\vec{r}_i)_s },~~~1 \leq s \leq d.
\end{align*}
and an estimate for $\trace{\MatrixForm{M}}$ with $k$ observations is
\begin{align}
\widehat{\trace{\MatrixForm{M}}}  = \sum_{s=1}^d  \frac{ b_{ss}\sum_{i=1}^k  r_{is}(\MatrixForm{M}\vec{r}_i)_s}{ \sum_{i=1}^k r_{is}(\MatrixForm{B}\vec{r}_i)_s }. \label{cv_diag_est_final_form}
\end{align}
which matches \citep{bekas2007estimator} when entries in $\vec{r}$ come from the Normal distribution. Moreover, the variance of the estimate of $\trace{\MatrixForm{M}}$ is
\begin{align}
\Var{\widehat{\trace{\MatrixForm{M}}}} = \frac{2\|\MatrixForm{M}\|_F^2 - 2\sum_{s=1}^d m_{ss}^2}{k}, \label{DIAG_EST_VAR}
\end{align}
with the intuition that error depends on the sum of the squares of the non-diagonal terms of $\MatrixForm{M}$. 

While we do not know if the conditions for Theorem~\ref{equal_variances_theorem_thingy} have been satisfied, taking the limit of the fixed point iterations gives us the estimator of \cite{bekas2007estimator}. Furthermore, this allows us to give some analysis on the approximate expectation and variance for this estimator, as \cite{bekas2007estimator} adopted a different approach.

Given that Equation~\eqref{cv_diag_est_final_form} is exact, we do not need to check for convergence. However, we do need to verify that Equation~\eqref{cv_diag_est_final_form} and Equation~\eqref{DIAG_EST_VAR} are reasonable. 

\label{heur2_appendix2}
\begin{proposition}
\label{convergence_theorem_trace_cv_diag}
The expression $\sum_{s=1}^d  \frac{ \sum_{i=1}^k b_{ss} r_{is}(\MatrixForm{M}\vec{r}_i)_s}{ \sum_{i=1}^k r_{is}(\MatrixForm{B}\vec{r}_i)_s }$ is an estimator (possibly biased) of $\trace{\MatrixForm{M}}$
with expectation (up to second order Taylor approximations) and variance (up to first-order Taylor approximations) given by
\begin{align*}
\Expt{ \sum_{s=1}^d  \frac{ \sum_{i=1}^k b_{ss} r_{is}(\MatrixForm{M}\vec{r}_i)_s}{ \sum_{i=1}^k r_{is}(\MatrixForm{B}\vec{r}_i)_s } } & = \trace{\MatrixForm{M}} \\
\Var{  \sum_{s=1}^d  \frac{ \sum_{i=1}^k b_{ss} r_{is}(\MatrixForm{M}\vec{r}_i)_s}{ \sum_{i=1}^k r_{is}(\MatrixForm{B}\vec{r}_i)_s } } & = \frac{2\|\MatrixForm{M}\|_F^2}{k} - \frac{2\sum_{s=1}^d m_{ss}^2}{k},
\end{align*}
matching the expectation and variance in Theorem~\ref{diag_CVE_thm}.
\end{proposition}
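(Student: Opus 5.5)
The plan is to treat each summand separately as a ratio estimator and apply the delta method, then sum over $s$. For fixed $s$, write
\begin{align*}
U_s = \frac{1}{k}\sum_{i=1}^k r_{is}(\MatrixForm{M}\vec{r}_i)_s, \qquad W_s = \frac{1}{k}\sum_{i=1}^k r_{is}(\MatrixForm{B}\vec{r}_i)_s = \frac{b_{ss}}{k}\sum_{i=1}^k r_{is}^2 ,
\end{align*}
using that $\MatrixForm{B}$ is diagonal. The $s^{\text{th}}$ summand is then $b_{ss}U_s/W_s$. The first step computes the low-order moments from Gaussian fourth-moment identities, exactly as in the proof of Theorem~\ref{diag_CVE_thm}:
\begin{align*}
\Expt{U_s}=m_{ss},\qquad \Expt{W_s}=b_{ss},\qquad \Var{W_s}=\frac{2b_{ss}^2}{k},\qquad \Cov{U_s}{W_s}=\frac{2m_{ss}b_{ss}}{k}.
\end{align*}
For the variance we also need $\Var{U_s}=\frac{1}{k}\left(\sum_t m_{st}^2+m_{ss}^2\right)$, and for $s\neq t$ we need $\Cov{U_s}{U_t}=\frac{m_{st}^2}{k}$, $\Cov{U_s}{W_t}=0$ and $\Cov{W_s}{W_t}=0$.

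For the expectation, I will expand $f(u,w)=b_{ss}u/w$ to second order around $(m_{ss},b_{ss})$. The first-order terms have zero mean. The second-order terms are
\begin{align*}
\frac{b_{ss}m_{ss}}{b_{ss}^3}\Var{W_s}-\frac{b_{ss}}{b_{ss}^2}\Cov{U_s}{W_s} = \frac{2m_{ss}}{k}-\frac{2m_{ss}}{k}=0 .
\end{align*}
So each summand has approximate mean $m_{ss}$, and summing over $s$ gives $\trace{\MatrixForm{M}}$. The key point is the cancellation $\Cov{U_s}{W_s}/\Expt{W_s}=\Var{W_s}\,\Expt{U_s}/\Expt{W_s}^2$. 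This holds precisely because the optimal weight is $m_{ss}/b_{ss}$, so I will present it as a consequence of that.

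For the variance, the first-order linearisation of the $s^{\text{th}}$ summand is $U_s-\frac{m_{ss}}{b_{ss}}(W_s-b_{ss})$. This is exactly the optimal CVE from Heuristic 2, with $\hat c_s=-m_{ss}/b_{ss}$. Summing over $s$ gives the linearised full estimator $\tilde y+\sum_s \hat c_s(W_s-b_{ss})$, which is the CVE $Z$ of Theorem~\ref{diag_CVE_thm} at its optimal coefficients. Its variance is therefore $\Var{\tilde y}$ minus the reduction computed there, namely $\frac{2\|\MatrixForm{M}\|_F^2}{k}-\frac{2}{k}\sum_s m_{ss}^2$. This route invokes the earlier theorem directly rather than redoing the cross-covariance bookkeeping.

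The main obstacle is that the delta method requires the ratio to be well behaved. Here $W_s/b_{ss}$ is a scaled $\chi^2_k/k$ variable, so its inverse moments exist only for $k$ large enough. I will state the result explicitly as a formal Taylor approximation, valid as $k\to\infty$, rather than as an exact identity. I will also check the cross-covariances $\Cov{U_s}{W_t}$ for $s\ne t$ carefully. The term $r_s(\MatrixForm{M}\vec r)_s$ correlates with $r_t^2$ only through $m_{st}r_sr_t$, whose covariance with $r_t^2$ is zero. This confirms that the coefficients of the other control variates vanish, matching the computation in Heuristic 2.
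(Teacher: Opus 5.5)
Your proposal is correct. For the expectation it follows the paper's argument: a second-order expansion of the ratio about $(m_{ss},b_{ss})$, in which the $\Var{W_s}$ and $\Cov{U_s}{W_s}$ contributions cancel.

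For the variance your route is genuinely different. The paper works term by term. It computes $\Var{g(X_1^{(s)},X_2^{(s)})}$ for each $s$ and the cross-covariances $\Cov{g(X_1^{(s)},X_2^{(s)})}{g(X_1^{(t)},X_2^{(t)})}=m_{st}^2/(kb_{ss}b_{tt})$, then sums them. You instead observe that the delta-method linearisation of the whole estimator is exactly the optimal CVE $Z$ of Theorem~\ref{diag_CVE_thm}, with $\hat c_s=-m_{ss}/b_{ss}$. Its variance is then $\Var{\tilde y}$ minus the reduction already computed there.

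Your route is shorter. It also makes the clause ``matching Theorem~\ref{diag_CVE_thm}'' a structural fact rather than the agreement of two separate calculations. Your reading of the bias cancellation (regression slope $\Cov{U_s}{W_s}/\Var{W_s}$ equals the ratio of means $\Expt{U_s}/\Expt{W_s}$) expresses the same insight. The paper's bookkeeping is self-contained. It also yields the variance of each individual diagonal estimate $b_{ss}U_s/W_s$ and the covariances between them, which is what you want for statements about single diagonal entries.

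You could even avoid citing Theorem~\ref{diag_CVE_thm}. Summing your linearisations gives $\trace{\MatrixForm{M}}+\frac1k\sum_{i=1}^k\sum_{s\neq t}m_{st}r_{is}r_{it}$, which is $\trace{\MatrixForm{M}}$ plus the off-diagonal part of Hutchinson's quadratic form. Its variance is visibly $\frac2k\sum_{s\neq t}m_{st}^2$.

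Your caution about inverse moments can also be sharpened. Since $\MatrixForm{B}$ is diagonal, $b_{ss}U_s/W_s-m_{ss}=\sum_i r_{is}Z_{is}/\sum_i r_{is}^2$, where $Z_{is}=\sum_{t\neq s}m_{st}r_{it}$ is Gaussian and independent of $(r_{is})_i$. Conditioning on $(r_{is})_i$ shows each summand is $m_{ss}$ plus a scaled Student-$t_k$ variable. Hence the estimator is exactly unbiased for $k\geq 2$, and the Taylor approximation is needed only for the variance.
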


\begin{proof}
Let
\begin{align}
b_{ss}\frac{ \frac{\sum_{i=1}^k r_{is}(\MatrixForm{M}\vec{r}_i)_s}{k}}{ \frac{\sum_{i=1}^k r_{is}(\MatrixForm{B}\vec{r}_i)_s}{k}} & = b_{ss}\frac{X^{(s)}_1}{X^{(s)}_2} = b_{ss}g(X^{(s)}_1,X^{(s)}_2). \label{indiv_prop2_comp}
\end{align}

To find an approximation for the expectation and variance of $g(X_1^{(s)},X_2^{(s)})$, and covariance of $g(X_1^{(s)},X_2^{(s)}), g(X_1^{(t)},X_2^{(t)})$, we expand $g(X_1^{(s)},X_2^{(s)})$ around $(\Expt{X_1^{(s)}}, \Expt{X_2^{(s)}}) \equiv (m_{ss}, b_{ss})$ using the Taylor series approximation up to the second order terms. We write $\frac{\partial g^{(s)}}{\partial X_i^{(s)}}$ and $\frac{\partial^2 g^{(s)}}{\partial X_i^{(s)} \partial X_j^{(s)}}$ to mean the respective partial derivatives evaluated at $X_1^{(s)} = \Expt{X_1^{(s)}}, X_2^{(s)} = \Expt{X_2^{(s)}}$. Hence we expand
{\small
\begin{align}
g(X_1^{(s)},X_2^{(s)}) & = g\left(\Expt{X_1^{(s)}},\Expt{X_2^{(s)}}\right) + \frac{\partial g^{(s)}}{\partial X_1^{(s)}}(X_1^{(s)} - \Expt{X_1^{(s)}}) + \frac{\partial g^{(s)}}{\partial X_2^{(s)}}\left(X_2^{(s)} - \Expt{X_2^{(s)}}\right) \notag \\
 & \phantom{+} + \frac{1}{2}\left(\frac{\partial ^2 g^{(s)}}{\partial X_1^{(s)~2}}\left(X_1 - \Expt{X_1^{(s)}}\right)^2  + \frac{\partial ^2 g^{(s)}}{\partial X_2^{(s)~2}}\left(X_2^{(s)} - \Expt{X_2^{(s)}}\right)^2 \right. \notag \\
 & \phantom{+++++} \left. + 2\frac{\partial^2 g^{(s)}}{\partial X_1^{(s)} \partial X_2^{(s)}}\left(X_1^{(s)} - \Expt{X_1^{(s)}}\right)\left(X_2^{(s)} - \Expt{X_2^{(s)}}\right)\right) + R \label{taylor_expansion}
\end{align}
}

where $R$ is a remainder term. The partial derivatives evaluated at $X_1^{(s)} = \Expt{X_1^{(s)}}, X_2^{(s)} = \Expt{X_2^{(s)}}$ are
{\small
\begin{align*}
\frac{\partial g^{(s)}}{\partial X_1^{(s)}} = \frac{1}{\Expt{X_2^{(s)}}} = \frac{1}{b_{ss}}, \hspace*{1cm}\frac{\partial^2 g^{(s)}}{\partial X_1^{(s)~2}} = 0, & \hspace*{1cm} \frac{\partial g^{(s)}}{\partial X_2^{(s)}} = -\frac{\Expt{X_1^{(s)}}}{\Expt{X_2^{(s)}}^2} = - \frac{m_{ss}}{b_{ss}^2}  \\
\frac{\partial^2 g^{(s)}}{\partial X_1^{(s)} X_2^{(s)}} = \frac{\partial^2 g^{(s)}}{\partial X_2^{(s)} X_1^{(s)}} = -\frac{1}{\Expt{X_2^{(s)}}^2} = -\frac{1}{b_{ss}^2}, & \hspace*{1cm} \frac{\partial^2 g^{(s)}}{\partial X_2^{(s)~ 2}} = \frac{2\Expt{X_1^{(s)}}}{\Expt{X_2^{(s)}}^3} = \frac{2m_{ss}}{b_{ss}^3}.
\end{align*}
}

We use Equation~\eqref{taylor_expansion} to compute expectations $\Expt{g(X_1^{(s)}, X_2^{(s)}}$ and get
\begin{align}
\Expt{g(X_1^{(s)}, X_2^{(s)}} & = \frac{m_{ss}}{b_{ss}} + \frac{1}{2}\left( \frac{2m_{ss}}{b_{ss}^3}\Var{X_2} - \frac{2\Cov{X_1}{X_2}}{b_{ss}^2}       \right) +R\notag \\
& = \frac{m_{ss}}{b_{ss}} + \frac{1}{2}\left( \frac{4m_{ss}b_{ss}^2}{kb_{ss}^3} - \frac{4m_{ss}b_{ss}}{k b_{ss}^2}       \right)+R \notag \\
& \approx \frac{m_{ss}}{b_{ss}}. \label{prop2_for_meansz}
\end{align}
Expanding out $\Expt{\sum_{s=1}^d \frac{b_{ss}  \frac{\sum_{i=1}^kr_{is}(M\vec{r}_i)_s}{k} }{ \frac{\sum_{i=1}^k r_{is}(B\vec{r}_i)_s}{k}}}$ via Equation ~\eqref{prop2_for_meansz} gives
\begin{align*}
\Expt{\sum_{s=1}^d \frac{b_{ss}  \frac{\sum_{i=1}^kr_{is}(M\vec{r}_i)_s}{k} }{ \frac{\sum_{i=1}^k r_{is}(B\vec{r}_i)_s}{k}}} = \Expt{\sum_{s=1}^d b_{ss}\Expt{g(X_1^{(s)}, X_2^{(s)}}} \approx \sum_{s=1}^d m_{ss} = \trace{M} 
\end{align*}
as desired. Similarly,
\begin{align}
\Expt{g(X_1^{(s)}, X_2^{(s)})^2} & = \left(\frac{m_{ss}}{b_{ss}}\right)^2 + \frac{\Var{X_1}}{b_{ss}^2} + \frac{m_{ss}^2 \Var{X_2}}{b_{ss}^4} - 2\frac{m_{ss}\Cov{X_1}{X_2}}{b_{ss}^3} +R  \notag \\
 & = \left(\frac{m_{ss}}{b_{ss}}\right)^2 + \frac{\|\vec{a}_s\|^2 + m_{ss}^2}{k b_{ss}^2} + \frac{2m_{ss}^2b_{ss}^2}{k b_{ss}^4} - \frac{4m_{ss}^2b_{ss}}{k b_{ss}^3 } + R \notag \\
  & = \left(\frac{m_{ss}}{b_{ss}}\right)^2 + \frac{\|\vec{a}_s\|^2 + m_{ss}^2}{k b_{ss}^2} - \frac{2m_{ss}^2}{k b_{ss}^2 } + R, \label{prop2_for_var}
\end{align}
and we compute $\Var{g(X_1^{(s)}, X_2^{(s)})}$ by Equations~\eqref{prop2_for_meansz} and \eqref{prop2_for_meansz}, giving
\begin{align}
\Var{g(X_1^{(s)}, X_2^{(s)})} & = \Expt{g(X_1^{(s)}, X_2^{(s)})^2} - \Expt{g(X_1^{(s)}, X_2^{(s)})}^2 + R \notag \\
 & \approx \left(\frac{m_{ss}}{b_{ss}}\right)^2 + \frac{\|\vec{a}_s\|^2 + m_{ss}^2}{k b_{ss}^2} - \frac{2m_{ss}^2}{k b_{ss}^2 } - \left(\frac{m_{ss}}{b_{ss}}\right)^2 \notag \\
  & = \frac{\|\vec{a}_s\|^2 + m_{ss}^2}{k b_{ss}^2} - \frac{2m_{ss}^2}{k b_{ss}^2 }. \label{prop2_var_expr}
\end{align}
We now consider the expectation of the product of $g(X_1^{(s)}, X_2^{(s)})g(X_1^{(t)},X_2^{(t)})$ using first order terms in the Taylor expansion for any $1 \leq s,t \leq d$.
{\small
\begin{align}
\Expt{g(X_1^{(s)}, X_2^{(s)})g(X_1^{(t)},X_2^{(t)})} & = \frac{m_{ss}}{b_{ss}}\frac{m_{tt}}{b_{tt}} + \frac{\partial g^{(s)}}{\partial X_1^{(s)}}\frac{\partial g^{(t)}}{\partial X_1^{(t)}}\Cov{X_1^{(s)}}{X_1^{(t)}} \notag \\
& \phantom{=} + \frac{\partial g^{(s)}}{\partial X_1^{(s)}}\frac{\partial g^{(t)}}{\partial X_2^{(t)}}\Cov{X_1^{(s)}}{X_2^{(t)}} + \frac{\partial g^{(s)}}{\partial X_2^{(s)}}\frac{\partial g^{(t)}}{\partial X_1^{(t)}}\Cov{X_2^{(s)}}{X_1^{(t)}} \notag \\
& \phantom{=} + \frac{\partial g^{(s)}}{\partial X_2^{(s)}}\frac{\partial g^{(t)}}{\partial X_2^{(t)}}\Cov{X_2^{(s)}}{X_2^{(t)}} +R \notag \\
 & = \frac{m_{ss}}{b_{ss}}\frac{m_{tt}}{b_{tt}} + \frac{\Cov{X_1^{(s})}{X_1^{(t)}}}{b_{ss}b_{tt}} - \frac{\Cov{X_1^{(s)}}{X_2^{(t)}}m_{tt}}{b_{ss}b_{tt}^2} \notag \\
 & \phantom{=} - \frac{\Cov{X_2^{(s)}}{X_1^{(t)}}m_{ss}}{b_{ss}^2b_{tt}} + \frac{\Cov{X_2^{(s)}}{X_2^{(t)}}m_{ss}m_{tt}}{b_{ss}^2b_{tt}^2} + R \notag \\
  & = \frac{m_{ss}}{b_{ss}}\frac{m_{tt}}{b_{tt}} + \frac{m_{st}m_{ts}}{kb_{ss}b_{tt}} - \frac{m_{st}b_{ts}m_{tt}}{kb_{ss}b_{tt}^2}  - \frac{m_{ts}b_{st}m_{ss}}{kb_{ss}^2b_{tt}} + \frac{b_{st}b_{ts}m_{ss}m_{tt}}{kb_{ss}^2b_{tt}^2} + R \notag \\
  & \approx \frac{m_{ss}}{b_{ss}}\frac{m_{tt}}{b_{tt}} + \frac{m_{st}^2}{kb_{ss}b_{tt}}, \label{gx1gx2_prop2}
\end{align}
}

where Equation~\eqref{gx1gx2_prop2} follows due to the fact that $\MatrixForm{B}$ was chosen to be diagonal, hence $b_{st} = 0, s\neq t$, and $\MatrixForm{M}$ is symmetric, so $m_{st} = m_{ts}$. This gives $\Cov{g(X_1^{(s)}, X_2^{(s)})}{g(X_1^{(t)}, X_2^{(t)})}$ to be
\begin{align}
\Cov{g(X_1^{(s)}, X_2^{(s)})}{g(X_1^{(t)}, X_2^{(t)})} & = \frac{m_{ss}}{b_{ss}}\frac{m_{tt}}{b_{tt}} + \frac{m_{st}^2}{kb_{ss}b_{tt}} - \frac{m_{ss}}{b_{ss}}\frac{m_{tt}}{b_{tt}} = \frac{m_{st}^2}{kb_{ss}b_{tt}}. \label{prop2_cov_expr}
\end{align}
Combining Equations~\eqref{prop2_var_expr} and \eqref{prop2_cov_expr} finally gives
\begin{align*}
\Var{\sum_{s=1}^d \frac{b_{ss}  \frac{\sum_{i=1}^kr_{is}(\MatrixForm{M}\vec{r}_i)_s}{k} }{ \frac{\sum_{i=1}^k r_{is}(\MatrixForm{B}\vec{r}_i)_s}{k}}} & = \sum_{s=1}^d \Var{b_{ss} g(X_1^{(s)}, X_2^{(s)})} \notag \\
& \phantom{=} + \sum_{s,t}^d \Cov{b_{ss} g(X_1^{(s)}, X_2^{(s)})}{b_{tt}g(X_1^{(t)}, X_2^{(t)})} \\
 & = \sum_{s=1}^d \left(\frac{\|\vec{a}_s\|^2 + m_{ss}^2}{k} - \frac{2m_{ss}^2}{k} \right) + \sum_{s,t}^d \frac{m_{st}^2}{k} +R \\
 & \approx \frac{2\|\MatrixForm{M}\|_F^2}{k} - \frac{2m_{ss}^2}{k}.
\end{align*}
\end{proof}

The implications of Proposition~\ref{convergence_theorem_trace_cv_diag} implies Equation~\eqref{cv_diag_est_final_form} as an estimator for $\trace{\MatrixForm{M}}$. More precisely, this estimator from \citep{bekas2007estimator} naturally appears as the ``limit" of the optimal CVE. Equation~\eqref{DIAG_EST_VAR} also complements the probability bounds in \citep{baston2022stochastic} for each diagonal term of $\MatrixForm{M}$.

To conclude, treating the CVE as a fixed point iteration and finding its closed form can lead to more efficient estimators than by using CVE weights once, and then the estimators analyzed directly. With respect to our field in sketching algorithms, our heuristic can lead to re-examining past work on estimators to give alternate proofs for variance reduction (and using these variances in probability bounds), and future work on understanding new estimators along similar lines, e.g. applying our work to find better estimators for estimating Euclidean distances under composition of functions \citep{leroux2024euclidean}, or for the Jaccard similarity under circulant permutations \citep{pmlr-v162-li22m}. In both these cases, computing expectations and covariances (hence CVE weights) are much easier than analytically deriving the MLE involved.
\end{document}